\documentclass[11pt]{article}

\usepackage[letterpaper,margin=1in]{geometry}

\oddsidemargin=-0.1in \evensidemargin=-0.1in \topmargin=-.5in
\textheight=9in \textwidth=6.5in
\parindent=18pt

\usepackage{amsthm,amsmath,amssymb,amsfonts,amssymb,mathtools}
\usepackage{color}
\usepackage{xcolor}
\usepackage{empheq}
\usepackage{dsfont}
\usepackage{mathrsfs}
\usepackage{graphicx}
\usepackage{enumitem}
\usepackage{subfig}
\usepackage[T1]{fontenc}


\usepackage{tikz}
\usetikzlibrary{patterns}
\usetikzlibrary{arrows,shapes,automata,backgrounds,petri,positioning}
\usetikzlibrary{shadows}
\usetikzlibrary{calc}
\usetikzlibrary{spy}
\usepackage{pgf,pgfplots}
\usetikzlibrary{angles, quotes}
\usepackage{pgfmath,pgffor}

\usepackage{framed}
\usepackage{multirow}
\usepackage{algorithm2e}
\usepackage[noend]{algpseudocode}
\SetAlCapSkip{1em}

\usepackage[colorlinks,citecolor=blue,linkcolor=magenta,bookmarks=true]{hyperref}

\usepackage[nameinlink]{cleveref}
\crefname{ineq}{Inequality}{Inequality}
\creflabelformat{ineq}{#2{\upshape(#1)}#3}
\crefname{sub}{Subsection}{Subsection}
\creflabelformat{Subsection}{#2{\upshape(#1)}#3}
\crefname{sdp}{SDP}{SDP}
\creflabelformat{sdp}{#2{\upshape(#1)}#3}
\crefname{lp}{LP}{LP}
\creflabelformat{lp}{#2{\upshape(#1)}#3}


\newtheorem{theorem}{Theorem}[section]

\newtheorem{lemma}[theorem]{Lemma}
\newtheorem{informal theorem}[theorem]{Theorem (informal statement)}

\newtheorem{proposition}[theorem]{Proposition}

\newtheorem{claim}[theorem]{Claim}
\newtheorem{fact}[theorem]{Fact}

\newtheorem{definition}[theorem]{Definition}
\newcommand{\eqdef}{\stackrel{{\mathrm {\footnotesize def}}}{=}}
\crefname{question}{question}{questions}
\def\colorful{0}

\ifnum\colorful=1
\newcommand{\new}[1]{{\color{red} #1}}

\else
\newcommand{\new}[1]{{#1}}

\fi

\renewcommand\vec[1]{\mathbf{#1}}
\DeclareMathOperator*{\pr}{\mathbf{Pr}}
\DeclareMathOperator*{\E}{\mathbf{E}}
\newcommand{\proj}{\mathrm{proj}}

\def\d{\mathrm{d}}
\newcommand{\normal}{\mathcal{N}}

\DeclareMathOperator*{\argmin}{argmin}

\newcommand{\bx}{\mathbf{x}}
\newcommand{\by}{\mathbf{y}}
\newcommand{\bv}{\mathbf{v}}
\newcommand{\bu}{\mathbf{u}}

\newcommand{\bw}{\mathbf{w}}

\newcommand{\bH}{\mathbf{H}}

\newcommand{\R}{\mathbb{R}}

\newcommand{\Z}{\mathbb{Z}}
\newcommand{\N}{\mathbb{N}}
\newcommand{\eps}{\epsilon}

\newcommand{\poly}{\mathrm{poly}}

\newcommand{\sgn}{\mathrm{sign}}
\newcommand{\sign}{\mathrm{sign}}

\newcommand{\opt}{\mathrm{OPT}}
\newcommand{\D}{\mathcal{D}}

\newcommand{\Ind}{\mathds{1}}
\newcommand{\1}{\Ind}

\newcommand{\tp}{\tilde p}
\newcommand{\gaus}{\mathcal{N}}
\newcommand{\ind}{\mathbf{1}}
\newcommand{\nul}{\mathrm{null}}
\newcommand{\bI}{\mathbf{I}}

\newcommand{\bA}{\mathbf{A}}
\newcommand{\bs}{\mathbf{s}}
\newcommand{\bB}{\mathbf{B}}

\newcommand{\bT}{\mathbf{T}}

\newcommand{\wstar}{\bw^{\ast}}
\newcommand{\x}{\vec x}

\usepackage{enumitem}

\usepackage[utf8]{inputenc} 
\usepackage[T1]{fontenc}    
\usepackage{hyperref}       
\usepackage{url}            
\usepackage{booktabs}       
\usepackage{amsfonts}       
\usepackage{nicefrac}       
\usepackage{microtype}      
\usepackage{xcolor}         

\title{Reliable Learning of Halfspaces under Gaussian Marginals}

\author{
Ilias Diakonikolas\thanks{Supported in part by NSF Medium Award CCF-2107079 and an H.I. Romnes Faculty Fellowship.
}\\
University of Wisconsin-Madison\\
{\tt ilias@cs.wisc.edu}\\
\and
Lisheng Ren\thanks{Supported in part by NSF Medium Award CCF-2107079.}\\
University of Wisconsin-Madison\\
{\tt lren29@wisc.edu}\\
\and
Nikos Zarifis\thanks{Supported in part by NSF Medium Award CCF-2107079.}\\
University of Wisconsin-Madison\\
{\tt zarifis@wisc.edu}\\
}
\begin{document}

\maketitle

\begin{abstract}
We study the problem of PAC learning halfspaces in the 
reliable agnostic model of Kalai et al. (2012).
The reliable PAC model  
captures learning scenarios where one type of error is 
costlier than the others. Our main positive result is a 
new algorithm for reliable learning 
of Gaussian halfspaces on 
$\R^d$ with sample and computational complexity 
$$d^{O(\log (\min\{1/\alpha, 1/\eps\}))}\min (2^{\log(1/\eps)^{O(\log (1/\alpha))}},2^{\poly(1/\eps)})\;,$$ 
where $\eps$ is the excess error and $\alpha$ 
is the bias of the optimal halfspace. We complement our upper bound with 
a Statistical Query lower bound 
suggesting that the $d^{\Omega(\log (1/\alpha))}$ dependence is best possible. 
Conceptually, our results imply a strong computational separation 
between reliable agnostic learning and standard agnostic 
learning of halfspaces in the Gaussian setting.
\end{abstract}

\setcounter{page}{0}
\thispagestyle{empty}
\newpage

\section{Introduction} \label{sec:intro}

Halfspaces (or Linear Threshold Functions) is the class of functions 
$f:\R^d\to\{\pm 1\}$ of the form $f(\bx)=\sign(\langle\bw,\bx\rangle-t)$, where $\bw\in \R^d$ is called the weight vector and $t$ is called the threshold.
The problem of learning halfspaces is one of the classical and most well-studied
problems in machine learning---going back to the Perceptron algorithm \cite{Ros58}---and has had great impact on many other 
influential techniques, including SVMs \cite{Vap98} and AdaBoost \cite{FS97}. 

Here we focus on learning halfspaces from random labeled
examples. The computational complexity of this task crucially depends on the choice
of the underlying model. For example, in the realizable PAC model (i.e., with clean labels), 
the problem is known to be efficiently solvable (see, e.g.,~\cite{MT:94}) 
via a reduction to linear programming. Unfortunately, this method is quite fragile 
and breaks down in the presence of noisy labels. 
In the noisy setting, the computational complexity of the problem depends on the choice of 
noise model and distributional assumptions. In this work, we study the problem of distribution-specific PAC learning of halfspaces,  with respect to Gaussian marginals, in the reliable agnostic model of~\cite{KKM2012}.  
Formally, we have the following definition.

\begin{definition}[(Positive) Reliable Learning of Gaussian Halfspaces]
\label{def:reliable-learning}
Let \new{$\mathcal{H}_d$} be the class of halfspaces on $\R^d$.
Given $0<\eps<1$ and i.i.d.\ samples $(\bx,y)$ from a distribution $D$ 
supported on $\R^d\times\{\pm 1\}$,  
where the marginal $D_{\bx}$ is the standard Gaussian $\gaus({\bf 0},\bI)$,
we say that an algorithm reliably learns \new{$\mathcal{H}_d$} 
to error $\eps$ if the algorithm with high probability 
outputs a hypothesis $h:\R^d\to \{\pm 1\}$ such that 
$\pr_{(\x,y)\sim D}[h(\x)= 1\land y= -1]\leq \eps$ and 
$\pr_{(\x,y)\sim D}[h(\x)= -1\land y=1]\leq \opt+\eps$,
where 
$
\opt\eqdef 
\min_{f\in \mathcal{G}(D)} \pr_{(\x,y)\sim D}[f(\x)= -1\new{\land} y= \new{1}]$, 
with \[ \mathcal{G}(D)=\{f\in \new{\mathcal{H}_d}: 
\pr_{(\x,y)\sim D}[f(\x)= 1\new{\land} y= \new{-1}]=0\}\cup \{f(\bx)=\new{-1}\} \;.\]
We say that $f=\argmin_{f\in \mathcal{G}(D)} \pr_{(\x,y)\sim D}[f(\x)= -1\new{\land} y= \new{1}]$ is the optimal halfspace on distribution $D$
and
for the conditions above that hypothesis $h$ satisfies, we say
that $h$ \new{is} $\eps$-reliable with respect to \new{$\mathcal{H}_d$} 
on distribution $D$.
\end{definition}

In other words, a (positive) reliable learner makes almost 
no false positive errors, while also maintaining the best 
possible false negative error---as compared to any hypothesis 
with no false positive errors.
Note that if there is no hypothesis (in the target class) 
that makes no false positive errors, 
then essentially we have no requirement for the false negative 
error of the returned hypothesis.
The algorithm can then simply return the $-1$ constant 
hypothesis.

The reliable agnostic PAC model was introduced by~\cite{KKM2012} 
as a one-sided analogue
of the classical agnostic model~\cite{Haussler:92, KSS:94}, and 
has since been extensively
studied in a range of works; see, e.g.,~\cite{KKM2012,KT14, GKKT17, DJ19, GKKM20, KK21}. 
The underlying motivation for this definition comes from learning situations 
where mistakes of one type (e.g., false positive) should be avoided at all costs. 
Typical examples include spam detection---where incorrectly detecting spam emails 
is much less costly than mislabeling an important email 
as spam---and detecting network failures---where 
false negatives may be particularly harmful. 
Such scenarios motivate 
the study of reliable learning, which can be viewed as minimizing a loss function 
for different costs for false positive and false negative errors (see, e.g., \cite{Dom99, elk01}). In a historical context, reliable learning is related 
to the Neyman-Pearson criterion~\cite{NP33} in hypothesis testing, which 
shows that the optimal strategy to minimize one type of error 
subject to another type of error being bounded is to threshold 
the likelihood ratio function. More recently, reliable learning 
has been shown~\cite{KK21} to be equivalent to the PQ-learning 
model~\cite{GKKM20}---a recently introduced learning model 
motivated by covariance shift.

The algorithmic task of agnostic reliable learning can be quite challenging. 
While reliable learning can be viewed as minimizing a loss function with different cost for false positive and false negative error, in general, such a loss function will result in a non-convex optimization problem. As pointed out in~\cite{KKM2012}, 
distribution-specific reliable learning can be efficiently reduced to distribution-specific agnostic learning (such reduction preserves the marginal distribution). 
\new{A natural question, serving as a motivation for this work, is whether reliable learning is qualitatively easier computationally---for natural concept classes and halfspaces in particular.}

Before we state our contributions, we briefly summarize 
prior work on agnostically learning Gaussian halfspaces. 
Recall that, in agnostic learning, the goal is to output a hypothesis with error $\opt+\eps$, where $\opt$ 
is the optimal 0-1 error within the target class. 
Prior work has essentially characterized the computational complexity of agnostically learning Gaussian halfspaces. Specifically, it is known that the $L_1$ polynomial regression algorithm of~\cite{KKMS:08}
is an agnostic learner with complexity $d^{O(1/\eps^2)}$~\cite{DGJ+09, DKN10}. Moreover, there is strong evidence that this complexity upper bound is tight, both in the Statistical Query (SQ) model~\cite{DKZ20, GGK20, DKPZ21} and under plausible 
cryptographic assumptions~\cite{Diakonikolas2023, Tiegel2022}. 
\new{It is worth noting that the aforementioned hardness results hold even for the subclass of homogeneous halfspaces.}

\new{Given the aforementioned reduction of reliable learning to agnostic learning~\cite{KKM2012}, one can use $L_1$-regression as a reliable agnostic learner for Gaussian halfspaces, leading to complexity $d^{O(1/\eps^2)}$. Prior to this work, this was the best (and only known) reliable halfspace learner. Given the fundamental nature of this problem, it is natural to ask if one 
can do better for reliable learning.
\begin{center}
{\em Is it possible to develop faster algorithms for reliably learning Gaussian halfspaces \\ 
compared to agnostic learning?}
\end{center}
}
\noindent In this work, we provide an affirmative answer to this question.

To formally state our main result, we need the notion of the bias of a Boolean function.
\begin{definition}[Bias of Boolean Function] \label{def:bias} 
We define the bias $\alpha \in [0, 1/2]$ of 
$h:\R^d\to \{\pm 1\}$ under the Gaussian distribution 
as $\alpha = \alpha(h) \eqdef \min(\pr_{\bx\sim \gaus_d}[h(\bx)=1],\pr_{\bx\sim \gaus_d}[h(\bx)=-1])$.
\end{definition}

The main result of this paper is encapsulated in the following theorem:
\begin{theorem} [Main Result]
\label{thm:algor-and-lb}
    Let $D$ be a joint distribution of $(\bx,y)$ 
    supported on $\R^d\times \{ \pm 1\}$ with marginal $D_{\bx}=\gaus_d$
    and let $\alpha$ be the bias of the optimal halfspace on distribution $D$ (with respect to \Cref{def:reliable-learning}).
    There is an algorithm that 
    uses $N=d^{O(\log (\min\{1/\alpha,1/\eps\}))}\min\left (2^{\log(1/\eps)^{O(\log (1/\alpha))}},2^{\poly(1/\eps)}\right )$ many samples 
    from $D$, runs in
    $\poly(N,d)$ time, and with high probability returns a hypothesis 
    $h(\bx)=\sign(\langle \bw,\bx\rangle-t)$ that is $\eps$-reliable with respect to 
    $\mathcal{H}_d$. Moreover, for $\eps<\alpha/2$, 
    any SQ algorithm for the problem requires complexity  
    $d^{\Omega(\log(1/\alpha))}$.
\end{theorem}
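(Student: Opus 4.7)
The theorem has two components: an algorithmic upper bound and a matching SQ lower bound, both of which rely on the Hermite-analytic structure of biased halfspaces but exploit it in opposite directions. The algorithm exploits the fact that a halfspace of bias $\alpha$ admits a polynomial approximation of degree $O(\log(1/\alpha))$---much lower than the $\Theta(1/\eps^2)$ needed for the unbiased case---while the lower bound shows this degree is essentially tight via a moment-matching hard instance.

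\textbf{Algorithm.} The starting observation is that a halfspace $f^*(\bx)=\sign(\langle\bw^*,\bx\rangle-t^*)$ of bias $\alpha$ must have $|t^*|\asymp\sqrt{2\log(1/\alpha)}$, and its Hermite expansion therefore decays rapidly past degree $\ell = O(\log(\min\{1/\alpha,1/\eps\}))$. Consequently $f^*$ admits an $L_2(\gaus_d)$ polynomial approximation of this degree to error $\eps$. The plan is: (i) run $L_2$-polynomial regression of degree $\ell$ in the Hermite basis to obtain a polynomial $\hat p$ close to $f^*$ in expected squared error, which uses $d^{O(\ell)}$ samples and time; (ii) use $\hat p$ to extract a low-dimensional subspace $V\subset\R^d$ that contains an approximation to $\bw^*$, e.g., via the top singular directions of the Hermite-tensor representation of $\hat p$; (iii) enumerate a sufficiently fine net of candidate direction vectors inside $V$, which accounts for the factor $2^{\log(1/\eps)^{O(\log(1/\alpha))}}$; (iv) for each candidate direction $\hat\bw$, set the threshold $\hat t$ to be the smallest value making the empirical false-positive rate at most $\eps/2$, and return the candidate minimizing the empirical false-negative rate. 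The alternative bound $2^{\poly(1/\eps)}$ in the $\min$ should come from an independent brute-force post-processing scheme that does not rely on the subspace $V$ (e.g., enumeration over halfspaces using a standard agnostic approximator applied to a transformed problem).

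\textbf{SQ lower bound.} For the SQ lower bound, the plan is to invoke the non-Gaussian component analysis framework of~\cite{DKZ20,DKPZ21}: (i) construct a univariate $(z,y)$-distribution $A$ whose $z$-marginal matches $\gaus_1$ in moments up to degree $k=\Theta(\log(1/\alpha))$ and for which a threshold $\tau$ of Gaussian bias $\alpha$ satisfies $\pr_A[z\geq\tau\wedge y=-1]=0$, so that $\sign(z-\tau)$ is a legitimate reliable hypothesis under $A$; (ii) lift $A$ to a $d$-dimensional family $\{D_\bv\}_{\bv\in V}$, where $V$ is a set of $2^{d^{\Omega(1)}}$ nearly orthogonal unit vectors and $D_\bv$ has $A$ as its marginal along $\bv$ with independent standard Gaussian in the orthogonal directions; (iii) use moment matching to bound the pairwise $\chi^2$-correlation between $D_\bv$ and $D_{\bv'}$ by $O(|\langle\bv,\bv'\rangle|^{k+1})$; (iv) apply the standard SQ dimension argument to conclude that any SQ algorithm achieving error $\eps<\alpha/2$ requires $d^{\Omega(k)}=d^{\Omega(\log(1/\alpha))}$ queries or a query with tolerance $d^{-\Omega(k)}$.

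\textbf{Main obstacles.} I expect the most delicate step in the upper bound to be step (iv): an $L_2$ polynomial approximation does not by itself give near-zero false positives, so rounding to a reliable hypothesis requires both a good direction \emph{and} a carefully chosen threshold---which is precisely why a subspace enumeration (rather than a plain thresholding of $\hat p$) is needed, and which drives the extra $2^{\log(1/\eps)^{O(\log(1/\alpha))}}$ factor. For the lower bound, the key technical hurdle is the construction of $A$ in step (i): it must simultaneously match many Gaussian moments \emph{and} place all of its positive-labeled mass above a threshold of Gaussian bias $\alpha$ so as to be a legitimate hard instance for the reliable problem. This is typically handled by a linear-programming duality or Gauss-Hermite quadrature argument, choosing $A$ to be a discrete distribution supported on $O(k)$ carefully placed points in the Gaussian tail.
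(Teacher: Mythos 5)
Your lower-bound plan matches the paper's: a univariate moment-matching distribution with a clean Gaussian tail of mass $\alpha$, lifted along exponentially many nearly-orthogonal directions, followed by the standard pairwise-correlation/SQ-dimension argument and a reduction from the decision problem. The paper realizes step (i) exactly via infinite-dimensional LP duality plus hypercontractivity (producing a function $g:\R\to[-1,1]$ with $g\equiv 1$ on the tail, so the $z$-marginal stays exactly $\gaus_1$); your alternative suggestion of a \emph{discrete} $A$ supported on $O(k)$ points would not preserve the Gaussian $\bx$-marginal after lifting, which is required for a hard instance of learning under Gaussian marginals, so only the LP-duality branch of your plan is viable.

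The upper-bound plan, however, has two genuine gaps. First, its premise is false: a bias-$\alpha$ halfspace is \emph{not} $\eps$-approximated in $L_2(\gaus_d)$ by a polynomial of degree $O(\log(\min\{1/\alpha,1/\eps\}))$. For $\eps \ll \alpha$ the Hermite tail of $\sign(\langle\bw^*,\bx\rangle-t^*)$ still forces degree $\poly(1/\eps)$, exactly as in the agnostic setting; if low-degree $L_2$-regression sufficed, the whole problem would reduce to $d^{O(\log(1/\alpha))}$ agnostic learning and the paper's own SQ separation from agnostic learning would be moot. The structural fact the paper actually uses is much weaker and one-sided: there is a zero-mean, unit-variance polynomial of degree $O({t^*}^2)=O(\log(1/\alpha))$ that \emph{sign-matches} $f^*$ and hence \emph{correlates} with the (uncorrupted) negative labels at level $2^{-O({t^*}^2)}\pr[y=-1]$ (\Cref{lem:polynomial-correlation}); correlation, not approximation, is what degree $O(\log(1/\alpha))$ buys. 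Second, your single-shot ``extract subspace $V$, enumerate a net in $V$'' step would fail: the guarantee one gets from the Chow-tensor SVD (\Cref{lem:chow-subspace}) is only $\|\proj_V(\bw^*)\|_2 = \log(1/\eps)^{-O(\log(1/\alpha))}$, i.e., $V$ contains a vector with \emph{weak} correlation with $\bw^*$, not a vector close to $\bw^*$, so no candidate in your net achieves false-positive rate $\eps$. The paper instead iterates: given the current guess $\bw$, it conditions on the halfspace $\{\langle\bw,\bx\rangle\geq t\}$, projects onto $\bw^\perp$, shows the conditional distribution again satisfies the reliability condition with a comparable threshold (\Cref{lem:reduction_satisfies_reliability}), re-extracts a weakly correlated direction, and takes a small step (\Cref{lem:coorelation-improvement}); the factor $2^{\log(1/\eps)^{O(\log(1/\alpha))}}$ is the number of restarts of this random walk needed so that some run has all $\poly(1/\zeta)$ random choices succeed simultaneously, not the size of a net. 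The $2^{\poly(1/\eps)}$ alternative likewise comes from running the same walk with the coarser correlation guarantee $\eps^{O(1)}$ from the all-labels Chow tensors, not from a separate brute-force post-processing scheme.
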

For more detailed theorem statements of our upper and lower bounds, 
see \Cref{app:alg} for the algorithmic result and \Cref{app:lb} for the SQ hardness result.

\Cref{thm:algor-and-lb} gives a significantly more efficient
algorithm (in terms of both sample complexity and computational complexity) 
for reliably learning Gaussian halfspaces, compared to agnostic learning.
Specifically, as long as $\alpha>0$ is a universal constant, the overall complexity is 
polynomial in $d$ and quasi-polynomial in $1/\eps$---as opposed to $d^{\poly(1/\eps)}$ 
in the agnostic setting. While the complexity of our algorithm 
(namely, the multiplicative factor that is independent of $d$) 
increases  as the optimal halfspace becomes more biased, 
it is always bounded above by the complexity of agnostic learning.

Finally, we note that our algorithm also applies to the fully reliable learning model~\cite{KKM2012}, 
since one can easily reduce the fully reliable learning to positive reliable  
and negative reliable learning (as observed in~\cite{KKM2012}).

\subsection{Overview of Techniques} \label{ssec:techniques}
Instead of directly considering the optimal halfspace for a 
distribution $D$ (as defined in \Cref{def:reliable-learning}), 
we introduce the following definition as a relaxation.
\begin{definition}[Reliability Condition] \label{def:rel-cond}
    We say that a distribution $D$ supported on $X\times \{\pm 1\}$ satisfies the reliability condition with respect to $f:X\mapsto \{\pm 1\}$ if 
    $\pr_{(\bx,y)\sim D}[f(\bx)=+1 \land y=-1]=0\; .$
\end{definition}

Notice that $h$ being $\eps$-reliable on distribution $D$ is equivalent to $h$
having better false negative error compared with any $f$ such that $D$ satisfies 
the reliability condition with respect to $f$ 
(instead of compared with the optimal halfspace).
At the same time, given any fixed $f$, such a definition allows the adversary to 
arbitrarily corrupt negative labels, thus allowing for more manageable analysis.

\medskip

We start with a brief overview of our SQ lower bound construction,  
followed by the high-level idea enabling our nearly matching 
algorithm. 

\paragraph{SQ Lower Bound}
As follows from \Cref{def:rel-cond} and the subsequent 
discussion, the adversary in reliable learning can arbitrarily 
corrupt the negative labels. We leverage this idea to construct an instance 
where the corruption level suffices to match many moments, 
meaning that labels $y$ are uncorrelated with any polynomial of degree 
at most $\log(1/\alpha)$. To prove this, we construct an (infinite) 
feasibility Linear Program (LP) with the following properties: if the LP is 
feasible, there exists an instance for which no low-degree polynomial correlates 
\new{with the labels}; see~\Cref{lem:polynomial}. To show the existence of such an instance, 
we leverage a technique from \cite{DKPZ21} that exploits (an infinite generalization of) 
LP duality. Specifically, we show that it is possible to add noise 
to the labels whose \new{uncorrupted value is negative}
so that all low-degree moments are uncorrelated with the observed labels $y$.  
This implies that no algorithm that relies on low-degree polynomials
can succeed for reliable learning. This allows us to show that 
SQ algorithms fail if they do not use queries 
with tolerance at most $1/d^{\Omega(\log(1/\alpha))}$ 
or exponentially many queries.

\paragraph{Reliable Learning Algorithm}
As discussed in the previous paragraph, 
an adversary can arbitrarily corrupt the negative labels 
which can make various algorithmic approaches fail. 
In more detail, the adversary can corrupt $\Omega(\log(1/\alpha))$ moments; that is, 
any SQ algorithm needs to rely 
on higher-order moment information. 
One of the main difficulties of this setting is that 
there may exist weight vectors $\vec w, \vec w'$ 
with $\|\vec w-\vec w'\|_2\geq \Omega(1)$ 
while at the same time the 0-1 error of $\vec w$ and $\vec w'$ 
is nearly the same. 
This might suggest that no approach can verify 
that the algorithm decreases the angle between 
the current weight vector and the optimal vector. 
To overcome this obstacle, we develop an algorithm that performs
a random walk over a ``low-dimensional'' subspace. To establish the correctness of our algorithm, we prove 
that at some point during its execution, 
the algorithm will find a weight vector 
sufficiently close to the optimal vector.

To show that such a low-dimensional subspace exists, 
we proceed as follows:
We first prove 
that there exists a non-trivial polynomial $p$ 
of degree $O(\log(1/\alpha))$ 
that correlates 
with the negative labels (by the term nontrivial, we are referring 
to the requirement that 
we cannot use the constant polynomial, as this trivially correlates with the labels); 
see \Cref{clm:sgn-matching-polynomial}. 
This implies that the low-degree \new{moment} tensors, 
i.e., $\E_{(\x,y)\sim \D}[\1\{y=-1\}\x^{\otimes k}]$ 
for some $k$ between $1$ and $O(\log(1/\alpha))$, 
correlate non-trivially with $(\wstar)^{\otimes k}$, 
where $\wstar$ is the target weight vector. 
Thus, we can use these moment tensors to construct a low-dimensional subspace. 
We leverage the structure of the problem, namely 
that the (observed) negative labels are uncorrupted, 
to show that the correlation is in fact almost constant. 
As a consequence, this allows us to construct a subspace 
whose dimension depends only on the degree of the polynomial $p$ 
(which is $O(\log(1/\alpha))$) 
and the desired excess error $\eps$.

We then proceed as follows: 
in each round, as long as the current solution $\vec w$ is not optimal, 
i.e., there are negative samples on the region $\{\x\in \R^d:\vec w\cdot\x -t\geq 0\}$, 
our algorithm conditions on a thin strip, 
projects the points $\x$ on the orthogonal complement of $\vec w$, 
and reapplies the above structure result. 
This leads (with constant probability) 
to an increase in the correlation between $\vec w$ and $\wstar$. 
Assuming that the correlation is increased by $\beta$ in each round 
with probability at least $1/3$, we roughly need at most $1/\beta$ successful updates. 
Therefore, if we run our algorithm for $3^{1/\beta}$ steps, 
it is guaranteed that with probability at least $1/3$
we will find a vector almost parallel to $\vec w^{\ast}$.

\paragraph{Prior Algorithmic Techniques}
Roughly speaking, prior techniques for reliable learning 
relied on some variant of $L_1$-regression. 
In that sense, our algorithmic approach departs from a direct 
polynomial approximation. Concretely, for distribution-free reliable learning, 
the algorithm from \cite{KT14} uses a one-sided variant of $L_1$ regression---where instead of approximating 
the target function in $L_1$ norm, 
they use the hinge loss instead.
For distribution-specific (and Gaussian in particular) reliable 
learning of halfspaces, the only previous approach  uses 
the reduction of \cite{KKM2012} to 
agnostic learning.
While our algorithm also leverages polynomial approximation, 
our approach employs significantly different ideas and 
we believe it provides a novel perspective for this problem.

\paragraph{Technical Comparison with Prior Work}
Our algorithm shares similarities with the algorithm of \cite{Diakonikolas2022b} for learning Gaussian halfspaces with Massart noise (a weaker semi-random noise model).  
Both algorithms perform a random walk in order to converge to the target halfpace. 
Having said so, our algorithm is fundamentally different than theirs for the following reasons. 
The algorithm of \cite{Diakonikolas2022b} partitions $\R^d$ into sufficiently 
angles and searches in each one of them for a direction 
to update the current hypothesis at random. 
Our algorithm instead conditions on $y=-1$, 
which are the points that are uncorrupted, 
and uses the guarantee (that we establish) that the first 
$\Omega(\log(1/\alpha))$ 
moments of the distribution (conditioned on $y=-1$) 
correlate with the unknown optimal hypothesis. 
This leads to an algorithm with significantly faster runtime. 

Our SQ lower bound leverages techniques from \cite{DKPZ21}. 
Essentially, we formulate a linear program to construct a noise function that 
makes all low-degree polynomials uncorrelated 
with the labels $y$. 
This condition suffices to show that no SQ algorithm can solve the problem 
without relying on higher moments. To prove the existence of such a noise function, 
we use (a generalization of) LP duality of an infinite LP, 
which provides the necessary conditions for the existence of such a function. 

\subsection{Related Work}
This work is part of the broader research program of characterizing 
the efficient learnability of natural concept classes with 
respect to challenging noise models. The complexity of this general 
learning task depends on the underlying class, the 
distributional assumptions and the choice of noise model. A long line of prior 
work has made substantial progress in this direction for the 
class of halfspaces under Gaussians (and other natural 
distributions), and for both semi-random~\cite{AwasthiBHU15, DKTZ20,ZSA20, DKTZ20b, DKKTZ20,DKKTZ21b, Diakonikolas2022b} 
and adversarial noise~\cite{KKMS:08, KOS:08, KLS:09jmlr, ABL17, DKS18a,DKTZ20c,DKKTZ21,DKTZ22}.
An arguably surprising conceptual implication of our results is 
that the complexity of reliable learning for Gaussian halfspaces 
is qualitatively closer to the semi-random case.

\subsection{Preliminaries} \label{sec:prelims} 

We use lowercase boldface letters for vectors and capitalized boldface letters for matrices and tensors.
We use $\langle \bx,\by \rangle $ for the inner product between $\bx,\by\in \R^d$.
For $\bx,\bs\in \R^d$,
we use $\proj_\bs(\bx)\eqdef\frac{\langle\bx,\bs\rangle\bs}{\|\bs\|^2_2}$ for the projection of $\bx$ on 
the $\bs$ direction. Similarly, 
we use $\proj_{\perp\bs}(\bx)=\bx-\proj_\bs(\bx)$ for the projection of $\bx$ on 
the orthogonal complement of $\bs$.
Additionally, 
let $\bx^{V}\in \R^{\dim(V)}$ be the projection of $\bx$ on the subspace $V$ and reparameterized
on $\R^{\dim(V)}$. 
More precisely, let $\bB_{V}\in \R^{d\times \dim(V)}$ be the matrix whose 
columns form an (arbitrary) orthonormal
basis for the subspace $V$, and let $\bx^{V}\eqdef(\bB_{V})^\intercal \bx$.
For $p \geq 1$ and $\bx\in \R^d$, 
we use $\|\bx\|_p\eqdef\left (\sum_{i=1}^n |\bx_i|^p\right )^{1/p}$ 
to denote the $\ell_p$-norm of $\bx$.
For a matrix or tensor $\bT$, we denote by $\|\bT\|_F$ the Frobenius norm of $\bT$.

We use $\gaus_d$ to denote the standard normal distribution $\gaus(\mathbf{0},\bI)$,
where $\mathbf{0}$ is the $d$-dimensional zero vector and $\bI$ is the $d\times d$ identity matrix.
We use $\bx \sim D$ to denote a random variable with distribution $D$.
For a random variable $\bx$ (resp. a distribution $D$), we use $P_\bx$
(resp. $P_D$) to denote the probability density function or probability 
mass function of the random variable $\bx$ 
(resp. distribution $D$).
We also use $\Phi:\R\mapsto [0,1]$ to denote the cdf function of $\gaus_1$.
We use $\ind$ to denote the indicator function.

For a boolean function $h:\R^d\to \{\pm 1\}$ and a distribution $D$ supported on $\R^d \times \{\pm 1\}$,
we use $R_+(h;D)\eqdef \pr_{(\bx,y)\sim D}[h(\bx)=1\land y\neq 1]$ 
(resp. $R_-(h;D)\eqdef \pr_{(\bx,y)\sim D}[h(\bx)=-1\land y\neq -1]$)
to denote the false positive (resp. false negative) 0-1 error.

\section{Algorithm for Reliably Learning Gaussian Halfspaces} \label{sec:alg}

In this section, we describe and analyze our algorithm establishing \Cref{thm:algor-and-lb}.
Due to space limitations, some proofs have been deferred to \Cref{app:alg}. 
For convenience, we will assume that $\alpha$ (the bias of the optimal halfspace) 
is known to the algorithm and that the excess error satisfies $\eps\leq \alpha/3$.
These assumptions are without loss of generality for the following reasons:
First, one can efficiently reduce the unknown $\alpha$ case to the case that $\alpha$ is known, 
by guessing the value of $\alpha$. Second, if $\eps$ \new{is large}, 
there is a straightforward algorithm for the problem 
(simply output the best constant hypothesis). 

\medskip

\noindent {\bf Notation:} We use the notation $\mathcal{H}_d^\alpha$ for 
the set of all LTFs on $\R^d$ whose bias is equal to $\alpha$.
Given the above assumptions, it suffices for us to give a reliable learning algorithm 
for $\mathcal{H}_d^\alpha$ instead of $\mathcal{H}_d$.

\medskip

The high-level idea of the algorithm is as follows.
Without loss of generality, we assume that there exists 
an $\alpha$-biased halfspace that correctly classifies 
all the points with label $y=-1$---since otherwise 
the algorithm can just return the hypothesis $h(\x) \equiv -1$.

Let $f(\bx)=\sign(\langle \bw^*,\bx\rangle -t^*)$ be the optimal halfspace 
and let $\bw$ be our current guess of $\bw^*$.
Assuming that $\bw^*$ is not sufficiently close to $\bw$ 
and the hypothesis that classifies all the points as negative is not optimal, 
we show that there exists a \new{low-degree} polynomial of the form $p(\langle \bw,\bx\rangle)$ 
with correlation at least $2^{-O({t^*}^2)}\eps$ with the negative labels.
By leveraging this structural result, we use a spectral algorithm 
to find a direction $\bv$ that is non-trivially correlated with 
$\proj_{\perp \bw} (\bw^*)$ with at least some constant probability.

Unfortunately, it is not easy to verify whether 
$\langle \bv,\proj_{\perp \bw} (\bw^*)\rangle$ is non-trivial.
However, conditioned on the algorithm always getting a $\bv$ that has good correlation, 
we show that it only takes at most $\log(1/\eps)^{O({t^*}^2)}$ steps 
to get sufficiently close to $\bw^*$.
Therefore, repeating this process $2^{\log(1/\eps)^{O({t^*}^2)}}$ 
many times will eventually find an accurate approximation to $\bw^*$.

The structure of this section is as follows:
In \Cref{ssec:good-direction}, we give our algorithm for finding 
a good direction. \Cref{ssec:rw} describes and analyzes 
our random walk procedure. 

\subsection{Finding a Non-Trivial Direction} \label{ssec:good-direction}

Here we present an algorithm that finds a direction that correlates non-trivially 
with the unknown optimal vector. We first show that there exists a zero-mean 
\new{$O(\log(1/\alpha))$-degree}
polynomial 
that sign-matches the optimal hypothesis. Furthermore, using the fact that the 
optimal hypothesis always correctly guesses the sign of the negative points, this 
gives us that the polynomial correlates with the negative (clean) points. 
Using this intuition, our algorithm estimates the first $O(\log(1/\alpha))$ moments 
of the distribution $D_\x$ conditioned on $y=-1$.
This guarantees that at least one moment correlates 
with the optimal hypothesis, 
as a linear combination of the moments generates the sign-matching polynomial. 
Then, by taking a random vector that lies in the high-influence directions 
\new{(which form a low-dimensional subspace)}, we 
guarantee that with constant probability, this vector correlates well 
with the unknown optimal vector.
The main result of the section is the following.

\begin{proposition} \label{lem:finding-nontrivial-direction}
    Let $D$ be a joint distribution of $(\bx,y)$ supported on $\R^d\times \{ \pm 1\}$ with 
    marginal $D_{\bx}=\gaus_d$ and $\eps\in (0,1)$.
    Suppose $D$ satisfies the reliability condition with respect to 
    $f(\bx)=\sign(\langle \bw^*,\bx\rangle-t^*)$ with $t^*=O\left (\sqrt{\log(1/\eps)}\right )$ and 
    $\pr_{(\bx,y)\sim D} [y=-1]\geq \eps$. 
    Then there is an algorithm that draws $N=d^{O({t^*}^2)}/\eps^2$ samples, 
    has \new{$\poly(N)$} runtime, and with probability at least $\Omega(1)$ 
    returns a unit vector $\bv$ such that 
    $\langle \bv,\bw^*\rangle\geq\max(\log(1/\eps)^{-O({t^*}^2)},\eps^{O(1)})$.
\end{proposition}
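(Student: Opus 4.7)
The plan is to extract, from the low-degree multivariate Hermite moments of $\bx$ conditioned on $y=-1$, a low-dimensional subspace that must contain a direction correlated with $\bw^*$. The leverage is the reliability condition $\pr[\bw^*\cdot\bx\geq t^*,\,y=-1]=0$, which forces $\{y=-1\}\subseteq\{\bw^*\cdot\bx<t^*\}$; combined with $\pr[y=-1]\geq\eps$, this will ensure that the conditional moments carry a nontrivial signal about $\bw^*$ despite the arbitrary adversarial corruption allowed on the negative labels.

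First I would invoke \Cref{clm:sgn-matching-polynomial} (proved via Hermite truncation of the biased step function at $t^*$) to produce a zero-mean univariate polynomial $p$ of degree $k^*=O({t^*}^2+\log(1/\eps))$ with $\|p-p_0\|_{L_2(\gaus_1)}\leq\alpha\sqrt{\eps}/2$, where $p_0(z)=\Ind\{z<t^*\}-(1-\alpha)$. By the reliability condition,
\[
\E[\Ind\{y=-1\}\,p_0(\bw^*\cdot\bx)]=\pr[y=-1]-(1-\alpha)\pr[y=-1]=\alpha\,\pr[y=-1]\geq\alpha\eps,
\]
and Cauchy--Schwarz absorbs the approximation error, giving $\E[\Ind\{y=-1\}\,p(\bw^*\cdot\bx)]\geq\alpha\eps/2$. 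Expand $p=\sum_{k=1}^{k^*}c_k H_k$ in the orthonormal Hermite basis, where $\sum_k c_k^2\leq\alpha(1-\alpha)\leq\alpha$ since these are the truncated Hermite coefficients of a $\{0,1\}$-valued function. Cauchy--Schwarz in the index $k$ then yields some $k\leq k^*$ with
\[
\gamma\;:=\;\bigl|\E[\Ind\{y=-1\}\,H_k(\bw^*\cdot\bx)]\bigr|\;\geq\;\Omega\!\bigl(\sqrt{\alpha}\,\eps/\sqrt{k^*}\bigr).
\]

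Next I would pass to the multivariate Hermite tensor $\bT_k:=\E[\Ind\{y=-1\}\,\bH_k(\bx)]/\sqrt{k!}$, using the directional-projection identity $\langle\bH_k(\bx),\bv^{\otimes k}\rangle/\sqrt{k!}=H_k(\bv\cdot\bx)$ for any unit $\bv$ to rewrite the bound as $|\langle\bT_k,(\bw^*)^{\otimes k}\rangle|\geq\gamma$. Parseval applied to $g(\bx)=\pr[y=-1\mid\bx]\in[0,1]$ yields $\|\bT_k\|_F^2\leq\E[g^2]\leq\pr[y=-1]\leq 1$, so the tensor carries no dimension factor. Flatten $\bT_k$ into a $d\times d^{k-1}$ matrix $\bM$ and let $V$ be the span of its left singular vectors with singular value at least $\lambda:=\gamma/2$; then $\dim V\leq 4\|\bT_k\|_F^2/\gamma^2=O(1/\gamma^2)$. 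Decomposing $\bw^*=\proj_V(\bw^*)+\proj_{V^\perp}(\bw^*)$ and noting that the $V^\perp$ component of $\bM(\bw^*)^{\otimes(k-1)}$ has norm at most $\lambda$ forces $\|\proj_V(\bw^*)\|\geq\gamma/2$. A uniformly random unit vector $\bv\in V$ then satisfies $|\bv\cdot\bw^*|=|\bv\cdot\proj_V(\bw^*)|\geq\Omega(\|\proj_V(\bw^*)\|/\sqrt{\dim V})=\Omega(\gamma^2)$ with constant probability by standard anti-concentration on the sphere of $V$; sign-flipping $\bv$ if necessary ensures positivity. Plugging in $\gamma=\Omega(\sqrt{\alpha}\,\eps/\sqrt{k^*})$ and the hypothesis $\alpha\geq\eps^{O(1)}$ (which follows from $t^*=O(\sqrt{\log(1/\eps)})$) delivers the $\eps^{O(1)}$ branch of the claimed bound, while the degree $k^*=O({t^*}^2)$ delivers the $\log(1/\eps)^{-O({t^*}^2)}$ branch. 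Finally, the empirical tensor $\widehat\bT_k$ computed from $N=d^{O(k^*)}/\gamma^{O(1)}=d^{O({t^*}^2)}/\eps^2$ samples approximates $\bT_k$ entrywise to accuracy $\ll\gamma$ with high probability via Chernoff plus a union bound over $d^{O(k^*)}$ entries, and a standard Davis--Kahan-style perturbation keeps the subspace argument intact.

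The main technical obstacle will be controlling the Hermite tail of the biased step function at $t^*$ sharply enough to guarantee $k^*=O({t^*}^2+\log(1/\eps))$, rather than a larger polynomial in $1/\eps$. This hinges on the closed-form $c_k\propto\phi(t^*)H_{k-1}(t^*)/\sqrt{k!}$ and the super-exponential decay of $|H_k(t^*)|/\sqrt{k!}$ once $k\gg{t^*}^2$ (Plancherel--Rotach type estimates). Both are exactly what \Cref{clm:sgn-matching-polynomial} should encapsulate, so if that claim is established first, the rest of the argument is linear algebra and concentration.
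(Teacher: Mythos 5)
There is a genuine gap at the very first step, and it is located exactly where you flag the ``main technical obstacle'': the degree bound for your $L_2$ approximation of the shifted indicator is unattainable. You need a degree-$k^*$ polynomial $p$ with $\|p-p_0\|_{L_2(\gaus_1)}\leq \alpha\sqrt{\eps}/2$, where $p_0(z)=\Ind\{z<t^*\}-(1-\alpha)$, and you claim $k^*=O({t^*}^2+\log(1/\eps))$ suffices because of ``super-exponential decay of $|H_k(t^*)|/\sqrt{k!}$ once $k\gg {t^*}^2$.'' This is backwards: for fixed $t^*$, once $k\gtrsim {t^*}^2$ the point $t^*$ lies in the oscillatory (Plancherel--Rotach) region $|t^*|\leq 2\sqrt{k}$, where the normalized Hermite polynomial satisfies only $|He_k(t^*)|/\sqrt{k!}=\Theta(k^{-1/4}e^{{t^*}^2/4})$ --- polynomial, not super-exponential, decay. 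Consequently the Hermite coefficients of the step function obey $c_k^2=\Theta(k^{-3/2})$ (up to $e^{-{t^*}^2/2}$ factors), the tail $\sum_{k>K}c_k^2$ is $\Theta(K^{-1/2})$, and achieving $L_2$ error $\alpha\sqrt{\eps}$ forces $K=\Omega\big(1/(\alpha^4\eps^2)\big)$, i.e., degree $\poly(1/\eps)$. That degree would push the sample complexity to $d^{\poly(1/\eps)}$, collapsing the argument back to agnostic-style polynomial regression and losing the entire point of the proposition. This is not a fixable detail within your framework: the impossibility of low-degree $L_2$ approximation of the sign/step function is precisely the source of the $d^{\Omega(1/\eps^2)}$ hardness of agnostic learning.

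The paper avoids this by never approximating $p_0$ in $L_2$. Its \Cref{clm:sgn-matching-polynomial} builds a zero-mean, unit-variance polynomial of degree $O({t^*}^2)$ that merely \emph{sign-matches} $\sgn(z-b)$ and is pointwise bounded below in magnitude by $2^{-O({t^*}^2)}$ on the half-line that (by the reliability condition) contains \emph{all} of the $y=-1$ mass. One-sidedness is what lets a pointwise lower bound substitute for an $L_2$ guarantee: integrating a nonnegative-on-the-right (resp.\ nonpositive) polynomial against the $\geq\eps$ mass of negatives yields correlation $2^{-O({t^*}^2)}\eps$, which is weaker than your (unachievable) $\alpha\eps$ but suffices, and is exactly where the $\log(1/\eps)^{-O({t^*}^2)}$ in the conclusion comes from. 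The remainder of your argument --- the identity $\langle \bH_k(\bx),\bv^{\otimes k}\rangle/\sqrt{k!}=H_k(\langle\bv,\bx\rangle)$, flattening the conditional Chow tensor, taking the top singular subspace, bounding its dimension via $\|\bT_k\|_F$, and sampling a random unit vector in it --- matches the paper's \Cref{lem:chow-subspace} and its variant and is sound once a genuine correlation lower bound with a degree-$O({t^*}^2)$ polynomial is in hand (the paper additionally sharpens $\|\bT_k\|_F$ to $O(\gamma\log(1/\gamma)^{k/2})$ via hypercontractivity to control the subspace dimension, and runs a second branch with the unconditional Chow tensor to get the $\eps^{O(1)}$ alternative).
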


We start by showing that for any distribution satisfying the reliability condition 
with respect to some $f(\bx)=\sign(\langle\bw^*,\bx\rangle-t^*)$, 
there exists a degree-$O({t^*}^2)$ zero-mean polynomial 
of the form $p(\langle\bw^*,\bx\rangle)$ that has correlation 
at least $2^{-O({t^*}^2)}\pr_{(\bx,y)\sim D}[y=-1]$ with the labels.

\begin{lemma} [Correlation with an Orthonormal Polynomial]\label{lem:polynomial-correlation}
Let $D$ be a joint distribution of $(\bx,y)$ supported on $\R^d\times \{ \pm 1\}$ with 
marginal $D_{\bx}=\gaus_d$.
Suppose $D$ satisfies the reliability condition with respect to 
$f(\bx)=\sign(\langle \bw^*,\bx\rangle-t^*)$. 
Then there exists a polynomial $p:\R\mapsto \R$ of degree at most 
$k=O({t^*}^2+1)$ such that 
$\E_{z\sim {\gaus_1}}[p(z)]=0$, $\E_{z\sim {\gaus_1}}[p^2(z)]=1$ 
and 
$\E_{(\bx,y)\sim D}[y \, p(\langle\bw^*,\bx\rangle)]=2^{-O({t^*}^2)}\pr_{(\bx,y)\sim D}[y=-1]$.
\end{lemma}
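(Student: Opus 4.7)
The plan is to reduce the claim to a pointwise inequality on $(-\infty, t^*]$ and then construct a polynomial that realizes it at the correct degree. Let $Z := \langle \bw^*,\bx\rangle$, which is $\gaus_1$-distributed under the Gaussian marginal, and note that the reliability condition forces the joint indicator $\Ind\{y=-1\}$ to be supported on $\{Z\leq t^*\}$. Using $\E_{Z\sim\gaus_1}[p(Z)]=0$ together with $y = 1 - 2\Ind\{y=-1\}$,
\[
\E[y\,p(Z)] \;=\; \E[p(Z)] - 2\,\E[p(Z)\Ind\{y=-1\}] \;=\; -2\,\E[p(Z)\Ind\{y=-1\}].
\]
Consequently, any polynomial $p$ with $p(z)\leq -c$ for every $z\leq t^*$ satisfies $\E[y\,p(Z)]\geq 2c\,\pr[y=-1]$, so it suffices to construct $p$ of degree $k=O({t^*}^2+1)$ with $\E[p(Z)]=0$, $\E[p(Z)^2]=1$, and $p(z)\leq -2^{-O({t^*}^2)}$ uniformly on $(-\infty,t^*]$.

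To calibrate the target constant, I first consider the (non-polynomial) normalized centered sign
\[
s^\star(z) \;:=\; \frac{\sign(z-t^*) - (1-2\Phi(t^*))}{2\sqrt{\Phi(t^*)(1-\Phi(t^*))}}.
\]
This is zero-mean and unit-variance under $\gaus_1$ and equals $-\sqrt{(1-\Phi(t^*))/\Phi(t^*)}$ throughout $(-\infty,t^*)$. By the Mills ratio, $1-\Phi(t^*) \asymp \phi(t^*)/\max(1,t^*)$, so this constant is precisely $-2^{-\Theta({t^*}^2+1)}$; matching it with a polynomial of the right degree is exactly what is required.

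For the polynomial realization I would use the Hermite truncation of $\sign(\cdot-t^*)$ at degree $k=C({t^*}^2+1)$. Via Rodrigues' identity $He_j(z)\phi(z)=(-1)^j\phi^{(j)}(z)$ and integration by parts, the orthonormal-Hermite coefficients of $\sign(\cdot-t^*)$ are $\hat{s}_j = 2He_{j-1}(t^*)\phi(t^*)/\sqrt{j!}$ for $j\geq 1$, with total mass $\sum_{j\geq 1}\hat{s}_j^2 = 4\Phi(t^*)(1-\Phi(t^*))$. Plancherel--Rotach / Stirling-type asymptotics for $He_{j-1}(t^*)$ concentrate this mass at indices $j\lesssim {t^*}^2$, so for $C$ large enough the $L^2(\gaus_1)$ truncation error is at most a quarter of the signal. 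After rescaling to unit variance, the truncated polynomial $p_k$ agrees with $s^\star$ up to a constant factor on the bulk region $\{|z|\leq t^*+O(1)\}$ by Christoffel--Darboux-type conversions of $L^2$ error into pointwise error on compact intervals; on the far negative tail, the bound is enforced by choosing the parity of $k$ so that $p_k(z)\to -\infty$ as $z\to -\infty$.

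The step I expect to be the hardest is the last one: upgrading the $L^2(\gaus_1)$ truncation control of $p_k$ into a uniform pointwise bound on the \emph{unbounded} half-line $(-\infty,t^*]$. The bulk of the Gaussian is standard Christoffel--Darboux territory, but the deep negative tail of a generic degree-$O({t^*}^2)$ polynomial is not automatically controlled. I expect to address this via the parity/leading-coefficient trick above, or, if that becomes technical, by approximating a slightly modified target (for example, the sign function clipped on $[-T,t^*]$ and smoothly extended) whose Hermite truncation yields the pointwise half-line bound more directly.
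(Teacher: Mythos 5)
Your opening reduction is exactly the paper's: using $\E_{z\sim\gaus_1}[p(z)]=0$ and the fact that reliability confines $\Ind\{y=-1\}$ to $\{\langle\bw^*,\bx\rangle<t^*\}$, the lemma reduces to exhibiting a zero-mean, unit-variance polynomial of degree $O({t^*}^2+1)$ with $p(z)\leq -2^{-O({t^*}^2)}$ on $(-\infty,t^*]$. Note that this pointwise requirement is essentially forced rather than merely sufficient: the lemma must hold for \emph{every} reliable $D$ with a bound scaling as $\pr[y=-1]$, and the adversary may concentrate all of the $y=-1$ mass on any positive-measure subset of $\{z<t^*\}$ where your polynomial fails to be sufficiently negative, so there is no room to average away a bad region. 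The genuine gap is in your construction, and it is not just the technicality you flag. First, the Hermite truncation $p_k$ of the centered sign is continuous and its partial sums at the jump tend to the average of the one-sided limits, which after normalization is $\tfrac{1}{2}\bigl(\sqrt{\Phi(t^*)/(1-\Phi(t^*))}-\sqrt{(1-\Phi(t^*))/\Phi(t^*)}\bigr)=2^{+\Theta({t^*}^2)}$, a huge \emph{positive} number; so $p_k$ is positive at and on an interval immediately to the left of $t^*$ --- precisely where the adversary is allowed to put all of $\pr[y=-1]$. Neither the parity trick (which only controls $z\to-\infty$) nor a clipped/smoothed target (which still must cross from negative to hugely positive somewhere near $t^*$) addresses this. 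Second, the quantitative transfer is hopeless at the required scale: the coefficients $\hat s_j^2=4\mathit{He}_{j-1}(t^*)^2\phi(t^*)^2/j!$ decay only like $j^{-3/2}$ in the bulk, so truncating at $k=C{t^*}^2$ leaves an $L^2$ tail of order $\sqrt{\Phi(t^*)(1-\Phi(t^*))}=2^{-\Theta({t^*}^2)}$ --- the \emph{same} order as the pointwise value $-\sqrt{(1-\Phi(t^*))/\Phi(t^*)}$ you must preserve. Any $L^2$-to-pointwise conversion with a constant $\geq 1$ (Christoffel--Darboux or otherwise) therefore cannot certify that the truncation error does not wipe out the negativity, even on compact sets.

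The paper sidesteps all of this by writing down a fully explicit sign-matching polynomial, $\tilde p(z)=z^{3k}-\frac{q(b)}{r(b)}\bigl(z^{2k}-(2k-1)!!\bigr)$ with $k=\Theta(b^2)$, whose sign pattern ($\sgn(\tilde p(z))=\sgn(z-b)$) and magnitude lower bound $2^{-O(k)}$ away from the threshold are verified by elementary monotonicity and convexity arguments, with the thin region near the threshold handled by a measure argument (the ``$\eps/2$-tail'' step in the proof sketch). If you want to salvage your route, you would need either an exact sign-control statement for the truncated Hermite series of the sign function on the whole half-line (which is false near the jump) or to abandon $L^2$ approximation of the sign function altogether in favor of a one-sided, explicitly analyzable polynomial as the paper does.
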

\new{
\begin{proof} [Proof Sketch of \Cref{lem:polynomial-correlation}]
Note that since $p$ is a zero-mean polynomial with respect to $D_\x$, 
we have that
$\E_{(\bx,y)\sim D}[y\, p(\langle\bw^*,\bx\rangle)]=-2\E_{(\bx,y)\sim D}[\ind(y=-1)\, p(\langle\bw^*,\bx\rangle)]\; .$
Then, using the fact that the distribution $D$ satisfies the reliability condition,
the statement boils down to showing that for any $\eps$-mass 
inside the interval $[t^*,\infty]$,
the expectation of $p$ on this $\epsilon$ mass is at least $2^{-O({t^*}^2)}$.
To prove this statement, it suffices to construct a polynomial $p$ 
that is non-negative on $[t^*,\infty]$ and such that the $\eps/2$-tail of $p$ 
in that interval is at least $2^{-O({t^*}^2)}$.
To achieve this, we show that the sign-matching polynomial used 
in \cite{Diakonikolas2022b} meets our purpose.
\end{proof}
}

Our algorithm uses the following normalized Hermite tensor.

\begin{definition}[Hermite Tensor]\label{def:Hermite-tensor}
For $k\in \N$ and $\bx\in\R^d$, we define the $k$-th Hermite tensor as
\[
(\bH_k(\bx))_{i_1,i_2,\ldots,i_k}=\frac{1}{\sqrt{k!}}\sum_{\substack{\text{Partitions $P$ of $[k]$}\\ \text{into sets of size 1 and 2}}}\bigotimes_{\{a,b\}\in P}(-\bI_{i_a,i_b})\bigotimes_{\{c\}\in P}\bx_{i_c}\; .
\]
\end{definition}

\noindent Given that there is such a polynomial, we show that if we take a flattened version 
of the Chow-parameter tensors (which turns them into matrices) 
and look at the space spanned by their top singular vectors, 
then a \new{non-trivial} fraction of $\bw^*$ must lie inside this subspace. 
We prove the following lemma, which is similar to Lemma 5.10 in \cite{Diakonikolas2022b}. 
\new{See \Cref{app:alg} for the proof.}

\begin{lemma} \label{lem:chow-subspace}
Let $D$ be the joint distribution of $(\bx,y)$ supported on $\R^d\times \{ \pm 1\}$ with 
 marginal $D_{\bx}=\gaus_d$.
Let $p:\R\mapsto\R$ be a univariate, mean zero and unit variance polynomial 
of degree $k$ such that 
for some unit vector $\mathbf{v^*}\in \R^d$ it holds 
$\E_{(\bx,y)\sim D}[\ind(y=-1) p(\langle\mathbf{v^*,\bx} \rangle)]\geq \tau$ 
for some $\tau\in (0,1]$. 
Let $\bT'^m$ be an approximation of the order-$m$ Chow-parameter 
tensor $\bT^m=\E_{(\bx,y)\sim D}[\ind(y=-1){\bf H}_m(\bx)]$ such that 
$\|\bT'^m-\bT^m\|_F\leq \tau/(4\sqrt{k})$. 
Denote by $V_m$ the subspace spanned by the left singular vectors 
of flattened $\bT'^m$ whose 
singular values are greater than $\tau/(4\sqrt{k})$. 
Moreover, denote by $V$ the union of $V_1,\cdots,V_k$. Then, for 
$\gamma=\pr_{(\bx,y)\sim D}[y=-1]$, we have that 
\begin{enumerate}[leftmargin=*]
	\item $\dim(V)= O(\gamma^2 \log (1/\gamma)^k k^2/\tau^2+1)$, and \label{item:p1}
	\item $\|\proj_V(\mathbf{v^*})\|_2= \Omega\left (\tau/\left (\sqrt{k}\gamma \log (1/\gamma)^{k/2}\right )\right )$.
\end{enumerate}
\end{lemma}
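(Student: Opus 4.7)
The plan is to expand $p$ in the normalized probabilists' Hermite basis, reduce the hypothesis to a statement about inner products with the Chow tensors $\bT^m$, and then convert these inner products into a lower bound on $\|\proj_V(\vec v^*)\|_2$ via the SVD. Write $p(z)=\sum_{m=1}^{k}c_m h_m(z)$, where the $m=0$ term vanishes by zero-mean and unit variance gives $\sum_{m=1}^{k}c_m^2=1$. The definition of the Hermite tensor in \Cref{def:Hermite-tensor} is calibrated so that $h_m(\langle \vec v^*,\bx\rangle)=\langle \bH_m(\bx),(\vec v^*)^{\otimes m}\rangle$, whence the hypothesis becomes $\sum_{m=1}^{k} c_m\langle \bT^m,(\vec v^*)^{\otimes m}\rangle\geq \tau$. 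Cauchy--Schwarz then produces some degree $m^\star\in[1,k]$ with $|\langle \bT^{m^\star},(\vec v^*)^{\otimes m^\star}\rangle|\geq \tau/\sqrt k$, and the approximation guarantee transfers this to $|\langle \bT'^{m^\star},(\vec v^*)^{\otimes m^\star}\rangle|\geq 3\tau/(4\sqrt k)$.

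The technical heart is the bound $\|\bT^m\|_F\leq O(\gamma\,\log(1/\gamma)^{m/2})$ for each $m\leq k$. A naive Cauchy--Schwarz only yields $\sqrt\gamma$, which is too weak. Instead, applying H\"older's inequality with conjugate exponents $s=1+1/\log(1/\gamma)$ and $r=1+\log(1/\gamma)$ to $\E[\ind(y=-1)\langle\bH_m(\bx),\mathbf U\rangle]$ gives $\|\ind(y=-1)\|_{L^s}=\gamma^{1/s}=\Theta(\gamma)$, while Gaussian hypercontractivity bounds $\|\langle \bH_m(\bx),\mathbf U\rangle\|_{L^r}\leq (r-1)^{m/2}\|\mathbf U\|_F=O(\log(1/\gamma)^{m/2})\|\mathbf U\|_F$ on the degree-$m$ polynomial. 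Maximizing over unit-Frobenius $\mathbf U$ yields the stated bound, and the triangle inequality carries it over to $\bT'^m$ up to an additive $\tau/(4\sqrt k)$.

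The dimension claim follows by pigeonhole on singular values: flatten $\bT'^m$ to a $d\times d^{m-1}$ matrix $M'^m$; every singular value above the cutoff $\tau/(4\sqrt k)$ contributes at least $\tau^2/(16k)$ to $\|M'^m\|_F^2=\|\bT'^m\|_F^2$, so $\dim(V_m)=O(k\gamma^2\log(1/\gamma)^m/\tau^2+1)$, and summing over $m\in[1,k]$ yields $\dim(V)=O(k^2\gamma^2\log(1/\gamma)^k/\tau^2+1)$. For the projection lower bound, decompose $\vec v^*=\vec v^*_{V_{m^\star}}+\vec v^*_{V_{m^\star}^\perp}$ and write $\langle \bT'^{m^\star},(\vec v^*)^{\otimes m^\star}\rangle=(\vec v^*)^\top M'^{m^\star}(\vec v^*)^{\otimes(m^\star-1)}$. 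The orthogonal piece contributes at most $\tau/(4\sqrt k)$ because $M'^{m^\star}$ restricted to $V_{m^\star}^\perp$ has operator norm $\leq \tau/(4\sqrt k)$; the parallel piece is bounded by $\|\vec v^*_{V_{m^\star}}\|_2\cdot\|M'^{m^\star}\|_{\mathrm{op}}\leq \|\vec v^*_{V_{m^\star}}\|_2\cdot O(\gamma\log(1/\gamma)^{m^\star/2}+\tau/\sqrt k)$. Rearranging against the $3\tau/(4\sqrt k)$ lower bound, and using $V\supseteq V_{m^\star}$ together with $m^\star\leq k$, gives $\|\proj_V(\vec v^*)\|_2=\Omega(\tau/(\sqrt k\,\gamma\,\log(1/\gamma)^{k/2}))$. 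The main obstacle I anticipate is the calibrated H\"older/hypercontractivity step that produces the $\gamma\log(1/\gamma)^{m/2}$ Frobenius bound; everything else is essentially SVD bookkeeping.
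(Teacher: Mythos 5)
Your proposal is correct and follows essentially the same route as the paper's proof: a hypercontractivity-based bound $\|\bT^m\|_F=O(\gamma\log(1/\gamma)^{m/2})$, a pigeonhole argument on singular values for the dimension bound, and the existence of a degree $m^\star$ with $|\langle \bT^{m^\star},(\vec v^*)^{\otimes m^\star}\rangle|\geq\tau/\sqrt{k}$ combined with the SVD split of $\vec v^*$ into its $V_{m^\star}$ and orthogonal components for the projection bound. The only (cosmetic) difference is in deriving the Frobenius bound, where you use H\"older with exponents $r=1+\log(1/\gamma)$ plus the moment form of hypercontractivity, while the paper integrates $\min(\gamma,\,e^2e^{-(c\lambda^2)^{1/m}})$ using the tail form; both yield the same $O(\gamma\log(1/\gamma)^{m/2})$.
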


By \Cref{lem:chow-subspace}, taking a random unit vector $\bv$ in $V$ will give us 
$\langle \bv,\bv^*\rangle\geq\|\proj_V(\bv^*)\|/\sqrt{\dim(V)}$.
We are ready to prove \Cref{lem:finding-nontrivial-direction}.
For the algorithm pseudocode, see \Cref{app:alg}.

\begin{proof} [Proof Sketch of \Cref{lem:finding-nontrivial-direction} ]
The idea is that the empirical estimate $\bT'^m$ obtained using \\
$d^{O({\max\{{t^*}^2,1\}})}\log(1/\delta)/\eps^2$ many samples will satisfy 
$\|\bT'^m-\bT^m\|_F\leq \tau/(4\sqrt{k})$ with high probability. 
Then, by \Cref{lem:polynomial-correlation} and \Cref{lem:chow-subspace}, 
if we take $\bv$ to be a random \new{unit} vector in $V$, with constant probability 
we will have $\langle \bv ,\bw^*\rangle=\log(1/\eps)^{-O({t^*}^2)}$.
For $\langle\bv,\bw^*\rangle=\eps^{O(1)}$, the proof relies on a different version of \Cref{lem:chow-subspace}.
\end{proof}

\subsection{Random Walk to Update Current Guess} \label{ssec:rw}

We now describe how we use \Cref{lem:finding-nontrivial-direction} 
to construct an algorithm for our learning problem.
Let $\bw$ be the current guess for $\bw^*$. 
For convenience, we assume that $\langle\bw,\bw^*\rangle=\Omega(1/\sqrt{d})$. 
Let $D'$ be the distribution of 
$\bx^{\perp \bw}$ conditioned on $\bx\in B$, 
where $B=\{\bx\in \R^d :\langle\bw,\bx\rangle-t^*\geq 0\}$.
We next show that the $\bx$-marginals of $D'$ are standard Gaussian 
and that $D'$ \new{satisfies} the reliability condition with respect to the halfspace 
$h(\bx)=\sign(\langle \bw',\bx\rangle-t')$ with $\bw'={\bw^*}^{\perp \bw}$ and $|t'|\leq |t^*|$.

\begin{lemma} \label{lem:reduction_satisfies_reliability}
    Let $D$ be the joint distribution of $(\bx,y)$ supported on $\R^d\times\{\pm 1\}$ with  marginal $D_{\bx}=\gaus_d$ and $\eps\in (0,1)$. 
    Suppose $D$ satisfies the reliability condition with respect to $f(\bx)=\sign(\langle\bw^*,\bx\rangle-t^*)$.
    Suppose that $h(\bx)=\sign(\langle\bw,\bx\rangle-t)$ with $\langle \bw,\bw^*\rangle> 0$ and $t-t^*\in [0,\eps/100]$ does not satisfy $R_h^+(D)\leq \eps/2$.
    Let $B=\{\bx\in \R^d :\langle\bw,\bx\rangle-t\geq 0\}$ and $D'$ be the distribution of 
    $(\bx',y)=(\bx^{\perp \bw},y)$ given $\bx\in B$, then
    \begin{enumerate}[leftmargin=*]
        \item $D'$ has marginal distribution $D_{\bx'}=\gaus_{d-1}$,\label{prop:1}
        \item $D'$ satisfies the reliability condition with respect to $h'(\bx)=\sign(\langle \bw',\bx\rangle-t')$, where $\bw'={\bw^*}^{\perp \bw}/\|{\bw^*}^{\perp \bw}\|_2$ and $|t'|\leq |t^*|$, and\label{prop:2}
        \item $\pr_{(\bx',y)\sim D'} [y=-1]\geq \eps/2$.    \label{prop:3}
    \end{enumerate}
\end{lemma}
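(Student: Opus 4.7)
The plan is to verify the three stated properties in turn: Property~(1) is a standard Gaussian decomposition, Property~(3) is an immediate ratio bound, and Property~(2) is where the real work lies, since we must identify the correct threshold $t'$.

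For Property~(1), I would decompose any $\bx\sim\gaus_d$ as $\bx=\proj_{\bw}(\bx)+\proj_{\perp\bw}(\bx)$; the two summands, reparametrized in their orthonormal bases, are independent standard Gaussians in $1$ and $d-1$ dimensions respectively. Since $B=\{\langle\bw,\bx\rangle\ge t\}$ constrains only the $\bw$-component, the conditional law of $\bx^{\perp\bw}$ given $\bx\in B$ remains $\gaus_{d-1}$.

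For Property~(2), I normalize $\|\bw\|_2=\|\bw^*\|_2=1$ (by positive rescaling of the pairs $(\bw,t)$ and $(\bw^*,t^*)$) and set $c=\langle\bw,\bw^*\rangle\in(0,1]$. The orthogonal decomposition of $\bw^*$ along $\bw$ yields the pointwise identity
\[
\langle\bw^*,\bx\rangle \;=\; c\,\langle\bw,\bx\rangle+\sqrt{1-c^2}\,\langle\bw',\bx'\rangle,\qquad \bx':=\bx^{\perp\bw},
\]
after identifying $(\bw^*)^{\perp\bw}/\sqrt{1-c^2}$ with $\bw'$ in the chosen orthonormal basis of $\bw^\perp$. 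For any $(\bx,y)\sim D$ with $\bx\in B$ and $y=-1$, reliability of $D$ with respect to $f$ gives $\langle\bw^*,\bx\rangle<t^*$, while $\bx\in B$ gives $\langle\bw,\bx\rangle\ge t$; combined with $c>0$, these force $\langle\bw',\bx'\rangle<(t^*-ct)/\sqrt{1-c^2}$. I then define $t':=\max\bigl((t^*-ct)/\sqrt{1-c^2},\,-|t^*|\bigr)$: lowering $t'$ only enlarges the $-1$-region of $h'$ and so preserves the reliability condition, while the upper estimate $(t^*-ct)/\sqrt{1-c^2}\le|t^*|$ follows from a short case split on $\sgn(t^*)$, reducing to the elementary inequality $1-\sqrt{1-c^2}\le c$ (for $t^*\ge 0$) or to $|t|\le|t^*|$ (for $t^*<0$, using $t\ge t^*$; the sub-case $t\ge 0$ is trivial). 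This gives $t'\in[-|t^*|,|t^*|]$, as required.

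Property~(3) is immediate: since $R_+(h;D)=\pr_D[\bx\in B,\,y=-1]>\eps/2$ by hypothesis and $\pr_D[\bx\in B]\le 1$,
\[
\pr_{(\bx',y)\sim D'}[y=-1] \;=\; \frac{\pr_D[\bx\in B,\,y=-1]}{\pr_D[\bx\in B]} \;\ge\; R_+(h;D) \;>\; \eps/2.
\]
The only step that is not pure bookkeeping is the bound $|t'|\le|t^*|$; I expect the main care to be needed in the regime where $c$ is close to $1$ and $\sqrt{1-c^2}$ is small, where one must check that clipping $t'$ at $-|t^*|$ simultaneously preserves the reliability of $D'$ and the prescribed threshold envelope.
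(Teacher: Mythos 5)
Your proposal is correct. Properties (1) and (3) are handled exactly as in the paper, and for Property (2) you use the same underlying decomposition $\langle\bw^*,\bx\rangle=c\langle\bw,\bx\rangle+\sqrt{1-c^2}\,\langle\bw',\bx'\rangle$; the difference is in how the threshold is chosen. The paper runs two cases, taking $t'=0$ when $t^*\leq 0$ and $t'=t^*$ when $t^*>0$, and argues by sign comparison that $h'(\bx')=+1$ and $\bx\in B$ force $f(\bx)=+1$. You instead compute the exact induced threshold $\tilde t=(t^*-ct)/\sqrt{1-c^2}$ below which all negative examples in $B$ must lie, and then clip it into $[-|t^*|,|t^*|]$; this is a clean, unified treatment (and in fact yields a tighter threshold), at the cost of the slightly more delicate verification $\tilde t\leq|t^*|$, which you reduce correctly to $1-c\leq\sqrt{1-c^2}$ when $t^*\geq 0$ and to trivialities when $t^*<0$. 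One wording slip to fix: you write that ``lowering $t'$ only enlarges the $-1$-region of $h'$,'' but lowering the threshold enlarges the $+1$-region $\{\langle\bw',\bx'\rangle\geq t'\}$ and would \emph{destroy} reliability. What saves you is that your clip $t'=\max(\tilde t,-|t^*|)$ only ever \emph{raises} the threshold above $\tilde t$, which shrinks the $+1$-region and hence preserves $\pr_{D'}[h'(\bx')=1\land y=-1]=0$; state it that way. (As in the paper, the degenerate case $\bw\parallel\bw^*$, where $\bw'$ is undefined, and the measure-zero boundary $\langle\bw',\bx'\rangle=t'$ are implicitly excluded.)
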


\begin{proof} [Proof Sketch of \Cref{lem:reduction_satisfies_reliability}]
\Cref{prop:1} follows by definition.
For \Cref{prop:2}, we consider two cases: $t^*>0$; and $t^*\leq 0$.
For the case $t^*\leq 0$, we prove that the distribution $D'$ 
satisfies the reliability condition with respect to 
$h'(\bx)=\sign(\langle\bw',\bx\rangle)$, 
where $\bw'={\bw^*}^{\perp \bw}/\|{\bw^*}^{\perp \bw}\|_2$.
For the case $t^*> 0$, we prove that the distribution $D'$ satisfies 
the reliability condition with respect to $h'(\bx)=\sign(\langle\bw',\bx\rangle-t')$, 
where $\bw'={\bw^*}^{\perp \bw}/\|{\bw^*}^{\perp \bw}\|_2$ and $t'=t^*$.
Then \Cref{prop:3} follows from the fact that $h$ does not satisfy $R_h^+(D)\leq \eps/2$.
\end{proof}

By \Cref{lem:reduction_satisfies_reliability}, 
given any current guess $\bw$ such that 
$h(\bx)=\sign(\langle \bw,\bx \rangle-t^*)$
does not satisfy $R_+(h;D)\leq \eps/2$, the corresponding distribution $D'$ satisfies the reliability condition with respect to $h'$ and 
$\pr_{(\bx,y)\sim D'}[y=-1]\geq \eps/2$.
Therefore, $D'$ satisfies the assumptions of \Cref{lem:finding-nontrivial-direction}.
So, if we apply the algorithm in \Cref{lem:finding-nontrivial-direction}, 
we will with probability at least $\Omega(1)$ 
get a unit vector $\bv$ such that $\langle \bv,\bw\rangle=0$
and $\langle \bv,\bw^*\rangle=\log(1/\eps)^{-O({t^*}^2)}$. 

The following fact shows that by updating our current guess 
in the direction of $\bv$ 
with appropriate step size, we can get an updated guess 
with increased correlation with $\bw^*$.   

\begin{fact} [Correlation Improvement, Lemma 5.13 in \cite{Diakonikolas2022b}] \label{lem:coorelation-improvement}
Fix unit vectors $\bv^*,\bv\in \R^d$. Let $\bu\in \R^d$ such that $\langle \bu,\bv^*\rangle\geq c$, $\langle \bu,\bv\rangle=0$ and 
$\|\bu\|_2\leq 1$ with $c>0$. Then, for $\bv'=\frac{\bv+\lambda \bu}{\|\bv+\lambda \bu\|_2}$, with $\lambda=c/2$, we have that $\langle \bv',\bv^*\rangle\geq \langle \bv,\bv^*\rangle+\lambda^2/2$.
\end{fact}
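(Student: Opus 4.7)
The plan is a direct computation. Writing $a = \langle \bv, \bv^*\rangle$, $b = \langle \bu, \bv^*\rangle$, and $\beta = \|\bu\|_2$, the orthogonality assumption $\langle \bu, \bv\rangle = 0$ together with $\|\bv\|_2=1$ immediately gives $\|\bv + \lambda \bu\|_2^2 = 1 + \lambda^2 \beta^2$, so that
\[
\langle \bv', \bv^*\rangle \;=\; \frac{a + \lambda b}{\sqrt{1 + \lambda^2 \beta^2}} \;.
\]
The hypotheses give $b \geq c = 2\lambda$, $\beta \leq 1$, $|a| \leq 1$, and $|b| \leq 1$; also, since $b \leq \|\bu\|_2 \|\bv^*\|_2 \leq 1$, we have $\lambda \leq 1/2$. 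All that remains is to show that the above quantity exceeds $a + \lambda^2/2$.

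The key analytic ingredient I will use is the elementary inequality $1/\sqrt{1+x} \geq 1 - x/2$ for $x \in [0,1]$, which follows from squaring. I split into two cases according to the sign of $a + \lambda b$. If $a + \lambda b < 0$, then dividing by the denominator $\sqrt{1+\lambda^2\beta^2} \geq 1$ only increases the value, so $\langle \bv', \bv^*\rangle \geq a + \lambda b \geq a + 2\lambda^2 \geq a + \lambda^2/2$, using $b \geq 2\lambda$. If instead $a + \lambda b \geq 0$, I apply the elementary inequality with $x = \lambda^2 \beta^2 \leq 1/4$ to obtain
\[
\langle \bv', \bv^*\rangle \;\geq\; (a + \lambda b)\left(1 - \tfrac{\lambda^2 \beta^2}{2}\right) \;=\; a + \lambda b - \tfrac{a \lambda^2 \beta^2}{2} - \tfrac{\lambda^3 b \beta^2}{2} \;.
\]
Bounding the error terms crudely by $|a|\leq 1$, $|b|\leq 1$, $\beta\leq 1$, and using $\lambda b \geq 2\lambda^2$, this is at least
\[
a + 2\lambda^2 - \tfrac{\lambda^2}{2} - \tfrac{\lambda^3}{2} \;=\; a + \tfrac{3\lambda^2}{2} - \tfrac{\lambda^3}{2} \;\geq\; a + \tfrac{\lambda^2}{2}\,,
\]
where the last step uses $\lambda \leq 1/2 \leq 2$, so that $\lambda^3/2 \leq \lambda^2$.

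There is no real obstacle here; the only point that needs a moment of care is the sign split on $a + \lambda b$, since the bound $1/\sqrt{1+x}\geq 1-x/2$ is only useful when multiplied by a nonnegative quantity. Once that split is made, the rest of the argument is a one-line algebraic manipulation that leverages the gap between the linear gain $\lambda b \geq 2\lambda^2$ and the quadratic/cubic normalization loss, both of which are of order at most $\lambda^2$ under the assumption $\lambda \leq 1/2$.
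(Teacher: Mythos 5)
Your computation is correct and complete. Note that the paper itself does not prove this statement---it imports it verbatim as Lemma~5.13 of \cite{Diakonikolas2022b}---so there is no in-paper argument to compare against; your proof is a valid self-contained justification. All the individual steps check out: the normalization $\|\bv+\lambda\bu\|_2^2=1+\lambda^2\beta^2$ follows from $\langle\bu,\bv\rangle=0$; the deduction $\lambda\le 1/2$ from $2\lambda=c\le b\le\|\bu\|_2\|\bv^*\|_2\le 1$ is right; the elementary bound $1/\sqrt{1+x}\ge 1-x/2$ holds on the range you need (in fact for all $x\ge 0$); and the case split on the sign of $a+\lambda b$ is exactly the point where care is required, since multiplying the lower bound $1-x/2$ by a negative numerator would reverse the inequality. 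The final arithmetic $a+2\lambda^2-\lambda^2/2-\lambda^3/2\ge a+\lambda^2/2$ reduces to $\lambda\le 2$, which holds with room to spare.
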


Notice that because $\|\proj_{\perp\bw}(\bw^*)\|_2$ is unknown, 
we cannot always choose the optimal step size $\lambda$.
Instead, we will use the same $\lambda$ to do sufficiently many update steps 
such that 
after that many updates, we are certain 
that $\|\proj_{\perp\bw}(\bw^*)\|_2\leq 3\lambda$.
We then take the new step size $\lambda_{\text{update}}=\lambda/2$ 
and repeat this process, until $\bw$ and $\bw^*$ are sufficiently close to each other.

We are now ready to describe our algorithm (\Cref{alg:random-walk-reliable}) 
and prove \new{its correctness}.

Notice that given that we know the bias $\alpha$, $t$ must be either $-\Phi^{-1}(\alpha)$ or $\Phi^{-1}(\alpha)$.
For convenience, we assume that $t=\Phi^{-1}(\alpha)$.
To account for the case that $t=-\Phi^{-1}(\alpha)$, we can simply run \Cref{alg:random-walk-reliable} twice and pick the output halfspace with the smallest $t$ value (or even run a different efficient algorithm, since $t\leq 0$ as explained in \Cref{app:alg}).
For convenience, we also assume 
$\min\left (2^{\log(1/\eps)^{O(\log (1/\alpha))}},2^{\poly(1/\eps)}\right )=2^{\log(1/\eps)^{O(\log (1/\alpha))}}$.
To account for the other case, we can simply initialize the step size $\zeta=\eps^c$ for a sufficiently large constant $c$ instead of 
$\zeta=\log(1/\eps)^{-c{t^*}^2}$.

A more detailed version of \Cref{alg:random-walk-reliable} is deferred to \Cref{app:alg}.

\begin{algorithm}
    \caption{Reliably Learning General Halfspaces with Gaussian Marginals.}
    \label{alg:random-walk-reliable}
    \centering\fbox{\parbox{5.7in}{
        \textbf{Input:} $\eps\in (0,1)$, $\alpha\in (0,1/2)$ and samples access to a joint distribution $D$ of $(\bx,y)$ supported on $\R^d\times \{\pm 1\}$ with $\bx$-marginal $D_{\bx}=\gaus_d$. 
        
        \textbf{Output:} $h(\bx)=\sgn(\langle \bw,\bx\rangle-t)$ that is $\eps$-reliable with respect to the class $\mathcal{H}_d^\alpha$.

        \begin{enumerate} [leftmargin=8mm]
            \item  \label{step1}
            Check if $\pr_{(\bx,y)\sim D} [y=-1]\leq \eps/2$ (with sufficiently small constant failure probability). If so, return the $+1$ constant hypothesis.
            Set the initial step size $\zeta=\log(1/\eps)^{-c{t^*}^2}$, where $c$ is a sufficiently large universal constant and $t=\Phi^{-1}(\alpha)$.  
            
            \item  \label{step2}
            Initialize $\bw$ to be a random unit vector in $\R^d$.
            Let the update step size $\lambda=\zeta$ and repeat the following process until 
            $\lambda\leq \eps/100$. 
            \begin{enumerate} 
                \item \label{step2a}
                Use samples from $D$ to check if the hypothesis $h(\bx)=\sign(\langle \bw,\bx\rangle-t)$ satisfies $R_+(h;D)\leq \epsilon/2$.
                If so, go to Step \eqref{step3}.
                
                \item  \label{step2b}
                With $1/2$ probability, let $\bw=-\bw$. Let $B=\{\bx\in \R^d :\langle\bw,\bx\rangle-t\geq 0\}$, and let $D'$ be the distribution of $(\bx^{\perp \bw},y)$ for $(\bx,y)\sim D$ given $\bx\in B$.
                Use the algorithm of \Cref{lem:finding-nontrivial-direction} on $D'$ to find a unit vector $\bv$ such that $\langle \bv,\bw\rangle=0$ and
                $\left \langle \bv, \frac{\proj_{\perp \bw}(\bw^*)}{\|\proj_{\perp \bw}(\bw^*)\|_2}\right \rangle\geq \zeta$.
                Then update $\bw$ as follows: 
                $
                \bw_{\mathrm{update}}=\frac{\bw+\lambda\bv}{\|\bw+\lambda\bv\|_2}\; .
                $

                \item \label{step2c}
                Repeat Steps \eqref{step2a} and \eqref{step2b} $c/\zeta^2$ times, where $c$ is a sufficiently large universal constant, with the same step size $\lambda$.
                After that, update the new step size as $\lambda_{\mathrm{update}}=\lambda/2$.
                
            \end{enumerate}  
            \item  \label{step3} Check if $h(\bx)=\sign(\langle \bw,\bx\rangle-t)$ satisfies $R_+(h;D)\leq \eps/2$. If so, return $h$ and terminate.
            Repeat Step \eqref{step2}
            $2^{1/\zeta^c}$ many times where $c$ is a sufficiently large constant. 
        \end{enumerate}
        }}
\end{algorithm}

\begin{proof} [Proof Sketch of the algorithmic part of \Cref{thm:algor-and-lb}]
    Let 
    $f(\bx)=\sign(\langle\bw^*,\bx\rangle-t^*)$ be the optimal halfspace with $\alpha$ bias.
    We need to show that with high probability \Cref{alg:random-walk-reliable} 
    returns a hypothesis $h(\bx)=\sign(\langle\bw,\bx\rangle-t)$ 
    such that $R_+(h;D)\leq \eps$ and $R_-(h;D)\leq R_-(f;D)+\eps$.

    To do so, it suffices to show that $R_+(h;D)\leq \eps$; 
    given $R_+(h;D)\leq \eps$, $R_-(h;D)\leq R_-(f;D)+\eps$ follows from our choice of $t$. 
    For convenience, we can assume $h$ never satisfies $R_+(h;D)\leq \eps/2$ 
    in Step \eqref{step2a}
    (otherwise, we are done).
    We can also assume that the subroutine in \Cref{lem:finding-nontrivial-direction} 
    always succeeds since the algorithm repeats Step \eqref{step2}
    sufficiently many times. Given the above conditions, using
    \Cref{lem:coorelation-improvement}, one can show that 
    each time after $c/\zeta^2$ many updates in Step \eqref{step2b}, 
    we must have $\|\proj_{\perp \bw}\bw^*\|_2\leq 3\lambda$. 
    Therefore, when we have $\lambda\leq \eps/100$, \new{then} $ 
    \|\proj_{\perp \bw}\bw^*\|_2\leq 3\eps/100$, which implies $R_+(h;D)\leq \eps/2$.
\end{proof}

\section{Nearly Matching SQ Lower Bound} \label{sec:lb}
In this section, 
we establish the SQ hardness result of \Cref{thm:algor-and-lb}.
Due to space limitations, some proofs have been deferred to~\Cref{app:lb}.

\paragraph{Proof Overview} 
To establish our SQ lower bound for reliable learning, 
we first prove an SQ lower bound for a natural decision 
version of reliably learning $\alpha$-biased LTFs.
We define the following decision problem over distributions.

\begin{definition} [Decision Problem over Distributions]
Let $D$ be a fixed distribution and $\cal D$ be a distribution family. 
We denote by $\mathcal{B}(\D,D)$ the decision problem in which 
the input distribution $D'$ is promised to satisfy either (a) $D'=D$ or (b) $D'\in \D$,
and the goal is to distinguish the two cases with high probability.
\end{definition}
We show that given SQ access to a joint distribution $D$ of $(\bx,y)$ 
supported on $\R^d\times \{\pm 1\}$
with marginal $D_\bx=\gaus({\bf 0},\bI)$,
it is hard to solve the problem $\mathcal{B}(\D,D)$ with the following distributions.
\begin{enumerate}[leftmargin=*]

\item [(a)] Null hypothesis: \new{$D$ is the distribution so that }$y=1$ with probability $1/2$ independent of $\bx$.

\item [(b)] Alternative hypothesis: 
$D\in \D$, where $\D$ is a family of distributions such that 
for any distribution $D\in \D$, 
there exists an $\alpha$-biased LTF $f$ such that $R_+(f;D)=0$.
\end{enumerate}

In order to construct such a family of distributions $\D$, 
we start by constructing a joint distribution $D'$ of $(z,y)$ over $\R\times\{\pm 1\}$ 
such that the marginal distribution of $z$ is $\gaus_1$ and 
the conditional distributions $z\mid y=1$ and $z\mid y=-1$ both match many moments 
with the standard Gaussian $\gaus_1$.
Moreover, there is $\alpha$ probability mass on the positive side of the marginal distribution of $z$ that
is purely associated with $y=1$ (i.e., $\E_{(z,y)\sim D}[y\mid z\geq c]=1$ where $c=\Phi^{-1}(1-\alpha)$). 
We then embed this distribution $D$ along a hidden direction inside 
the joint distribution of $(\bx,y)$ on $\R^d\times \{\pm 1\}$ to construct 
a family of hard-to-distinguish distributions using the ``hidden-direction'' framework developed in \cite{DKS17-sq} 
and enhanced in \cite{DKPZ21,DKRS23}.

\new{We can now proceed with the details of the proof.}
We start by defining the pairwise correlation between distributions.

\begin{definition} [Pairwise Correlation]
The pairwise correlation of two distributions with
pdfs $D_1,D_2:\R^d\mapsto \R_+$ 
with respect to a distribution with density $D:\R^d\mapsto \R_+$, 
where the support of $D$ contains the support of $D_1$ and $D_2$, 
is defined as $\chi_D(D_1,D_2)\eqdef \int_{\R^d}D_1(\x)D_2(\x)/D(\x)d\x-1$. 
Furthermore, the $\chi$-squared divergence of $D_1$ to $D$ is defined as
$\chi^2(D_1,D)\eqdef \chi_D(D_1,D_1)$.
\end{definition}

In particular, the framework in \cite{DKS17-sq} allows us to construct a family of $2^{d^{\Omega(1)}}$ distributions on $\R^d\times \{\pm 1\}$
whose pairwise correlation is $d^{-\Omega(n)}$ where $n$ is the number of matching moments 
$D'$ has with the standard Gaussian.
Then, using standard SQ dimension techniques, this gives an SQ lower bound 
for the distinguishing problem. After that, we reduce the distinguishing problem
to the problem of reliably learning $\alpha$-biased LTFs 
under Gaussian marginals with additive error $\epsilon<\alpha/3$.

In order to construct such a distribution $D'$ of $(z,y)$ supported on $\R\times\{\pm 1\}$, 
we reparameterize $\E_{(z,y)\sim D'}[y|z]$ as $g(z)$. 
For a function $g:\R\to\R$, we use $\|g\|_p=\E_{t\sim \gaus_1}[|g(t)|^p]^{1/p}$ 
for its $L_p$ norm.  
We let $L^1(R)$ denote the set of all functions $g:\R\to\R$ that have finite $L_1$-norm.

We use linear programming over one-dimensional functions in $L^1(\R)$ space 
to establish the existence of such a function. 
Specifically, we show the following:

\begin{lemma} \label{lem:polynomial}
For any sufficiently large $n\in\N$, there exists a function $g:\R\mapsto [-1,+1]$ 
such that $g$ satisfies the following properties:
\begin{itemize}[leftmargin=*]
	\item[(i)] $g(z)=1$ for all $z\geq c$, where $c=\Phi^{-1}(1-3^{-2n}/4)$, and
	
	\item[(ii)] $\E_{t\sim \gaus_1}[g(z)z^k]=0$ for all $k\in [n]$. 
\end{itemize}
\end{lemma}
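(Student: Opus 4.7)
The plan is to cast the existence of $g$ as the feasibility of an infinite-dimensional LP in $g$, dualize via Hahn--Banach, and verify the resulting one-variable inequality about polynomials using how small the Gaussian tail mass past $c$ is. This mirrors the LP-duality strategy used in \cite{DKPZ21}.

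First, I would fix $g\equiv 1$ on $[c,\infty)$ as mandated by (i) and treat $g$ on $(-\infty,c)$ as the decision variable subject to the pointwise box constraint $\|g\|_\infty\le 1$. Condition (ii) then becomes the affine system $\int_{-\infty}^c g(z)\, z^k \phi(z)\,\d z = -\int_c^\infty z^k \phi(z)\,\d z$ for $k=1,\ldots,n$, where $\phi$ is the standard Gaussian density. By Hahn--Banach separation (i.e., LP duality for semi-infinite LPs with bounded variables, using weak-$\ast$ compactness of the $L^\infty$ ball), this system is feasible if and only if, for every $(\lambda_1,\ldots,\lambda_n)\in\R^n$ and the resulting polynomial $p(z) = \sum_{k=1}^n \lambda_k z^k$ (note $p(0)=0$), one has $\int_{-\infty}^c |p(z)|\,\phi(z)\,\d z \ge -\int_c^\infty p(z)\,\phi(z)\,\d z$. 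Bounding the right-hand side by $\int_c^\infty |p(z)|\,\phi(z)\,\d z$, it suffices to prove the stronger dual inequality
\[
\int_c^\infty |p(z)|\,\phi(z)\,\d z \;\le\; \tfrac{1}{2}\,\|p\|_{L^1(\gaus_1)},
\]
which is now purely a one-variable polynomial statement.

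To verify this dual inequality I would combine two classical ingredients. Cauchy--Schwarz against the tail probability $\pr_{z\sim\gaus_1}[z\ge c]=3^{-2n}/4$ gives $\int_c^\infty |p|\,\phi \le \tfrac{1}{2}\cdot 3^{-n}\,\|p\|_{L^2(\gaus_1)}$. Gaussian hypercontractivity yields $\|p\|_{L^4(\gaus_1)}\le 3^{n/2}\|p\|_{L^2(\gaus_1)}$ for every degree-$n$ polynomial, and the standard interpolation $\|p\|_{L^2}\le\|p\|_{L^1}^{1/3}\|p\|_{L^4}^{2/3}$ then upgrades this to $\|p\|_{L^2(\gaus_1)}\le 3^n\|p\|_{L^1(\gaus_1)}$. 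Multiplying the two bounds, the factors $3^n$ and $3^{-n}$ cancel and the required bound with constant $\tfrac{1}{2}$ falls out.

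The main obstacle is the accounting in this final step: the numerical threshold $3^{-2n}/4$ in the statement is calibrated precisely so that the $3^{-n}$ saved on the tail by Cauchy--Schwarz absorbs the $3^n$ blowup coming from Gaussian $L^1$--$L^2$ hypercontractivity for degree-$n$ polynomials; any weaker tail bound would leave a diverging $(\text{const})^n$ factor that the argument cannot close. The dualization step, while formally subtle in infinite dimensions, is routine given the $L^\infty$ box constraint and weak-$\ast$ compactness, and follows the template already developed in \cite{DKPZ21}.
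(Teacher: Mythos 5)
Your proposal is correct, and the analytic core --- Cauchy--Schwarz against the tail mass $3^{-2n}/4$, Gaussian hypercontractivity $\|p\|_4\le 3^{n/2}\|p\|_2$, and the H\"older interpolation $\|p\|_2\le\|p\|_1^{1/3}\|p\|_4^{2/3}$ giving $\|p\|_2\le 3^n\|p\|_1$ so that the $3^{\pm n}$ factors cancel --- is exactly the computation in the paper. Where you genuinely diverge is the dualization. The paper keeps $g$ as an unconstrained element of $L^1(\R)$, encodes the box and tail constraints as an infinite family of linear inequalities indexed by test functions $h\in L^1_+(\R)$ and $H\in L^1(\R)$, and invokes Fan's theorem for infinite linear systems; this forces it to verify that the associated convex cone $\mathcal{S}_+\subseteq L^1(\R)\times\R$ is closed, which occupies roughly half of the duality argument (extracting convergent subsequences of the polynomial parts via compactness of bounded balls in $\mathcal{P}_n$, etc.). You instead pin $g\equiv 1$ on $[c,\infty)$, observe that only the $n$ numbers $\int_{-\infty}^c g(z)z^k\phi(z)\,\d z$ matter, and note that their achievable set is a compact convex subset of $\R^n$ (weak-$\ast$ compactness of the $L^\infty$ ball pushed forward by weak-$\ast$ continuous functionals), so ordinary finite-dimensional separating-hyperplane duality applies and the closedness issue disappears. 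The resulting dual condition matches the paper's up to replacing $p$ by $-p$, which is immaterial since you quantify over all $p$. This is a cleaner and more elementary route to the same polynomial inequality. One small remark: your dual polynomials satisfy $p(0)=0$ because (ii) only ranges over $k\in[n]$; this suffices for the statement as written, and since your verification of the dual inequality never uses $p(0)=0$, the argument would also absorb the $k=0$ constraint $\E[g]=0$ that the paper implicitly relies on later.
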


\begin{proof} [Proof Sketch of \Cref{lem:polynomial}]
We let $P_n$ denote the set of all polynomials $p:\R\to \R$ of degree at most $n$ and let
$L^1_+(\R)$ denote the set of all nonnegative functions in $L^1(\R)$.
Then, using linear programming, we will get the following primal:
\begin{alignat*}{7}
    &\text{find} \quad& g\in L^1(\R)\\
    &\text{such that}\quad& \E_{z\sim \gaus_1}[p(z)g(z)] &= 0\; ,& \forall p&\in P_n\\
    &      \quad&\E_{z\sim \gaus_1}[g(z)h(z)\1\{t\geq c\}] & \geq \|h(z) \1\{z\geq c\}\|_1\; , & \quad \forall h&\in L_+^1(\R)                \\
    &      \quad&\E_{z\sim \gaus_1}[g(z)H(z)]&\leq \|H\|_1\; ,   &\quad \forall H&\in L^1(\R)
\end{alignat*}
Then, using (infinite-dimensional) LP duality, 
we get that the above primal is feasible if and only if 
there is no polynomial of degree $n$ such that 
$\E_{t\sim \gaus_1}[|p(t)|\mathds{1}(t\leq c)]<\E_{t\sim \gaus_1}[p(t)\mathds{1}(t\geq c)]$.

 Using Gaussian hypercontractivity \new{(see \cite{Bog:98, nelson1973free})}, one can show that for every polynomial $p$ and $c\in \R$, it holds 
\[\E_{z\sim \gaus_1}[|p(z)|]
	<2\cdot 3^n\E_{z\sim \gaus_1}[|p(z)|]\left( \pr_{z\sim \gaus_1}[z\geq c]\right )^{1/2}\;.\]
From our choice of the parameter $c$, we have that $\pr_{z\sim \gaus_1}[z\geq c]\leq 3^{-2n}/4$; thus, $\E_{z\sim \gaus_1}[|p(z)|]<\E_{z\sim \gaus_1}[|p(z)|]$, which is a
contradiction.
Therefore, such a polynomial $p$ cannot exist. 
\end{proof}

We have proven the existence of the function $g$ in \Cref{lem:polynomial}.
Now, we construct a joint distribution of $(z,y)$ on $\R\times\{\pm 1\}$ such that $\E[y|z]$ is exactly $g$, as we discussed in the proof outline.
For a joint distribution $D$ of $(x,y)$ supported on $X \times \{\pm 1\}$, 
we will use $D_+$ to denote the conditional distribution of $x$ given $y=1$; 
and $D_-$ for the distribution of $x$ given $y=-1$.

\begin{lemma} \label{lem:distribution}
For any sufficiently large $n\in \N$, there exists a distribution $D$ on $\R\times\{\pm 1\}$ such that 
for $(z,y)\sim D$: 
\begin{itemize}[leftmargin=7mm]
	\item [(i)] the marginal distribution $D_z=\gaus_1$;
	
	\item [(ii)]$\E_{(z,y)\sim D}[y|z=z']=1$ for all $z'\geq \Phi^{-1}(1-3^{-2n}/4)$; 

	\item [(iii)]$\E_{(z,y)\sim D}[y]=0$ and 
	$\E_{(z,y)\sim D}[z^k]=\E_{(z,y)\sim D}[z^k\mid y=1]=\E_{(z,y)\sim D}[z^k\mid y=-1]$  for all $k\in [n]$;

	\item [(iv)]$\chi^2(D_+,\gaus_{1}), \chi^2(D_-,\gaus_{1})=O(1)$.
\end{itemize}
\end{lemma}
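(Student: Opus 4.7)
The plan is to read off the distribution $D$ directly from the function $g$ supplied by \Cref{lem:polynomial}. Concretely, I would sample $z\sim \gaus_1$ for the marginal, and then conditionally on $z$ set $y=1$ with probability $(1+g(z))/2$ and $y=-1$ with probability $(1-g(z))/2$. Since $|g(z)|\le 1$, these are legitimate probabilities and $\E[y\mid z]=g(z)$ holds by construction. Property (i) is then immediate, and property (ii) follows because $g(z)=1$ for every $z\ge c=\Phi^{-1}(1-3^{-2n}/4)$.

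For property (iii), I would first invoke the orthogonality clause of \Cref{lem:polynomial} with the constant polynomial $p\equiv 1$ (which lies in $P_n$, as used in the primal LP of the previous proof) to obtain $\E_{z\sim\gaus_1}[g(z)]=0$. Consequently, $\E[y]=\E[g(z)]=0$ and $\pr[y=1]=\E[(1+g(z))/2]=1/2$. For any $k\in[n]$, compute
\[
\E[z^k\1\{y=1\}] \;=\; \E_{z\sim\gaus_1}\!\left[z^k\,\frac{1+g(z)}{2}\right] \;=\; \frac{1}{2}\E[z^k]\;,
\]
using $\E_{z\sim\gaus_1}[z^k g(z)]=0$. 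Dividing by $\pr[y=1]=1/2$ yields $\E[z^k\mid y=1]=\E[z^k]$, and the identical computation with $1-g(z)$ gives $\E[z^k\mid y=-1]=\E[z^k]$.

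For property (iv), the Radon--Nikodym derivative of $D_+$ with respect to $\gaus_1$ is $1+g(z)$, since $\pr[y=1]=1/2$. Therefore
\[
\chi^2(D_+,\gaus_1) \;=\; \E_{z\sim\gaus_1}\big[(1+g(z))^2\big]-1 \;\le\; 4-1 \;=\; 3\;,
\]
using $|g|\le 1$ pointwise; the bound on $\chi^2(D_-,\gaus_1)$ is obtained identically by replacing $1+g(z)$ with $1-g(z)$.

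The heavy lifting has already been absorbed into \Cref{lem:polynomial}; once $g$ exists, verifying (i)--(iv) is essentially mechanical. The only subtle point is reading property (ii) of $g$ broadly enough to apply to the constant polynomial, which delivers $\E[g(z)]=0$ and hence that both classes have mass exactly $1/2$; granted that, every remaining requirement reduces to the orthogonality identities $\E_{z\sim\gaus_1}[z^k g(z)]=0$ and the pointwise bound $|g|\le 1$.
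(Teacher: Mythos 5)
Your proof is correct and follows essentially the same route as the paper: both define $D$ as the unique distribution with Gaussian marginal and $\E[y\mid z]=g(z)$, deduce (ii) from $g\equiv 1$ on $[c,\infty)$, deduce (iii) from the orthogonality of $g$ to all of $P_n$ (including the constant polynomial, which the paper's LP indeed enforces even though the lemma statement only lists $k\in[n]$ --- you were right to flag this), and bound (iv) via the density ratio. The only cosmetic difference is in (iv), where you compute $\E_{z\sim\gaus_1}[(1\pm g(z))^2]-1\le 3$ directly rather than using the paper's bound $P_{D_\pm}(z)/P_{\gaus_1}(z)\le 1/\pr[y=\pm 1]=2$; both are immediate.
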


\begin{proof} [Proof Sketch for \Cref{lem:distribution}]
    The properties here directly follow from the properties of $g$.
\end{proof}

Using the framework introduced in \cite{DKS17-sq, DKPZ21}, 
we can construct a set of alternative hypothesis distributions 
$\D=\{D_\bv:\bv\in V\}$ on $\R^d\times \{\pm 1\}$, 
where $V$ is a set of exponentially many pairwise nearly-orthogonal vectors 
and the marginal distribution of each $D_{\bv}$ on direction $\bv$ is the distribution 
$D$ in \Cref{lem:distribution}.  
This effectively embeds the distribution $D$ in a hidden direction $\bv$.
The size of the family $\D$ is exponential in $d$, 
and the distributions in it have small pairwise correlations. 
The details of $\D$ are deferred to \Cref{app:lb}.
Now, we are ready to give a proof sketch for the SQ hardness 
part of our main theorem \Cref{thm:algor-and-lb}.

\begin{proof} [Proof Sketch of the SQ hardness part of \Cref{thm:algor-and-lb}]
Let $\D$ be the set of distributions discussed above. 
We also let $D_\nul$ be the joint distribution of $(\x,y)$ such that 
$\x\sim \gaus_d$ and $y\sim \mathrm{Bern}(1/2)$ independent of $\x$. 
Then, using standard SQ dimension techniques, one can show that 
any SQ algorithm that solves $\mathcal{B}(\D,D_\nul)$ requires either 
queries of tolerance at most $d^{-\Omega(\log\frac{1}{\alpha})}$ or 
makes at least $2^{d^{\Omega(1)}}$ queries.
By reducing the decision problem $\mathcal{B}(\D,D_\nul)$ 
to reliably learning $\alpha$-biased LTFs 
with $\epsilon<\alpha/3$ accuracy, we get the lower bound part 
of the statement in \Cref{thm:algor-and-lb}.
\end{proof}

\newpage

\section{Conclusions and Open Problems} \label{sec:concl}
In this paper, we study the problem of learning halfspaces under 
Gaussian marginals in the reliable learning model. Our main 
contribution is the design of the first efficient learner for this 
task whose complexity beats the complexity of agnostically 
learning this concept class. Moreover, we provide rigorous evidence, via an SQ lower bound, that no 
fully-polynomial time algorithm exists for general halfspaces. The 
obvious open question is whether the dependence on $\eps$ in the 
complexity of our algorithm  can be improved. Specifically, is it 
possible to design a reliable learner with complexity 
$d^{O(\log(1/\alpha))} \poly(1/\epsilon)$? Is it possible to 
obtain similarly efficient reliable 
learners under more general marginal distributions (e.g., strongly 
log-concave or discrete distributions)?
More broadly, it would be interesting to characterize the 
computational separation between (distribution-specific)  
reliable and agnostic learning for other 
natural concept classes.

\bibliographystyle{alpha}
\bibliography{clean2}

\newpage

\appendix

\section*{Appendix}

\paragraph{Organization} The supplementary material is structured as follows:
\Cref{app:prelims} includes additional preliminaries. 
\Cref{app:alg} includes the omitted proofs from \Cref{sec:alg}. 
Finally, \Cref{app:lb} includes the omitted proofs from \Cref{sec:lb}.

\section{Additional Preliminaries} \label{app:prelims}

We use $\mathbb{S}^{d-1}$ to denote the $d$-dimensional unit sphere, i.e., $\mathbb{S}^{d-1}\eqdef\{\x\in\R^d:\|\x\|_2=1 \}$.

\paragraph{Basics of the SQ Model}
In order to give an SQ lower bound for reliable learning, we first present the basics of the SQ model. 
We define the pairwise correlation, which is used for the SQ lower bound.

\begin{definition} [Pairwise Correlation]
The pairwise correlation of two distributions with probability density function $D_1, D_2:\R^d\mapsto \R_+$ 
with respect to a distribution with density $D:\R^d\mapsto \R_+$, 
where the support of $D$ contains the support of $D_1$ and $D_2$, 
is defined as $\chi_D(D_1,D_2)\eqdef \int_{\R^d}D_1(\x)D_2(\x)/D(\x)d\x-1$. 
Furthermore, the $\chi$-squared divergence of $D_1$ to $D$ is defined as
$\chi^2(D_1,D)\eqdef \chi_D(D_1,D_1)$.
\end{definition}

We require the following definitions and lemmata from~\cite{FeldmanGRVX17} 
connecting pairwise correlation and SQ lower bounds.

\begin{definition}
We say that a set of $s$ distribution $\D=\{D_1,\cdots,D_s\}$ over $\R^d$ is 
$(\gamma,\beta)$-correlated relative to a distribution $D$ if $\chi_D(D_i,D_j)\leq  \gamma$ for all $i\neq j$, 
and $\chi_D(D_i,D_j)\leq  \beta$ for $i=j$.
\end{definition}

\begin{definition} [Statistical Query Dimension]
For $\beta,\gamma>0$, a decision problem $\mathcal{B}(\D,D)$, where $D$ is a fixed distribution and 
$\D$ is a family of distribution, let $s$ be the maximum integer such that
there exists a finite set of distributions $\D_D\subseteq\D$ such that 
$\D_D$ is $(\gamma,\beta)$-correlated relative to $D$ and $|\D_D|\geq s$. 
The Statistical Query dimension with pairwise correlations $(\gamma,\beta)$ of $\mathcal{B}$ is defined to be $s$, 
and denoted by $s=\mathrm{SD}(\mathcal{B},\gamma,\beta)$.
\end{definition}

\begin{lemma}\label{lem:sq-lb}
Let $\mathcal{B}(\D,D)$ be a decision problem, where $D$ is 
the reference distribution and $\D$ is a class of distribution. For $\gamma,\beta>0$, 
let $s=\mathrm{SD}(\mathcal{B},\gamma,\beta)$. 
For any $\gamma'>0$, any SQ algorithm for $\mathcal{B}$ requires queries of tolerance 
at most $\sqrt{\gamma+\gamma'}$ or makes at least $s\gamma'/(\beta-\gamma)$ queries.
\end{lemma}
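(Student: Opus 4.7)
The plan is to follow the standard SQ dimension argument: a $(\gamma,\beta)$-correlated family of $s$ distributions forces any SQ algorithm to either use high-tolerance queries or make many queries. Let $\{D_1,\dots,D_s\}\subseteq\mathcal{D}$ be a $(\gamma,\beta)$-correlated set relative to $D$ witnessing the SQ dimension, and work in the Hilbert space $L^2(D)$ with inner product $\langle f,g\rangle_D=\E_D[fg]$. For each $i$, set $f_i := D_i/D - 1$, so that $\E_D[f_i]=0$, $\|f_i\|_D^2=\chi^2(D_i,D)\leq\beta$, and $\langle f_i,f_j\rangle_D=\chi_D(D_i,D_j)$, which is bounded by $\gamma$ in absolute value for $i\neq j$ (the standard reading of the correlation definition). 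For any bounded query $\phi:\R^d\to[-1,1]$, the gap $a_i := \E_{D_i}[\phi]-\E_D[\phi]$ equals $\langle\phi,f_i\rangle_D$, and $\|\phi\|_D\leq 1$.

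The main quantitative step bounds $|T|$, where $T=\{i:|a_i|>\tau\}$ for the threshold $\tau:=\sqrt{\gamma+\gamma'}$. Taking $c_i=\mathrm{sign}(a_i)$ for $i\in T$ and $c_i=0$ otherwise, Cauchy--Schwarz yields
\[
|T|\,\tau \;\leq\; \sum_{i\in T}|a_i| \;=\; \Bigl\langle \phi,\, \textstyle\sum_i c_i f_i\Bigr\rangle_D \;\leq\; \Bigl\|\textstyle\sum_i c_i f_i\Bigr\|_D\;,
\]
and expanding the right-hand norm using the bounds on $\langle f_i,f_j\rangle_D$ gives
\[
\Bigl\|\textstyle\sum_i c_i f_i\Bigr\|_D^2 \;=\; \sum_{i,j\in T}c_ic_j\langle f_i,f_j\rangle_D \;\leq\; \beta\,|T| + \gamma\,|T|(|T|-1) \;\leq\; (\beta-\gamma)|T| + \gamma\,|T|^2\;.
\]
Squaring the previous inequality and dividing through by $|T|$ gives $|T|\,\tau^2\leq(\beta-\gamma)+\gamma\,|T|$, i.e., $|T|\leq(\beta-\gamma)/(\tau^2-\gamma)=(\beta-\gamma)/\gamma'$.

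To conclude, any SQ solver for $\mathcal{B}(\mathcal{D},D)$ must, for each alternative $D_i$ in the correlated family, issue at least one query whose true value under $D_i$ lies outside the tolerance interval around some response the oracle may return when the input is $D$; equivalently, some issued query $\phi$ with tolerance $\leq\tau$ must have $|a_i|>\tau$. By the bound just established, any single such query distinguishes at most $(\beta-\gamma)/\gamma'$ of the $s$ alternatives from $D$, so the algorithm needs at least $s\gamma'/(\beta-\gamma)$ queries, which is exactly the claimed lower bound.

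The main technical point to get right is the choice of the test vector $c$ in the Gram expansion: choosing $c_i=\mathrm{sign}(a_i)$ (rather than $c_i=a_i$) cleanly separates the diagonal contribution $\beta|T|$ from the off-diagonal contribution $\gamma|T|^2$, yielding the tight factor $(\beta-\gamma)/\gamma'$ rather than a weaker estimate involving $\bigl(\textstyle\sum a_i\bigr)^2$. A minor subtlety is that $\chi_D(D_i,D_j)$ must be controlled in absolute value for the off-diagonal estimate to hold irrespective of the signs of the $c_i$'s; this is the usual convention in the SQ-dimension literature and is what the lemma tacitly assumes.
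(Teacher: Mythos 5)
Your proof is correct, but note that the paper itself does not prove this lemma at all: it is imported verbatim from \cite{FeldmanGRVX17} (it is stated in the appendix immediately after the definition of $\mathrm{SD}$, with no proof), so there is no in-paper argument to compare against. What you have written is a faithful reconstruction of the standard Feldman et al.\ argument: embed the alternatives as the centered likelihood ratios $f_i = D_i/D - 1$ in $L^2(D)$, observe that a query's ``advantage'' $a_i = \E_{D_i}[\phi]-\E_D[\phi]$ is the inner product $\langle \phi, f_i\rangle_D$, and bound the number of indices with $|a_i|>\tau$ via Cauchy--Schwarz against the signed sum $\sum_i \sgn(a_i) f_i$, whose squared norm splits into a diagonal part $\leq \beta|T|$ and an off-diagonal part $\leq \gamma|T|(|T|-1)$; this yields $|T|\leq(\beta-\gamma)/\gamma'$ exactly as you derive, and the adversary/simulation argument (answer every query with $\E_D[\phi]$, so that any undistinguished $D_i$ produces a transcript identical to that of $D$) converts this into the query lower bound $s\gamma'/(\beta-\gamma)$. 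Your two flagged subtleties are the right ones: the off-diagonal bound needs $|\chi_D(D_i,D_j)|\leq\gamma$ rather than the one-sided inequality as literally written in the paper's definition (this is indeed the convention in \cite{FeldmanGRVX17}), and the choice $c_i=\sgn(a_i)$ rather than $c_i=a_i$ is what gives the clean $(\beta-\gamma)/\gamma'$ bound. No gaps.
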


\paragraph{Basics of Hermite Polynomials}
We require the following definitions used in our work. 

\begin{definition}[Normalized Hermite Polynomial]\label{def:Hermite-poly}
For $k\in\N$,
we define the $k$-th \emph{probabilist's} Hermite polynomials
$\mathrm{\textit{He}}_k:\R\to \R$
as
$\mathrm{\textit{He}}_k(t)=(-1)^k e^{t^2/2}\cdot\frac{\d^k}{\d t^k}e^{-t^2/2}$.
We define the $k$-th \emph{normalized} Hermite polynomial 
$h_k:\R\to \R$
as
$h_k(t)=\mathrm{\textit{He}}_k(t)/\sqrt{k!}$.
\end{definition}

Furthermore, we use multivariate Hermite polynomials in the form of Hermite tensors (as the entries in the Hermite tensors are re-scaled multivariate Hermite polynomials).
We define the \emph{Hermite tensor} as follows.
\begin{definition}[Hermite Tensor]\label{def:Hermite-tensor-app}
For $k\in \N$ and $\bx\in\R^d$, we define the $k$-th Hermite tensor as
\[
(\bH_k(\bx))_{i_1,i_2,\ldots,i_k}=\frac{1}{\sqrt{k!}}\sum_{\substack{\text{Partitions $P$ of $[k]$}\\ \text{into sets of size 1 and 2}}}\bigotimes_{\{a,b\}\in P}(-\bI_{i_a,i_b})\bigotimes_{\{c\}\in P}\bx_{i_c}\; .
\]
\end{definition}

We also point out that fully reliable learning (see \cite{KKM2012} for the definition) reduces to one-sided reliable learning (\Cref{def:reliable-learning}). The proof is 
implicit in the proof of Theorem 3 in \cite{KKM2012} and we include it here for completeness. 
\begin{fact}
    For a joint distribution $D$ of $(\x,y)$ supported on $X\times \{\pm 1\}$,
    let $h_+:X\to \{\pm 1\}$ (resp. $h_-:X\to \{\pm 1\}$) be a hypothesis that is $\eps/4$-reliable with respect to the concept class $C$ on distribution $D$ for positive labels (resp. for negative labels).
    Let hypothesis $h$ be defined as $h(\bx)=1$ if $h_+(\bx)=h_-(\bx)=1$, $h(\bx)=-1$ if $h_+(\bx)= h_-(\bx)=-1$ and $h(\bx)=?$ otherwise.
    Then $h$ is $\eps$ close to the best fully reliable sandwich hypothesis for $C$ on distribution $D$.
\end{fact}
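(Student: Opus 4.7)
The plan is to verify two things: that $h$ is itself a valid fully reliable abstaining hypothesis with small error rates, and that its abstention probability exceeds that of the best sandwich hypothesis by at most $\eps$. The first part is immediate from the one-sided guarantees: since $h(\bx)=1$ forces $h_+(\bx)=1$, we have $\pr[h(\bx)=1 \land y=-1] \leq R_+(h_+; D) \leq \eps/4$, and symmetrically $\pr[h(\bx)=-1 \land y=1] \leq R_-(h_-; D) \leq \eps/4$. Both error probabilities are comfortably below $\eps$.

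The main step is the abstention bound. Let $(f_+^*, f_-^*) \in C \times C$ be the optimal sandwich pair underlying the best fully reliable hypothesis $h^*$, so that $R_+(f_+^*; D) = 0$, $R_-(f_-^*; D) = 0$, and $h^*$ abstains exactly on $\{f_+^* = -1 \land f_-^* = 1\}$. I would decompose $\{h = ?\} = \{h_+=1, h_-=-1\} \cup \{h_+=-1, h_-=1\}$ and split each event by the value of $y$. The first piece is controlled by the error guarantees directly:
\[
\pr[h_+(\bx)=1 \land h_-(\bx)=-1] \leq R_+(h_+; D) + R_-(h_-; D) \leq \eps/2.
\]
For the second piece, I would use the remaining reliability guarantees $R_-(h_+; D) \leq R_-(f_+^*; D) + \eps/4$ and $R_+(h_-; D) \leq R_+(f_-^*; D) + \eps/4$ (these are the ``competitive'' clauses, since $f_+^*$ and $f_-^*$ are admissible competitors for the two one-sided learners by their respective zero-error constraints), yielding
\[
\pr[h_+(\bx)=-1 \land h_-(\bx)=1] \leq R_-(f_+^*; D) + R_+(f_-^*; D) + \eps/2.
\]

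The conceptual crux is to identify $R_-(f_+^*; D) + R_+(f_-^*; D)$ with $\pr[h^* = ?]$. This is a short but essential calculation exploiting the zero-error constraints: I would decompose $\pr[f_+^* = -1 \land f_-^* = 1]$ by the value of $y$, and observe that $R_-(f_-^*; D)=0$ forces $\pr[f_-^* = -1 \land y=1]=0$, so the $y=1$ contribution collapses to $\pr[f_+^*=-1 \land y=1] = R_-(f_+^*; D)$; symmetrically, the $y=-1$ contribution collapses to $R_+(f_-^*; D)$. Combining this identity with the two abstention pieces gives $\pr[h(\bx)=?] \leq \pr[h^*(\bx)=?] + \eps$, which together with the earlier error bounds certifies that $h$ is $\eps$-close to the best fully reliable sandwich hypothesis. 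I do not anticipate a real obstacle; the only subtle point is precisely this identity, which is what makes the two one-sided reliable learners combine neatly into a fully reliable one.
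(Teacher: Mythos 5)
Your proof is correct, but it takes a genuinely different route from the paper's. You argue directly: you decompose $\{h=?\}$ into $\{h_+=1,h_-=-1\}$ (bounded by $R_+(h_+;D)+R_-(h_-;D)\leq\eps/2$ after splitting on $y$) and $\{h_+=-1,h_-=1\}$ (bounded by $R_-(h_+;D)+R_+(h_-;D)$, then by $R_-(f_+^*;D)+R_+(f_-^*;D)+\eps/2$ via the competitive clauses, which apply since $f_+^*$ and $f_-^*$ are admissible competitors by their zero-error constraints), and you close the argument with the identity $\pr[h^*=?]=R_-(f_+^*;D)+R_+(f_-^*;D)$, which indeed follows from the two zero-error constraints exactly as you describe. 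The paper instead argues by contradiction: it supposes some $h'$ with $R_+(h';D),R_-(h';D)\leq\eps$ abstains with probability more than $\eps$ below $\pr[h=?]$, relates $\pr[h=?]$ to $1-\pr[h_+=1]-\pr[h_-=-1]$ up to $\eps/2$, splits $h'$ into its induced one-sided hypotheses $h_+',h_-'$, and concludes that one of $h_+,h_-$ must be more than $\eps/4$-suboptimal, contradicting their reliability. Your direct argument is cleaner in that it never needs to manipulate an arbitrary competitor $h'$ and makes the role of the sandwich pair explicit; the paper's contradiction argument has the (slight) advantage of comparing against any abstaining hypothesis with small one-sided errors, not only those induced by a sandwich pair in $C\times C$, though for the stated fact (closeness to the best \emph{sandwich} hypothesis) your benchmark is exactly the right one.
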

\begin{proof} [Proof Sketch]
    We first show that $R_+(h;D), R_-(h;D)\leq \eps$.
    Notice that $$R_+(h;D)=\pr_{(\bx,y)\sim D}[h(\bx)=1\land y=-1]\leq \pr_{(\bx,y)\sim D}[h_+(\bx)=1\land y=-1]\leq \eps \;.$$
    The same holds for negative labels.

    To show that $\pr_{(\bx,y)\sim D}[h(\bx)=?]$ is $\eps$-suboptimal,
    assume (for the purpose of contradiction) that 
    there is an $h'$ such that 
    $R_+(h';D), R_-(h';D)\leq \eps$ and $\pr_{(\bx,y)\sim D}[h'(\bx)=?]<\pr_{(\bx,y)\sim D}[h(\bx)=?]-\eps$.
    Notice that 
    \begin{align*} 
      &\pr_{(\bx,y)\sim D}[h(\bx)=?]\\
    = &\pr_{(\bx,y)\sim D}[h_+(\bx)=1\land h_-(\bx)=-1]+\pr_{(\bx,y)\sim D}[h_+(\bx)=-1\land h_-(\bx)=1]\\
    \leq & 1-\pr_{(\bx,y)\sim D}[h_+(\bx)=1]-\pr_{(\bx,y)\sim D}[h_-(\bx)=-1]+\pr_{(\bx,y)\sim D}[h_+(\bx)=1\land h_-(\bx)=-1]\\
    \leq & 1-\pr_{(\bx,y)\sim D}[h_+(\bx)=1]-\pr_{(\bx,y)\sim D}[h_-(\bx)=-1]+\eps/2\; . 
    \end{align*}
    Thus,
    \begin{equation} \label{eq:full}
        \pr_{(\bx,y)\sim D}[h_+(\bx)=1]+\pr_{(\bx,y)\sim D}[h_-(\bx)=-1]\leq 
        1-\pr_{(\bx,y)\sim D}[h(\bx)=?]+\eps/2\; .
    \end{equation}
    Then, define $h_+':X\to \{\pm 1 \} $ as $h_+'(\bx)=1$ if $h'(\bx)=1$ and $h_+'(\bx)=-1$ otherwise ($h_-'$ is defined similarly).
    We get $\pr_{(\bx,y)\sim D}[h'_+(\bx)=1]+\pr_{(\bx,y)\sim D}[h'_-(\bx)=-1]=1-\pr_{(\bx,y)\sim D}[h'(\bx)=?]$.
    Using \Cref{eq:full} and $\pr_{(\bx,y)\sim D}[h'(\bx)=?]<\pr_{(\bx,y)\sim D}[h(\bx)=?]-\eps$, we get 
    \[
    \pr_{(\bx,y)\sim D}[h_+(\bx)=1]-\pr_{(\bx,y)\sim D}[h_-(\bx)=-1]< \pr_{(\bx,y)\sim D}[h'_+(\bx)=1]-\pr_{(\bx,y)\sim D}[h'_-(\bx)=-1]-\eps/2\; .
    \]
    Therefore, either $\pr_{(\bx,y)\sim D}[h_+(\bx)=1]<\pr_{(\bx,y)\sim D}[h'_+(\bx)=1]-\eps/4$ or $$\pr_{(\bx,y)\sim D}[h_-(\bx)=-1]<\pr_{(\bx,y)\sim D}[h'_-(\bx)=-1]-\eps/4 \;.$$
    Thus, either $h_+$ or $h_-$ is not $\eps/4$-suboptimal. This gives a contradiction.
\end{proof}

\section{Omitted Proofs from \Cref{sec:alg}} \label{app:alg}

We define the bias of a function 
$h:\R^d\to \{\pm 1\}$ as $\min(\pr_{\bx\sim \gaus_d}[h(\bx)=1],\pr_{\bx\sim \gaus_d}[h(\bx)=-1])$,
and define $\mathcal{H}_d^\alpha$ to be the set of all LTFs whose bias is $\alpha$.
Let $f=\argmin_{c\in \mathcal{H}_d^\alpha\; \land\; R_+(f;D)=0} R_-(f;D)$ be the optimal reliable hypothesis and $\alpha$ be the bias of $f$.

\subsection{Reliably Learning Halfspaces with Gaussian Marginals when $\eps\geq 2\alpha$}

We show that for the problem in \Cref{thm:algor-and-lb}, in the case of $\eps\geq 2\alpha$, the algorithm can just return one of the constant hypotheses. Let $h_+$ (resp. $h_-$) be the $+1$ (resp. $-1$) constant hypothesis.
\begin{fact}
For the problem defined in \Cref{thm:algor-and-lb},
the case where $\eps\geq 2\alpha$ can be efficiently solved
by returning $h_+$ if $R_+(h_+;D)\leq 3\eps/4$ and returning $h_-$ otherwise. 
\end{fact}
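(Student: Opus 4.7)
The plan is a direct case analysis on which constant hypothesis the algorithm returns, leveraging the identities $R_-(h_+;D)=0$, $R_+(h_+;D)=\pr_{(\bx,y)\sim D}[y=-1]$, and $R_+(h_-;D)=0$, $R_-(h_-;D)=\pr_{(\bx,y)\sim D}[y=1]$. In either output, one of the two reliability inequalities holds trivially, so the work reduces to verifying the other.

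If the algorithm returns $h_+$, the decision rule already yields $R_+(h_+;D)\leq 3\eps/4\leq \eps$, while $R_-(h_+;D)=0\leq \opt+\eps$ is automatic; so $h_+$ is $\eps$-reliable without any further argument.

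The nontrivial case is when the algorithm returns $h_-$, which happens when $\pr[y=-1]>3\eps/4$. Here I would case on the optimal reliable hypothesis $f^*\in \mathcal{G}(D)$. If $f^*=h_-$ then $\opt=\pr[y=1]=R_-(h_-;D)$ and we are done. Otherwise $f^*$ is a non-constant halfspace of bias $\alpha$, so either $\pr[f^*(\bx)=1]=1-\alpha$ or $\pr[f^*(\bx)=1]=\alpha$. The first sub-case is ruled out by the assumption $\eps\geq 2\alpha$: since $R_+(f^*;D)=0$, every negatively-labeled example lies in $\{f^*(\bx)=-1\}$, forcing $\pr[y=-1]\leq \pr[f^*(\bx)=-1]=\alpha \leq \eps/2<3\eps/4$, a contradiction. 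In the remaining sub-case $\pr[f^*(\bx)=1]=\alpha$, the zero-false-positive constraint $R_+(f^*;D)=0$ gives $\pr[f^*(\bx)=1\land y=1]=\pr[f^*(\bx)=1]=\alpha$, hence $\opt=R_-(f^*;D)=\pr[y=1]-\alpha$, so $R_-(h_-;D)-\opt=\alpha\leq \eps/2\leq \eps$. This completes the reliability check for $h_-$.

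The last detail is a sample-complexity slack: the check $R_+(h_+;D)\leq 3\eps/4$ should be implemented by thresholding a Chernoff estimate of $\pr[y=-1]$ at $3\eps/4$, using $O(\eps^{-2}\log(1/\delta))$ samples to get additive accuracy $\eps/8$. With this slack, outputting $h_+$ certifies $R_+(h_+;D)\leq 7\eps/8\leq \eps$, and outputting $h_-$ certifies $\pr[y=-1]>5\eps/8>\alpha$ (again using $\eps\geq 2\alpha$), so the above case analysis still applies. The proof is essentially all structural and contains no real obstacle; the main thing to be careful about is simultaneously balancing the $3\eps/4$ threshold with the $\eps\geq 2\alpha$ hypothesis to rule out the $1-\alpha$ sub-case above.
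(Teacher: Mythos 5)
Your proof is correct and follows essentially the same route as the paper's: the $h_+$ branch is immediate, and in the $h_-$ branch you use $\pr_{(\bx,y)\sim D}[y=-1]>3\eps/4>\alpha$ (via $\eps\geq 2\alpha$) to force the optimal zero-false-positive halfspace to satisfy $\pr[f^*(\bx)=-1]=1-\alpha$, so that $R_-(h_-;D)\leq \opt+\alpha\leq \opt+\eps$. Your added Chernoff-slack remark is a harmless refinement of the same argument.
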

\begin{proof}
When the algorithm returns $h_+$, it is easy to see that $h_+$ satisfies the reliable learning requirement in \Cref{def:reliable-learning}. So we only consider the case that the algorithm returns $h_-$.

Given the algorithm returns $h_-$, either 
there is an $\alpha$-biased LTF $f$ such that $R_+(f;D)=0$ or
none of the $\alpha$-biased LTFs $f$ satisfies $R_+(f;D)=0$.
For the case that none of the $\alpha$-biased LTFs $f$ satisfies $R_+(f;D)=0$, it is easy to see $h_-$ is a valid answer from the definition of \Cref{def:reliable-learning}.
For the case that there is an $\alpha$-biased LTF $f$ such that $R_+(f,D)=0$,
since the algorithm did not return $h_+$,
it must be the case that $\pr_{(\bx,y)\sim D}[y=-1]>3\eps/4>\alpha$.
Therefore, we have $\pr_{\bx\sim \gaus_d} [f(\bx)=-1]=1-\alpha$.
Thus, we get $R_+(h_-,D)=0$ and $R_-(h_-,D)\leq R_-(f,D)+\alpha$.
This shows $h_-$ is $\eps$ reliable with respect to all $\alpha$ biased LTFs.
This completes the proof.
\end{proof}

\subsection{Reliably Learning Halfspaces with Gaussian Marginals: Case of Unknown $\alpha$}

Here we establish our main result:

\begin{theorem}[Main Algorithmic Result] \label{main-algorithm-known-alpha}
  Let $\eps\in (0,1/2)$ and let $D$ be a joint distribution of $(\bx,y)$ supported on $\R^d\times \{ \pm 1\}$ 
  with marginal $D_{\bx}=\gaus_d$. 
  Let $f=\argmin_{f\in \mathcal{H}_d\; \land\; R_+(f;D)=0} R_-(f;D)$ be the optimal halfspace and $\alpha$ be its bias which is unknown.  
  Then there is an algorithm that 
  uses $N=d^{O(\log (\min\{1/\alpha,1/\eps\}))}\min\left (2^{\log(1/\eps)^{O(\log (1/\alpha))}},2^{\poly(1/\eps)}\right )$ many samples 
  from $D$, runs in
  $\poly(N,d,1/\eps)$ time and with high probability returns a hypothesis 
  $h(\bx)=\sign(\langle \bw,\bx\rangle-t)$ that is $\eps$-reliable with respect to the class of LTFs.    
\end{theorem}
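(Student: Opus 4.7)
My plan is to reduce the unknown-$\alpha$ case to the known-bias procedure of \Cref{sec:alg} (underlying \Cref{thm:algor-and-lb}) by grid-searching over candidate values of $\alpha$ and validating on a held-out sample. The regime $\alpha \leq \eps$ is handled trivially: since $R_+(f;D)=0$ forces $R_-(-1;D) - R_-(f;D) = \pr_{\bx \sim \gaus_d}[f(\bx)=1] = \alpha \leq \eps$, the constant hypothesis $h \equiv -1$ is already $\eps$-reliable, so I always include $h \equiv -1$ as a default candidate. For $\alpha > \eps$, I invoke the known-bias algorithm with accuracy parameter $\eps/8$ (and both threshold signs) in two phases, as described next.

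In a coarse localization phase, I iterate $\alpha_i = 2^{-i}$ for $i = 1, 2, \ldots$, empirically test the returned hypothesis $h_i$ for $R_+(h_i;D) \leq \eps/4$ on a fresh sample, and stop at the first $\hat{i}$ passing the test. The key identity is that for any $\alpha_i$-biased halfspace $h$ we have $R_-(h;D) = \pr[y=1] - \alpha_i + R_+(h;D)$; combined with $R_-(f;D) = \pr[y=1] - \alpha$, this gives $R_-(h_{\hat{i}};D) \leq R_-(f;D) + \eps/4$ whenever $\alpha_{\hat{i}} \geq \alpha$ and the test passes. On the other hand, when $\alpha_i \leq \alpha$ the ``shrunken'' halfspace obtained from $f$ by replacing its threshold with $\pm\Phi^{-1}(1-\alpha_i)$ is a genuine $\alpha_i$-biased halfspace with $R_+ = 0$, so the known-bias guarantee produces $h_i$ with $R_+(h_i) \leq \eps/8$ (guaranteeing $\hat{i}$ is reached with $2^{-\hat{i}} \in [\alpha/2, 2\alpha]$). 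In a refinement phase, I run the known-bias algorithm on each $\alpha_i' \in \{2^{-\hat{i}} + k\eps/8 : 0 \leq k \leq \lceil 8 \cdot 2^{-\hat{i}}/\eps \rceil\}$, which guarantees some $\alpha_{i^*}' \in [\alpha - \eps/8, \alpha]$ is tried; for that guess the shrunken witness has $R_- \leq R_-(f;D) + \eps/8$, so the algorithm returns $h_{i^*}$ with $R_+(h_{i^*}) \leq \eps/8$ and $R_-(h_{i^*}) \leq R_-(f;D) + \eps/4$.

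Finally, I evaluate all $O(1/\eps)$ candidates on a fresh validation sample of $\poly(1/\eps)$ points---Hoeffding plus a union bound gives uniform $\eps/8$-accurate estimates of $R_+$ and $R_-$---and return the candidate minimizing empirical $R_-$ subject to empirical $R_+ \leq 3\eps/4$; the returned hypothesis then satisfies $R_+(h;D) \leq \eps$ and $R_-(h;D) \leq R_-(f;D) + \eps$. The dominant sample and time cost is the single known-bias call at $\alpha_i = \Theta(\alpha)$, giving the advertised $d^{O(\log(1/\alpha))}$ dependence, while the $O(1/\eps)$-fold overhead from refinement and validation is absorbed into the $2^{\poly(1/\eps)}$ factor already present in the stated bound. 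The main point requiring care is justifying the coarse stopping rule on both branches: when $\alpha_{\hat{i}} > \alpha$ the bias identity already yields $R_-(h_{\hat{i}};D) \leq R_-(f;D) + \eps/4$ directly (so coarse alone suffices), whereas when $\alpha_{\hat{i}} \leq \alpha$ refinement provides the matching $R_-$ guarantee because $2^{-\hat{i}} \geq \alpha/2$ forces the refinement grid to cover an additive $\eps/8$-window around $\alpha$.
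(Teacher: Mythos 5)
Your guess-and-validate strategy for the unknown-bias reduction is, in the regime where the optimal halfspace is \emph{minority-positive} (i.e., $\pr_{\bx\sim\gaus_d}[f(\bx)=1]=\alpha$, equivalently $t^*\geq 0$), essentially a doubling-search variant of what the paper does (the paper sweeps $\alpha$ additively downward from $1/2-\eps/100$ and stops at the first guess whose output passes the $R_+$ test), and your analysis of that case is sound. The gap is the complementary case, where the optimal halfspace is \emph{majority-positive}: $\pr_{\bx}[f(\bx)=1]=1-\alpha$ and $\pr_{\bx}[f(\bx)=-1]=\alpha$. There, (a) your key identity $R_-(f;D)=\pr[y=1]-\alpha$ is false (it is $\pr[y=1]-(1-\alpha)$), so the ``coarse alone suffices'' branch proves nothing; (b) your default candidate $h\equiv-1$ is the wrong constant (the correct trivial candidate when $\alpha\leq\eps$ is $h\equiv+1$, since $R_+(f;D)=0$ forces $\pr[y=-1]\leq\pr[f(\bx)=-1]=\alpha$); and (c) the coarse search stops immediately at $\hat\imath=1$, because for \emph{every} $\alpha_i\geq\alpha$ one can shrink $f$'s positive region to mass $\alpha_i$ and obtain a reliable $\alpha_i$-biased witness, so the $R_+$ test passes at $\alpha_1=1/2$. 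Your refinement grid then covers only $[1/2,1]$ and never reaches $\alpha$. Concretely, take $D$ realizable by $f$ with $\pr[f(\bx)=1]=0.9$ and $R_-(f;D)=0$, so $\pr[y=1]=0.9$: every hypothesis your pool is guaranteed to contain has positive mass at most $1/2+O(\eps)$, hence $R_-\geq 0.4-O(\eps)$, and the final validation step cannot rescue a pool containing no good candidate.

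The paper sidesteps this case entirely with a separate, much simpler subroutine run first (Lemma~\ref{prp:easy-case}): when $\pr_{\bx}[f(\bx)=-1]\leq 1/2$ one solves a convex program that finds a unit vector $\bw'$ and the smallest threshold $t'$ consistent with all observed negative examples; since $(\bw^*,t^*)$ is feasible and $t^*\leq 0$, the returned halfspace has at least as much positive mass as $f$, which yields $R_-(h;D)\leq R_-(f;D)+\eps$ directly, with $\poly(d/\eps)$ complexity and no dependence on $\alpha$. Only after this step fails does the paper enter the bias-sweeping phase, where your identity and witness construction are valid. To repair your proof you would need to add such a preliminary step (or otherwise argue the majority-positive case), and replace $h\equiv-1$ by the pair of constant hypotheses; the rest of your argument, including the doubling-plus-additive-refinement grid and the complexity accounting, then goes through.
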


To prove \Cref{main-algorithm-known-alpha}, we need to use the algorithm in the following lemma as a black box. The lemma shows that if the optimal halfspace $f$ satisfies $\pr_{\bx\sim \gaus_d} [f(\bx)=-1]\leq 1/2$,
then the problem can be solved efficiently.

\begin{lemma} \label{prp:easy-case}
    Let $D$ be the joint distribution of $(\bx,y)$ supported on $\R^d\times \{ \pm 1\}$ with 
    marginal $D_{\bx}=\gaus_d$ and $\eps\in (0,1)$.
    Suppose the optimal halfspace $f=\argmin_{f\in \mathcal{H}_d\; \land\; R_+(f;D)=0} R_-(f;D)$ satisfies $\pr_{\bx\sim \gaus_d} [f(\bx)=-1]\leq 1/2$. Then there is an algorithm that reliably learns LTFs on $D$ using $N=O(d/\eps^2)$ many samples and $\poly(N, d)$ running time.
\end{lemma}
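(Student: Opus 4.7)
The plan is to exploit the fact that $\pr_{\bx\sim\gaus_d}[f(\bx)=-1]\leq 1/2$ forces the threshold $t^*$ of $f(\bx)=\sign(\langle\bw^*,\bx\rangle-t^*)$ to satisfy $t^*\leq 0$. Combined with the reliability condition $R_+(f;D)=0$, every sample $(\bx,-1)\sim D$ satisfies $\langle\bw^*,\bx\rangle<t^*\leq 0$, so the $y=-1$ subpopulation admits a homogeneous-like linear separator passing through or behind the origin. This turns reliable learning into a near-realizable halfspace learning task on the negative-labeled subset, which is solvable via a standard convex program together with VC-based sample complexity.

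First I would draw $N=O(d/\eps^2)$ i.i.d.\ samples and dispose of the trivial case: if the empirical fraction of $y=-1$ samples is below $\eps/4$, return $h\equiv +1$, for which a Chernoff bound yields $R_+(h;D)\leq\eps$ and $R_-(h;D)=0$ is automatically optimal. Otherwise I would solve the convex QP
\[\min_{\bw\in\R^d}\ \tfrac12\|\bw\|_2^2 \quad \text{subject to} \quad \langle\bw,\bx_i\rangle\leq -1 \text{ for all } i \text{ with } y_i=-1,\]
which admits a $\poly(N,d)$-time interior-point solution. Letting $\bw^{\mathrm{LP}}$ be an optimum with norm $\sigma=\|\bw^{\mathrm{LP}}\|_2$, I would output $h(\bx)=\sign(\langle\hat\bw,\bx\rangle-\hat t)$ for $\hat\bw=\bw^{\mathrm{LP}}/\sigma$ and $\hat t=-1/\sigma$.

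For correctness I would argue two things. By construction every $y_i=-1$ sample satisfies $\langle\hat\bw,\bx_i\rangle\leq\hat t$, so the empirical $R_+$ is zero; VC uniform convergence for halfspaces on $\R^d$ (VC dimension $d+1$) then lifts this to $R_+(h;D)\leq\eps$ with high probability using only $N=O(d/\eps^2)$ samples. For the false-negative guarantee, I would check that $\bw^*/s^*$ with $s^*=\min_{i:y_i=-1}|\langle\bw^*,\bx_i\rangle|$ is feasible in the QP, which gives $\sigma\leq 1/s^*$ and hence $\hat t=-1/\sigma\leq -s^*\leq t^*$; the last step uses $|\langle\bw^*,\bx_i\rangle|>|t^*|$ (a consequence of $\langle\bw^*,\bx_i\rangle<t^*\leq 0$). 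Since $\|\hat\bw\|_2=1$ and $\bx\sim\gaus_d$, this yields $\pr[h(\bx)=-1]=\Phi(\hat t)\leq\Phi(t^*)=\pr[f(\bx)=-1]$. Plugging this into the bookkeeping identity $R_-(g;D)=\pr[g=-1]-\pr[y=-1]+R_+(g;D)$ for both $g=h$ and $g=f$ (and using $R_+(f;D)=0$) yields $R_-(h;D)-R_-(f;D)=\pr[h=-1]-\pr[f=-1]+R_+(h;D)\leq\eps$.

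The main technical point I expect is the boundary case $t^*=0$, where $1/|t^*|$ is undefined; the fix is to use the sample-dependent scale $s^*>0$ (positive almost surely under the Gaussian marginal) in the feasibility argument, which still produces $\hat t\leq 0=t^*$. Beyond this, the remaining ingredients---standard VC uniform convergence for halfspaces and the tractability of the convex QP---are routine and introduce no new obstacles.
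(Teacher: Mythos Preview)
Your proof is correct and follows essentially the same approach as the paper. The paper solves the convex program $\min\, t'$ subject to $\langle\bw',\bx_i\rangle\leq t'$ for all negative samples and $\|\bw'\|_2\leq 1$, then rescales; your max-margin formulation $\min\|\bw\|_2$ subject to $\langle\bw,\bx_i\rangle\leq -1$ is an equivalent reparametrization that yields the identical output threshold, and both arguments conclude $\hat t\leq t^*$ and finish via the same bookkeeping identity $R_-(g;D)=\pr[g=-1]-\pr[y=-1]+R_+(g;D)$.
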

\begin{proof}
        The algorithm is the following: 
    \begin{enumerate}[leftmargin=*]
        \item \label{step:sampling} First check if $\pr_{(\bx,y)\sim D)}[y=-1]\leq \eps$ with sufficiently small constant failure probability. If so, return the +1 constant hypothesis.
        \item Otherwise, draw $m=(d/\eps)^c$ many samples $S=\{(\bx_1,y_1),\cdots , (\bx_m,y_m)\}$ conditioned on $y=-1$.
        By Step \ref{step:sampling}, the sampling efficiency here is $\Omega(\eps)$ with high probability.
        \item Solve the following semidefinite program for $\bw'$ and $t'$:
        \begin{alignat*}{7} \label{alg:convex_program}
            &\text{minimize} & t'\\
            &\text{such that}\quad& \langle \bw',\bx\rangle-t'\leq & 0\; ,& \quad\forall (\bx,y)\in S\\\
            &                \quad&                 \|\bw'\|_2\leq & 1 
        \end{alignat*}
        Return the hypothesis $h(\bx)=\sign(\langle \bw,\bx\rangle-t)$, where $\bw=\bw'/\|\bw'\|_2$ and $t=t'/\|\bw'\|_2$.
    \end{enumerate}

    Let $f(\bx)=\sign(\langle \bw^*,\bx\rangle-t^*)$ be the optimal hypothesis.
    We prove that $h(\bx)=\sign(\langle\bw,\bx\rangle-t)$ is such that $R_+(h;D)\leq \eps$ and $R_-(h;D)\leq R_-(f;D)+\eps$.
    Since $h$ is consistent with all the negative samples, we have $R_+(h;D)\leq \eps$.
    From the assumption that 
    $\pr_{\bx\sim \gaus_d} [f(\bx)=-1]\leq 1/2$, 
    we have $t^*\leq 0$.
    Notice that $\bw^*$ and $t^*$ is a feasible solution to the above program; therefore,
    $t'\leq t^*\leq 0$ and we must have $t=t'/\|\bw'\|_2\leq t'\leq t^*$,
    which implies that $\pr_{(\bx,y)\sim D}[h(\bx)=1]\geq \pr_{(\bx,y)\sim D}[f(\bx)=1]$.
    Therefore,
    \begin{align*}
        R_-(h;D) &=\pr_{(\bx,y)\sim D}[y=1]-\pr_{(\bx,y)\sim D}[y=1\land h(\bx)=1]\\
        &=\pr_{(\bx,y)\sim D}[y=1]-\pr_{(\bx,y)\sim D}[h(\bx)=1]+R_+(h;D)\\
        &\leq \pr_{(\bx,y)\sim D}[y=1]-\pr_{(\bx,y)\sim D}[f(\bx)=1]+\eps\\
        &= R_-(f;D)+\eps\; .
    \end{align*}
    This completes the proof.
\end{proof}

We now prove \Cref{main-algorithm-known-alpha}.

\begin{proof} [Proof of \Cref{main-algorithm-known-alpha}]
    The algorithm is the following:
    \begin{enumerate}[leftmargin=*]
        \item Run the algorithm in \Cref{prp:easy-case} with $\eps'=\eps/2$. If the output hypothesis $h$ satisfies $R_+(h;D)\leq \eps$ with a sufficiently small constant failure probability, then output $h$ and terminate.
        \item Set $\alpha=1/2-\eps/100$ and run the algorithm in \Cref{thm:main-algorithm-known-alpha} with $\eps'=\eps/2$. If the output hypothesis $h$ satisfies $R_+(h;D)\leq \eps$ with a sufficiently small constant failure probability, then output $h$ and terminate.
        Otherwise, update $\alpha$ as $\alpha-\eps/100$. 
        Repeat Step 2 until the algorithm terminates.
    \end{enumerate}    
    Let $f=\argmin_{f\in \mathcal{H}_d\; \land\; R_+(f;D)=0} R_-(f;D)$ be the optimal halfspace and $\alpha_f$ be its bias which is unknown. 
    Suppose the algorithm terminates. Let $\alpha$ be the bias of the output hypothesis $h$. 
    It is easy to see that $R_+(h;D)\leq \eps$ with high probability; therefore, it only remains to show 
    $R_-(h;D)\leq R_-(f;D)+\eps$.
    Notice that if $\pr_{\bx\sim \gaus_d} [f(\bx)=-1]\leq 1/2$, the algorithm will terminate in Step 1.
    Therefore, without loss of generality, we assume $\pr_{\bx\sim \gaus_d} [f(\bx)=-1]\geq 1/2$. 
    Then, by \Cref{thm:main-algorithm-known-alpha}, it is easy to see that $\alpha\geq \alpha_f-\eps/100$
    since given any $\alpha\leq \alpha_f$, Step 2 is guaranteed to terminate.
    Therefore, we have \[
    \pr_{\bx\sim \gaus_d} [h(\bx)=-1]\leq 1-\alpha 
    \leq 1-\alpha_f+\eps/100=\pr_{\bx\sim \gaus_d} [f(\bx)=-1]+\eps/100\; .
    \]
    Thus,
    \begin{align*}
        R_-(h;D) &=\pr_{(\bx,y)\sim D}[y=1]-\pr_{(\bx,y)\sim D}[y=1\land h(\bx)=1]\\
        &=\pr_{(\bx,y)\sim D}[y=1]-\pr_{(\bx,y)\sim D}[h(\bx)=1]+R_+(h;D)\\
        &\leq \pr_{(\bx,y)\sim D}[y=1]-\pr_{(\bx,y)\sim D}[f(\bx)=1]+\eps\\
        &= R_-(f;D)+\eps\; .
    \end{align*}
    This completes the proof.    
\end{proof}

\subsection{Reliably Learning Halfspaces with Gaussian Marginals for the Case $\eps\leq \alpha/2$}

For convenience, we assume that $\alpha$ is known. This can be assumed without loss of generality as we discussed in \Cref{main-algorithm-known-alpha}. 

We show the following: 

\begin{theorem} \label{thm:main-algorithm-known-alpha}
  Let $\eps\in (0,1/2), \alpha\in(0,1/2]$ and let $D$ be a joint distribution of $(\bx,y)$ supported on $\R^d\times \{ \pm 1\}$ 
    with marginal $D_{\bx}=\gaus_d$. Assume that there is a halfspace in $\mathcal{H}_d^\alpha$ that is reliable with respect to $D$. Then, there is an algorithm that 
    uses $N=d^{O(\log (\min\{1/\alpha,1/\eps\}))}\min\left (2^{\log(1/\eps)^{O(\log (1/\alpha))}},2^{\poly(1/\eps)}\right )$ many samples 
    from $D$, runs in
    $\poly(N,d,1/\eps)$ time and with high probability returns a hypothesis 
    $h(\bx)=\sign(\langle \bw,\bx\rangle-t)$ that is $\eps$-reliable with respect to the class of 
    $\mathcal{H}_d^\alpha$.
\end{theorem}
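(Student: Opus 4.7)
The plan is to analyze \Cref{alg:random-walk-reliable}: show that, on a high-probability event controlling all empirical estimates, its main loop drives $\langle\bw,\bw^*\rangle$ to within $O(\eps)$ of $1$ in a controllable number of rounds, and then argue that the outer repetition boosts the random-walk success probability to $1-o(1)$. Fix the optimal halfspace $f(\bx)=\sign(\langle\bw^*,\bx\rangle-t^*)$ with $|t^*|=\Phi^{-1}(\alpha)=O(\sqrt{\log(1/\alpha)})$. It suffices to prove $R_+(h;D)\leq \eps/2$: the choice $t=\Phi^{-1}(\alpha)$ together with the fact that $f$ is the $R_-$-minimizer among $\alpha$-biased halfspaces with no false positives forces $R_-(h;D)\leq R_-(f;D)+\eps$. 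The trivial sub-case $\pr[y=-1]<\eps/2$ of Step~\eqref{step1} is dispatched by a Chernoff bound on a few samples.

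The core of the analysis is a potential argument on $\langle\bw,\bw^*\rangle$. Condition on a ``good'' execution in which every call of \Cref{lem:finding-nontrivial-direction} succeeds and every sign flip in Step~\eqref{step2b} aligns $\bw$ with $\bw^*$. Under the inductive hypothesis $R_+(h;D)>\eps/2$, \Cref{lem:reduction_satisfies_reliability} transports the reliability condition from $D$ to the projected conditional distribution $D'$ with a halfspace of threshold $|t'|\leq |t^*|$ and $\pr_{(\bx',y)\sim D'}[y=-1]\geq \eps/2$, so \Cref{lem:finding-nontrivial-direction} applies and returns a unit vector $\bv\perp\bw$ with normalized correlation at least $\zeta=\log(1/\eps)^{-c{t^*}^2}$ against $\proj_{\perp\bw}(\bw^*)$. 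Applying \Cref{lem:coorelation-improvement} with $\bu=\bv\,\|\proj_{\perp\bw}(\bw^*)\|_2$ shows that each update at scale $\lambda$ increases $\langle\bw,\bw^*\rangle$ by at least $\Omega(\lambda^2\zeta^2)$ whenever $\|\proj_{\perp\bw}(\bw^*)\|_2\geq 3\lambda$; since the correlation is bounded by $1$, at most $O(1/\zeta^2)$ updates at scale $\lambda$ can occur before $\|\proj_{\perp\bw}(\bw^*)\|_2\leq 3\lambda$, justifying the halving schedule. After $O(\log(1/\eps))$ halvings, $\lambda\leq \eps/100$ yields $\|\proj_{\perp\bw}(\bw^*)\|_2\leq 3\eps/100$, which via a direct Gaussian-strip estimate implies $R_+(h;D)\leq \eps/2$.

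The second component is the amplification. Each iteration succeeds with constant probability (direction finding plus sign flip), and the deterministic analysis requires success along at most $T=O(\log(1/\eps)/\zeta^2)$ consecutive iterations, so one execution of Step~\eqref{step2} succeeds with probability at least $(\Omega(1))^T$; repeating Step~\eqref{step2} $2^{O(1/\zeta^c)}$ times in Step~\eqref{step3} drives the overall failure probability below any target constant. The outer empirical test $R_+(h;D)\leq\eps/2$ is certified via a uniform concentration bound over the at most $2^{O(1/\zeta^c)}$ candidate hypotheses, so the algorithm only outputs a genuinely $\eps$-reliable hypothesis. The sample count is dominated by the $d^{O({t^*}^2)}/\eps^2=d^{O(\log(1/\alpha))}/\eps^2$ draws per invocation of \Cref{lem:finding-nontrivial-direction} multiplied by the $2^{\log(1/\eps)^{O(\log(1/\alpha))}}$ outer repetitions; substituting the alternative initialization $\zeta=\eps^c$ gives the second branch $2^{\poly(1/\eps)}$ of the min. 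Finally, the two possible signs of the threshold $t=\pm\Phi^{-1}(\alpha)$ are handled by running the algorithm twice (the $t^*\leq 0$ regime also being solvable efficiently via \Cref{prp:easy-case}).

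The main obstacle will be the simultaneous bookkeeping of all failure modes in the amplification step: the direction-finding failure, the sign flip, the inner stopping-condition estimate of $R_+(h;D)$, and the outer certification test must all be union-bounded consistently across the entire random-walk schedule without inflating the sample budget. A second delicate point is verifying that whenever the inductive hypothesis $R_+(h;D)>\eps/2$ holds, the projected distribution $D'$ produced by \Cref{lem:reduction_satisfies_reliability} really has at least $\eps/2$ mass on the $y=-1$ side, which ties the Gaussian anti-concentration of the strip $\{\langle\bw,\bx\rangle\geq t\}$ to the definition of reliability and is what makes the $d^{O(\log(1/\alpha))}$ dependence, rather than $d^{\poly(1/\eps)}$, possible.
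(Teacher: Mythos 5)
Your proposal follows essentially the same route as the paper's proof: the same random-walk algorithm analyzed via \Cref{lem:finding-nontrivial-direction}, \Cref{lem:reduction_satisfies_reliability} and \Cref{lem:coorelation-improvement}, the same induction showing $\|\proj_{\perp \bw}(\bw^*)\|_2\leq 3\lambda$ after each block of $c/\zeta^2$ updates, the same halving schedule terminating at $\lambda\leq \eps/100$, the same $2^{\poly(1/\zeta)}$-fold amplification of the per-run success probability, the same two-threshold handling, and the same derivation of the $R_-$ guarantee from the choice of $t$. The only discrepancy is bookkeeping: the per-update gain from \Cref{lem:coorelation-improvement} at step size $\lambda\leq \eta\zeta/2$ is $\lambda^2/2$ rather than $\Omega(\lambda^2\zeta^2)$, and it is this stronger gain, combined with the inductive bound $\eta\leq 6\lambda$ at the start of each block (which caps the remaining correlation budget at $O(\lambda^2)$), that actually yields the $O(1/\zeta^2)$ update count per scale---your weaker gain against a budget of $1$ would give $O(1/(\lambda^2\zeta^2))$ instead.
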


\noindent Below we provide the omitted proofs from \Cref{sec:alg}.

\subsection{Proof of \Cref{lem:polynomial-correlation}}
Notice that for any polynomial $p$ such that $\E_{z\sim {\gaus_1}}[p(z)]=0$,
we have 
\[\E_{(\bx,y)\sim D}[y\, p(\langle\bw^*,\bx\rangle)]=-2\E_{(\bx,y)\sim D}[\ind(y=-1)\, p(\langle\bw^*,\bx\rangle)]\; .\]
Therefore, it suffices for us to show that there is a zero mean and unit variance polynomial $p$ such that
\[\E_{(\bx,y)\sim D}[\ind(y=-1)\, p(\langle\bw^*,\bx\rangle)]=2^{-O({t^*}^2)}\pr_{(\bx,y)\sim D}[y=-1]\; .\]
Here we will consider the sign-matching polynomial from \cite{Diakonikolas2022b}, which satisfies the following properties:
\begin{claim} \label{clm:sgn-matching-polynomial}
Let $b\in \R$ and $b\geq 4$. There exists a zero mean and unit variance polynomial $p:\R\to\R$ of degree $k=\Theta(b^2)$ such that
\begin{enumerate}[leftmargin=*]
    \item The sign of $p(z)$ matches $\sgn(z-b)$, i.e. $\sgn(p(z))=\sgn(z-b)$.
    \item For any $z\leq b/2$, we have $|p(z)|=2^{-O(k)}$.
\end{enumerate}
\end{claim}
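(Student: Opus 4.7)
The plan is to produce $p$ by an explicit construction and then normalize. I would follow the template from \cite{Diakonikolas2022b}: build an unnormalized polynomial of the form $\tilde p(z) = (z-b)\, q(z)^2$, where $q$ is a polynomial of degree $(k-1)/2 = \Theta(b^2)$ chosen so that $q^2$ is exponentially small on the target region while of order $1$ at $z = b$. Because $q^2 \geq 0$ pointwise, the sign of $\tilde p$ is controlled entirely by the linear factor $(z - b)$, which immediately gives property~1 (away from the finitely many zeros of $q$). A canonical choice for $q$ is a rescaled Chebyshev polynomial $q(z) = T_{(k-1)/2}(1 - 2(b-z)/b)$, whose argument lies in $[-1,1]$ for $z\in[0,b]$ (so $|q| \leq 1$ there) and whose magnitude grows like $2^{\Omega(m\sqrt{\eta})}$ outside $[-1,1]$. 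The degree $k = \Theta(b^2)$ is exactly the threshold at which this Chebyshev growth can compensate for the Gaussian tail factor $\Phi(-b) \sim e^{-b^2/2}$ when computing $\var_{\gaus_1}[\tilde p]$.

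For the normalization, I would set $p(z) = (\tilde p(z) - \mu)/\sigma$ with $\mu = \E_{z\sim\gaus_1}[\tilde p(z)]$ and $\sigma^2 = \var_{z\sim\gaus_1}[\tilde p(z)]$. This enforces $\E[p] = 0$ and $\E[p^2] = 1$ by construction. I then need to verify that the affine transformation preserves both remaining properties: (a) the sign-change zero of $\tilde p$ shifts by at most $|\mu|/|\tilde p'(b)|$, which is much smaller than $b/2$ provided $q$ has been scaled appropriately, so property~1 survives; and (b) the factor $1/\sigma$ and the additive shift $|\mu|/\sigma$ contribute at most $2^{O(k)}$ multiplicative corrections, which get absorbed into the $O(k)$ exponent in property~2. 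Verifying that $\sigma$ is of a controlled order (so that normalization does not blow up the smallness bound) reduces to computing the second moment of $\tilde p$: the dominant contribution comes from $z \gtrsim b$, where $q(z)^2$ is large enough that $\sigma$ is $2^{\Theta(b^2)}$, matching the size of $\tilde p$ on the tail.

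\textbf{Main obstacle.} The delicate step is producing $q$ with the sharp smallness bound on the entire region $(-\infty, b/2]$. Polynomials cannot be literally bounded by $2^{-\Omega(k)}$ on an unbounded half-line, so the claim has to be read in conjunction with the Gaussian weight that appears in every downstream use of $p$, or equivalently as a bound that holds on a sufficiently wide window $[-\Theta(b), b/2]$ capturing all but exponentially small Gaussian mass. The Chebyshev (or analogous Hermite-based) construction of \cite{Diakonikolas2022b} achieves exactly this trade-off at degree $\Theta(b^2)$: $q^2 \leq 2^{-\Omega(k)}$ on the relevant window, and $q^2$ jumps up by a factor of $2^{\Omega(k)}$ as $z$ crosses $b$. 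Rather than re-derive the Chebyshev estimates, I would cite that construction, then spend the proof on the verifications in the normalization step and on checking that multiplying by the linear factor $(z-b)$ (whose magnitude is at most $O(b)$ on the region of interest) does not damage the smallness bound beyond absorbing into $2^{-O(k)}$.
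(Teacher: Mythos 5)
There is a genuine gap, and it starts with a misreading of Property~2. In this claim ``$|p(z)|=2^{-O(k)}$ for $z\le b/2$'' is a \emph{lower} bound, $|p(z)|\ge 2^{-O(k)}$, not an upper bound: the paper's proof establishes $p(0)\ge 2^{-O(k)}$ by convexity, shows $p(z)\le -2^{-O(k)}$ on $[-b,-b/2]$, and extends to $z\le -b$ by monotonicity (where $|p|\to\infty$, so no upper bound could hold); and the downstream use in \Cref{lem:polynomial-correlation} needs $\E[\ind(y=-1)p(\langle\bw^*,\bx\rangle)]\ge 2^{-O({t^*}^2)}\pr[y=-1]$, which requires $|p|$ to be bounded \emph{below} wherever the negative-label mass can sit. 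The ``main obstacle'' you describe---that a polynomial cannot be exponentially small on an unbounded half-line---is precisely the signal that the intended inequality points the other way; there is no obstacle to reinterpret once the bound is read as a floor. Under the correct reading your construction $\tilde p(z)=(z-b)\,q(z)^2$ with a rescaled Chebyshev $q$ fails outright: $q$ has $\Theta(k)$ real zeros in $[0,b]$, and at every such zero lying in $(-\infty,b/2]$ you get $p(z)=0$, which violates the lower bound (and, strictly, also Property~1, since $\sgn(p(z))=0\ne\sgn(z-b)$ there).

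The normalization step is a second, independent problem. Subtracting $\mu=\E_{z\sim\gaus_1}[\tilde p(z)]$ (which is nonzero for $(z-b)q(z)^2$) moves the sign-change point away from $b$, so on a small interval adjacent to $b$ you have $\sgn(p(z))\ne\sgn(z-b)$; Property~1 demands exact agreement, not agreement up to a small shift of the root. The paper avoids any additive correction by assembling $\tilde p(z)=q(z)-\frac{q(b)}{r(b)}r(z)$ from pieces that are individually mean-zero under $\gaus_1$: $q(z)=z^{3k}$ with $k$ odd (odd degree, hence zero mean) and $r(z)=z^{2k}-(2k-1)!!$ (centered by construction), with $\sqrt{k}\asymp b$. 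Only a multiplicative rescaling to unit variance is then needed, which preserves signs exactly; the sign-matching is verified by showing $\frac{q(b)}{r(b)}\le 0$ and arguing monotonicity/negativity on the three regimes $z\ge b$, $z\in[-b,b)$, $z<-b$, and the floor $|p|\ge 2^{-O(k)}$ follows from explicit estimates on $|q(b)/r(b)|$, $(2k-1)!!$, and $\E[\tilde p^2]\le (O(k))^{3k}$. If you want to keep a ``nonnegative factor times $(z-b)$'' design, you would need a $q$ with no zeros in $(-\infty,b/2]$ and a mechanism for exact mean-zero-ness without an additive shift; as written, the proposal does not deliver either.
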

\begin{proof}
    We consider the polynomial $p$ defined as:
    \[\tp(z)=q(z)-\frac{q(b)}{r(b)}r(z)\; ,\]
    where $q(z)=z^{3k}$, $r(z)=z^{2k}-(2k-1)!!$ and $k$ is a sufficiently large odd integer 
    such that $2|b|\leq \sqrt{k}\leq 4\max(|b|,1)$.
    We then take $p(z)=\tp(z)/\sqrt{\E_{u\sim \gaus_1}[\tp^2(u)]}$. 
    
    For convenience, we first note that from Stirling's approximation for $m\in \Z^+$, 
    \[(m/2)^m\leq (2m-1)!!\leq (2m)^m\; .\]
    To prove that $\sign(p(z))=\sign(z-b)$, we first show that $\frac{q(b)}{r(b)}\leq 0$. 
    Since $q(b)\geq 0$, we just need to show $r(b)\leq 0$. 
    Notice $r(b)=b^{2k}-(2k-1)!!$, since $2|b|\leq \sqrt{k}$ and $(2k-1)!!\geq {(k/2)}^k$, 
    thus $r(b)\leq (\sqrt{k}/2)^{2k}-(k/2)^{k}\leq 0$. 
    Therefore, considering $q(z)=z^{3k}$ and $r(z)=z^{2k}-(2k-1)!!$,
    \[\tp(z)=q(z)-\frac{q(b)}{r(b)}r(z)\] 
    must be monotone increasing for $z\geq b$ and
    $p(z)$ must also be monotone increasing for $z\geq b$.
    Notice that $p(b)=0$; therefore, $p(z)>0$ for $z>b$.
    To show $p(z)<0$ for $z<-b$, we prove it for the cases $z\in [-b,+b)$ and $z<-b$.
    For $z\in [-b,+b)$, notice that due to $\frac{q(b)}{r(b)}\leq 0$ and the definition of $q(z)$ and $r(z)$, 
    \[\tp(z)=q(z)-\frac{q(b)}{r(b)}r(z)\leq q(b)-\frac{q(b)}{r(b)}r(b)< p(b)=0\; .\]
    Then, for $z\leq -b$, we show that $p(z)$ is monotone increasing.
    Notice that the derivative $\tp'(z)=kz^{2k-1}(3z^k-2\frac{q(b)}{r(b)})$, 
    for $z\leq -b<0$, this is positive if and only if $3z^k-2\frac{q(b)}{r(b)}<0$.
    If we can show $b^{k}\geq -\frac{2}{3}q(b)/r(b)$, 
    then it is immediate that for any $z\leq -b$,
    $3z^k-2\frac{q(b)}{r(b)}<0$.
    The condition $b^{k}\geq -\frac{2}{3}q(b)/r(b)$ can be further simplified to
    $(2k-1)!!/b^{2k}\geq 5/3$.
    Then considering $2b<\sqrt{k}$ and $(2k-1)!!\geq (k/2)^k$, we have
    \[(2k-1)!!/b^{2k}\geq (k/2)^k/(\sqrt{k}/2)^{2k}\geq 2^k\; ,\]
    thus the condition holds.
    Therefore, $\tp'(z)>0$ for $z<-b$ and $p(z)$ must be monotone increasing for $z<-b$. 
    Then combined with the condition $p(-b)<0$, which is implied from the previous case ($z\in [-b,b]$), we have $p(z)<0$ for $z<-b$.  

    For the Property 2, we show that $p(z)=2^{-\Theta(k)}$ for $z\in [0,b/2]$, $z\in [-b/2,0]$, $z\in [-b,-b/2]$ and $z\in [-\infty,-b]$ separately.
    For $z\in [0,b/2]$ notice that $p(z)$ is convex for $z\in [0,b]$; therefore, it suffices to show $p(0)=2^{-\Theta(k)}$. 
    Note that $p(0)=\frac{q(b)}{r(b)}(2k-1)!!/\sqrt{\E_{u\sim \gaus_1}\Tilde{p}^2(u)}$,
    and we have $\left |\frac{q(b)}{r(b)}\right |\geq b^{3k}/\max(b^{2k},(2k-1)!!)\geq \Omega(k)^{k/2}$.
    Therefore, since $\E_{u\sim \gaus_1}[\tp^2(u)]\leq (O(k))^{3k}$, we have
    $p(0)\geq 2^{-O(k)}$. This completes the proof of the case $z\in [0,b/2]$.
    For the case $z\in [-b/2,0]$,
    it immediately follows from the fact that $|p(-z)|\geq |p(z)|$ by the definition.
    For the case $z\in [-b,-b/2]$, we have $p(z)\leq (p(b)-(b/2)^{3k})/\sqrt{\E_{u\sim \gaus_1}\Tilde{p}^2(u)}\leq -2^{-O(k)}$.
    Then, the case $z\in [-\infty, -b]$ immediately follows from the fact that $p(z)$ is monotone increasing in this interval.
    This completes the proof of Property 2.    
\end{proof}
Given \Cref{clm:sgn-matching-polynomial}, we let $p$ be the polynomial in \Cref{clm:sgn-matching-polynomial} with $b=-\max(2|t^*|,4)$. Then, given that $D$ satisfies the reliability condition with respect to $f$, it is easy to see the polynomial $p(\langle \bw^*,\bx\rangle)$ satisfies the requirements. This completes the proof of \Cref{lem:polynomial-correlation}.

\subsection{Proof of \Cref{lem:chow-subspace}}
We start by bounding the Frobenius norm of $\bT^m$, i.e., $\|\bT^m\|_F$. Notice that by taking $p:\R\to \R$ to be any degree-$m$  polynomial such that $\E_{\bx\sim \gaus_d}[p(\bx)]=0$ and $\E_{\bx\sim \gaus_d}[p(\bx)^2]=1$, then
$\|\bT^m\|_F=\max_{p} \E_{(\bx,y)\sim D}[\ind (y=-1)p(\bx)]$.
We leverage the following standard fact about the concentration of polynomials lemma known as Bonami-Beckner inequality or simply Gaussian hypercontractivity.
\begin{fact}[Hypercontractivity Concentration Inequality  \cite{Bog:98,nelson1973free}]\label{fct:hypercontractivity_concentration} 
Consider any $m$-degree, unit variance polynomial $p:\R^d\to \R$ with respect to $\gaus_d$. Then, for any $\lambda\geq 0$
\[
\pr_{\bx\sim \gaus_d} \left [\left |p(\bx)-\E_{\bu} [p(\bu)]\right |\geq \lambda\right ]\leq e^2 e^{-{\left (c\lambda^2\right )}^{1/m}}\; ,
\]
where $c$ is an absolute constant.
\end{fact}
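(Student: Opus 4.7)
The plan is to derive the stated tail bound as a standard Markov-plus-moment-method consequence of Gaussian hypercontractivity. The key analytic input is the Nelson/Bonami-Beckner moment inequality: for any polynomial $p:\R^d\to\R$ of degree at most $m$ and any $q\geq 2$, one has $\|p\|_q\leq(q-1)^{m/2}\|p\|_2$, where $\|\cdot\|_q$ denotes the $L_q$ norm under $\gaus_d$. I would state this as the single citable ingredient and then reduce the desired tail bound to it.

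First, I would reduce to the centered case. Set $\tilde p(\bx)=p(\bx)-\E_{\bu\sim\gaus_d}[p(\bu)]$. Then $\tilde p$ has degree at most $m$, $\E[\tilde p]=0$, and $\E[\tilde p^2]=\var[p]\leq \E[p^2]=1$, so $\tilde p$ is a centered polynomial of degree $\leq m$ and $L_2$-norm $\leq 1$. The event $|p(\bx)-\E[p]|\geq \lambda$ is exactly $|\tilde p(\bx)|\geq \lambda$, so it suffices to bound $\pr[|\tilde p|\geq \lambda]$.

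Next, I would apply Markov to $|\tilde p|^q$ and then plug in the hypercontractive moment bound: for any $q\geq 2$,
\[
\pr_{\bx\sim\gaus_d}\bigl[|\tilde p(\bx)|\geq \lambda\bigr]\;\leq\;\frac{\E[|\tilde p|^q]}{\lambda^q}\;\leq\;\left(\frac{(q-1)^{m/2}\|\tilde p\|_2}{\lambda}\right)^{\!q}\;\leq\;\left(\frac{(q-1)^{m/2}}{\lambda}\right)^{\!q}.
\]
I would then optimize over $q\geq 2$. The natural choice is $q-1=(\lambda/e)^{2/m}$, which makes $(q-1)^{m/2}/\lambda=1/e$ and produces the bound $e^{-q}=e^{-1-(\lambda/e)^{2/m}}=e^{-1-(c\lambda^2)^{1/m}}$ with $c=e^{-2}$, matching the claimed form up to the prefactor.

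The only subtlety, and the main obstacle, is handling the regime where the optimizing $q$ would be less than $2$, i.e., when $\lambda$ is so small that $(\lambda/e)^{2/m}<1$. In that range the hypercontractive bound above requires $q\geq 2$, so one cannot literally use the optimal $q$. The standard fix is to use $q=2$ (Chebyshev) in that regime, yielding $\pr[\,\cdot\,]\leq 1/\lambda^2$, and observe that since $(c\lambda^2)^{1/m}$ is bounded by a constant there, the blanket prefactor $e^2$ absorbs the slack and the claimed inequality continues to hold. Stitching together these two regimes gives the stated tail $e^2\exp\!\bigl(-(c\lambda^2)^{1/m}\bigr)$ uniformly in $\lambda\geq 0$, completing the argument.
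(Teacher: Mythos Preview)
The paper does not prove this statement; it is quoted as a known fact from the cited references and used as a black box. Your derivation is the standard one (Markov on $q$-th moments combined with the Nelson hypercontractive inequality $\|p\|_q\le(q-1)^{m/2}\|p\|_2$, then optimize $q$), and it is correct. One minor remark: your handling of the small-$\lambda$ regime does not actually need Chebyshev---the point is simply that when $(\lambda/e)^{2/m}<1$ the right-hand side $e^2\exp\bigl(-(c\lambda^2)^{1/m}\bigr)$ exceeds $e>1$ and the bound is vacuous, which you effectively observe.
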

Using \Cref{fct:hypercontractivity_concentration}, we have that for any zero mean and unit variance polynomial $p$,
\begin{align*}
\E_{(\bx,y)\sim D}[\ind(y=-1)p(\bx)]&\leq \E_{(\bx,y)\sim D}[\ind(y=-1)|p(\bx)|]\leq \int_{0}^\infty
\min\left (\gamma, e^2 e^{-{\left (c\lambda^2\right )}^{1/m}}\right )d\lambda\\
&=O\left (\gamma \log (1/\gamma)^{m/2}\right )\; .
\end{align*}
Therefore, $\|\bT^m\|_F=  O\left (\gamma \log (1/\gamma)^{m/2}\right )$.

To prove the Property 1, since $\|\bT'^m\|_F\leq \|\bT^m\|_F+\tau/(4\sqrt{k})=O\left (\gamma \log (1/\gamma)^{m/2}\right )+\tau/(4\sqrt{k})$, there can be at most $\|\bT^m\|_F^2/\left (\tau/\sqrt{k}\right )^2=O(\gamma^2 \log (1/\gamma)^m k/\tau^2+1)$ many singular vectors with singular values greater than $\tau/(4\sqrt{k})$.
Therefore, $\dim(V)$ is at most $O(\gamma^2 \log (1/\gamma)^m k/\tau^2+1)$.

For the Property 2, suppose $\|\proj_V(\mathbf{v^*})\|_2= o\left (\tau/\left (\sqrt{k}\gamma \log (1/\gamma)^{k/2}\right )\right )$, then we have for any $m$,
\begin{align*}
\|{\bv^*}^{\top}\bT'^m\|_2
&\leq \|{\bv^*}^{\top}\bT^m\|_2+\tau/(4\sqrt{k})\\
&\leq \|\bT^m\|_F\|\proj_{V} (\bv^*)\|_2
+\tau/(4\sqrt{k})\|\proj_{\perp V} (\bv^*)\|_2+\tau/(4\sqrt{k})\leq (5/8)\tau/\sqrt{k}\; . 
\end{align*}
However, since $\E_{(\bx,y)\sim D}[\ind(y=-1) p(\langle\mathbf{v^*,x} \rangle)]=\sum_{m=1}^k a_i{\bv^*}^{\top}\bT^m{\bv^*}^{\otimes {m-1}}\geq \tau$ for some $a_1^2+\cdots+a_k^2=1$,
we know there must be an $m$ such that 
\[
\left |{\bv^*}^{\top}\bT^m{\bv^*}^{\otimes {m-1}}\right |\geq \tau/\sqrt{k}\; .
\]
Therefore, we can write 
\[
\|{\bv^*}^{\top}\bT'^m\|_2\geq \|{\bv^*}^{\top}\bT^m\|_2-\tau/(4\sqrt{k})\geq \tau/\sqrt{k}-\tau/(4\sqrt{k})=(3/4)\tau/\sqrt{k}\; ,
\]
which contradicts $\|{\bv^*}^{\top}\bT'^m\|_2\leq (5/8)\tau/\sqrt{k}$. This completes the proof.

\subsection{Proof of \Cref{lem:finding-nontrivial-direction}}

The pseudocode for the algorithm 
establishing~\Cref{lem:finding-nontrivial-direction} is given 
in \Cref{alg:get-correlational-vector}.

\medskip 

\begin{algorithm}
    \caption{Finding a Direction with High Correlation.}
    \label{alg:get-correlational-vector}
    \centering\fbox{\parbox{5.8in}{
        \textbf{Input:} Sample access to a joint distribution $D$ of $(\bx,y)$ supported on $\R^d\times \{\pm 1\}$ with the marginal $D_{\bx}=\gaus_d$. Let $\eps,\delta\in(0,1)$ and suppose $D$ satisfies the reliability condition with respect to an LTF 
        $f(\bx)=\sign(\langle\bw^*,\bx\rangle-t^*)$ with $t^*=O\left (\sqrt{\log(1/\eps)}\right )$ and $\pr_{(\bx,y)\sim D} [y=-1]\geq \eps$.
        
        \textbf{Output:} With probability at least $\Omega(1)$, the algorithm outputs a unit vector 
        $\bv$ such that 
        $\langle\bv,\bw^*\rangle=\max\left (\log(1/\eps)^{-O({t^*}^2)},\eps^{O(1)}\right )$ .
        \begin{enumerate} [leftmargin=1cm]
            \item [1. ] 
            Let $c_1$ be a sufficiently large universal constant and $c_2$ be a sufficiently large universal constant depending on $c_1$. 
            Let $S$ be a set of $N=d^{c_2{\max({t^*}^2,1)}}\log(1/\delta)/\eps^2$ many samples from $D$.
            For $m=1,\cdots ,c_1{t^*}^2$, with 1/2 probability, take 
            \[ \bT'^m=\E_{(\bx,y)\sim_u S}[\ind(y=-1)\bH^m(\bx)]\; ,\]
            to be the empirical Chow-tensor on negative samples.
            Otherwise, take 
            \[ \bT'^m=\E_{(\bx,y)\sim_u S}[y\bH^m(\bx)]\; ,\]
            to be the empirical Chow-tensor on all samples.
            Let $\gamma$ be the empirical estimation of $\pr_{(\bx,y)\sim D}[y=-1]$ with error at most $\eps/100$ with a sufficiently small constant failure probability.
            \item [2. ] Take $\tilde \bT'^m\in \R^{d\times{d^{m-1}}}$ to be the flattened version of $\bT'^m$, and let $V_m$ be the subspace spanned by the left singular vectors of $\tilde \bT'^m$ whose singular values are greater than 
            $2^{-c{t^*}^2}\gamma$ where $c$ is a sufficiently large constant.
            Let $V$ be the union of all $V_m$ and output $\bv$ to be a random unit vector chosen from $V$.
        \end{enumerate}
        }}
\end{algorithm}

\noindent We can now prove \Cref{lem:finding-nontrivial-direction}.

\begin{proof} [Proof of \Cref{lem:finding-nontrivial-direction}]
We first introduce the following fact about learning Chow tensors.
    \begin{fact} \label{fct:chow_parameter_approximation}
        Fix $m\in \Z_+$, and $\eps,\delta\in (0,1)$. Let $D$ be a distribution on $\R^d\times [\pm 1]$ with standard normal marginals. There is an algorithm that with $N=d^{O(m)}\log(1/\delta)/\eps^2$ samples and $\poly(d,N)$ runtime, outputs an approximation $\bT'^m$ of the order-$m$ Chow-parameter tensor $\bT^m=\E_{(\bx,y)\sim D}[y\bH^{m}(\bx)]$ such that
        with probability $1-\delta$, it holds $\|\bT'^m-\bT^m\|_F\leq \eps$.
    \end{fact}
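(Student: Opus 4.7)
The plan is to use the empirical average
\[
\bT'^m = \frac{1}{N}\sum_{i=1}^N y_i\,\bH^m(\bx_i)
\]
as the estimator and analyze it via a variance bound followed by a median-of-means amplification to confidence $1-\delta$. The statement splits cleanly into (i) a constant-probability guarantee from $d^{O(m)}/\eps^2$ samples, and (ii) a textbook boost to probability $1-\delta$ costing an extra $O(\log(1/\delta))$ factor.

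For (i), I would compute $\E[\|\bT'^m - \bT^m\|_F^2]$ using independence of the samples: entrywise this equals the variance of the corresponding entry of $y\,\bH^m(\bx)$ divided by $N$, so summing over entries gives
\[
\E\!\left[\|\bT'^m - \bT^m\|_F^2\right] \leq \frac{1}{N}\,\E_{\bx\sim\gaus_d}\!\left[\|\bH^m(\bx)\|_F^2\right],
\]
using only $|y|\le 1$. The key calculation is bounding $\E[\|\bH^m(\bx)\|_F^2]$. By the combinatorial formula in \Cref{def:Hermite-tensor-app}, each entry of $\bH^m(\bx)$ is (up to the $1/\sqrt{m!}$ normalization) a sum over partitions of a product of univariate Hermite polynomials of degrees summing to at most $m$, and normalized Hermite polynomials are orthonormal with respect to $\gaus_1$. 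A direct computation then yields $\E[\|\bH^m(\bx)\|_F^2] = d^{O(m)}$. Plugging in and applying Markov's inequality, $N_0 = d^{O(m)}/\eps^2$ samples suffice to ensure $\|\bT'^m - \bT^m\|_F \leq \eps$ with probability at least $3/4$.

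For (ii), I would apply the standard median-of-means construction: split $N = N_0 \cdot K$ samples into $K = O(\log(1/\delta))$ disjoint groups, compute an estimator $\bT'^m_k$ on each, and return the tensor $\bT'^m_{j^\ast}$ minimizing $\sum_k \|\bT'^m_{j^\ast} - \bT'^m_k\|_F$. A Chernoff bound ensures that with probability at least $1-\delta$, at least $2/3$ of the $\bT'^m_k$ lie within $\eps$ of $\bT^m$ in Frobenius norm; a short triangle-inequality argument then shows that $\bT'^m_{j^\ast}$ is within $O(\eps)$ of $\bT^m$. Rescaling $\eps$ by a constant gives the stated bound, and the runtime is dominated by forming the empirical averages, which is $\poly(d,N)$.

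The only step that is not completely mechanical is the variance bound $\E[\|\bH^m(\bx)\|_F^2] = d^{O(m)}$, which requires unpacking the normalization in \Cref{def:Hermite-tensor-app}; however, this is essentially the standard calculation showing that the entries of $\bH^m$ form an orthonormal basis of degree-$m$ Hermite polynomials after accounting for symmetry multiplicities. Beyond that, I do not anticipate any real obstacle.
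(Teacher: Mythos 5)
Your proposal is correct and follows essentially the same route as the paper: the empirical average as estimator, the second-moment bound $\E_{\bx\sim\gaus_d}[\|\bH^m(\bx)\|_F^2]\le d^{O(m)}$ obtained by noting each entry is (a scalar in $[0,1]$ times) a unit-variance Hermite polynomial, and then Chebyshev/Markov. The paper is terse about how the $\log(1/\delta)$ factor yields confidence $1-\delta$ from a Chebyshev bound; your explicit median-of-means amplification fills in that step cleanly but does not change the substance of the argument.
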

    \begin{proof}
        Let $\bT'^m$ be the empirical estimation of $\bT^m$ using $N=d^{cm}\log(1/\delta)/\eps^2$ for sufficiently large universal constant $c$.
        We start by showing $\E_{(\bx,y)\sim D}\left [\|y\bH^m(\bx)\|_F^2\right]\leq d^m$.
        Notice that,
        \[
        \E_{(\bx,y)\sim D}\left [\|y\bH^m(\bx)\|_F^2\right]
        =\E_{(\bx,y)\sim D}\left [y^2\|\bH^m(\bx)\|_F^2\right]
        \leq \E_{(\bx,y)\sim D}\left [\|\bH^m(\bx)\|_F^2\right]=\E_{\bx\sim \R^d}\left [\|\bH^m(\bx)\|_F^2\right]\; .
        \]
        Notice that each entry of $\bH^m(\bx)$ must be some $\alpha h(\bx)$ where $\alpha\in [0,1]$ and $h(\bx)$ is a unit variance Hermite polynomial.
        Therefore, $\E_{(\bx,y)\sim D}\left [ \|\bH^m(\bx)\|_F^2 \right ]\leq d^m$ and thus
        $\E_{(\bx,y)\sim D}\left [\|y\bH^m(\bx)\|_F^2\right]\leq d^m$.

        Using the above, we have 
        \[
        \E_{(\bx,y)\sim D}\left [\|y\bH^m(\bx)-\bT^m\|_F^2\right]=\E_{(\bx,y)\sim D}\left [\|y\bH^m(\bx)\|_F^2\right]-\|\bT^m\|_F^2\leq  d^m\; .
        \]
        Then, applying Chebyshev's inequality gives $\|\bT'^m-\bT^m\|_F\leq \eps$.      
    \end{proof}

    We prove \Cref{lem:finding-nontrivial-direction} for two cases: 
    (a) $\max(\log(1/\eps)^{-O({t^*}^2)},\eps^{O(1)})=\log(1/\eps)^{-O({t^*}^2)}$; and (b) $\max(\log(1/\eps)^{-O({t^*}^2)},\eps^{O(1)})=\eps^{O(1)}$.
    We first prove Case (a) as the other case is similar. 
    For Case (a), it suffices for us to prove that $\langle \bv,\bw^*\rangle=\log(1/\eps)^{-O({t^*}^2)}$ happens with $\Omega(1)$ probability given the algorithm chooses
    $\bT'^m=\E_{(\bx,y)\sim_u S}[\ind(y=-1)\bH^m(\bx)]$
    (which happens with 1/2 probability).
    Let $k=c_1{t^*}^2$ and $\tau/(4\sqrt{k})=2^{-c{t^*}^2}\gamma$.
    By \Cref{fct:chow_parameter_approximation}, we have $\|\bT'^m-\bT^m\|_F\leq \tau/(4\sqrt{k})$.
    Then from \Cref{lem:polynomial-correlation} and \Cref{lem:chow-subspace}, 
    if we take $\bv$ to be a random vector in $V$, we will with constant probability have,
    \begin{align*}        
    \langle \bv ,\bw^*\rangle &= \frac{\Omega\left (\tau/\left (\sqrt{k}\gamma \log (1/\gamma)^{k/2}\right )\right )}{\dim(V)^{1/2}}=\Omega\left (\tau^2/\left (\log(1/\gamma)^kk^{3/2}\gamma^2\right )\right ) \\
    &=\Omega\left (\log(1/\gamma)^{-O({t^*}^2)}{k}^{-3/2}\right )\; .
    \end{align*}
    Since $\pr_{(\bx,y)\sim D} [y=-1]\geq \eps$, we have
    $\gamma=\Omega(\eps)$. 
    Also, considering $k=O(\max\{{t^*}^2,1\})=O\left (\log(1/\eps)\right )$, we get
    $\langle \bv ,\bw^*\rangle=\log(1/\eps)^{-O({t^*}^2)}$.

    For Case (b), the proof remains the same except we take $\bT'^m=\E_{(\bx,y)\sim_u S}[y\bH^m(\bx)]$ and \new{use fact below instead of \Cref{lem:chow-subspace}.}
    \begin{fact} [Lemma 5.10 in \cite{Diakonikolas2022b}] \label{lem:chow-subspace-variant}
    Let $D$ be the joint distribution of $(\bx,y)$ supported on $\R^d\times \{ \pm 1\}$ with 
    the marginal $D_{\bx}=\gaus_d$.
    Let $p:\R\mapsto\R$ be a univariate, mean zero, unit variance polynomial of degree $k$ such that 
    for some unit vector $\mathbf{v^*}\in \R^d$ it holds 
    $\E_{(\bx,y)\sim D}[y p(\langle\mathbf{v^*,\bx} \rangle)]\geq \tau$ for some $\tau\in (0,1]$. 
    Let $\bT'^m$ be an approximation of the order-$m$ Chow-parameter tensor $\bT^m=\E_{(\bx,y)\sim D}[y{\bf H}_m(\bx)]$ such that 
    $\|\bT'^m-\bT^m\|_F\leq \tau/(4\sqrt{k})$. 
    Denote by $V_m$ the subspace spanned by the left singular vectors of flattened $\bT'^m$ whose 
    singular values are greater than $\tau/(4\sqrt{k})$. 
    Moreover, denote by $V$ the union of $V_1,\cdots,V_k$. Then we have that 
    \begin{enumerate}[leftmargin=*]
	   \item $\dim(V)\leq 4k/\tau^2$, and 
	   \item $\|\proj_V(\mathbf{v^*})\|_2\geq \tau/\left (4\sqrt{k}\right )$.
    \end{enumerate}
    \end{fact}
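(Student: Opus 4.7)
The plan is to run the same contradiction argument used in the proof of \Cref{lem:chow-subspace} given above, but taking advantage of the fact that here $|y|\le 1$ rather than $\ind(y=-1)$ having expectation $\gamma$, so no hypercontractivity-based bound is required and the $\gamma\log(1/\gamma)^{m/2}$ factors all collapse to the constant $1$. In particular, the roles of the two properties (dimension upper bound, projection lower bound) are identical; only the Frobenius bound on $\bT^m$ changes.

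First I would bound the Frobenius norm of the true tensor. Since $\|\bT^m\|_F$ equals the supremum of $\E_{(\bx,y)\sim D}[y\,q(\bx)]$ over mean-zero, unit-variance, degree-$m$ Hermite polynomials $q$ of $\bx$, and since $|y|\le 1$, this supremum is at most $\E[|q(\bx)|]\le \sqrt{\E[q(\bx)^2]}=1$ by Cauchy-Schwarz. Hence $\|\bT'^m\|_F\le 1+\tau/(4\sqrt{k})$. Property 1 is then immediate: the number of singular values of the flattened $\tilde\bT'^m$ above the cutoff $\tau/(4\sqrt{k})$ is at most $\|\bT'^m\|_F^2/(\tau/(4\sqrt{k}))^2=O(k/\tau^2)$, and summing over $m=1,\dots,k$ gives $\dim(V)\le 4k/\tau^2$ after absorbing constants into the statement.

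For Property 2 I would argue by contradiction. Assume $\|\proj_V(\bv^*)\|_2<\tau/(4\sqrt{k})$. Expanding $p$ in the orthonormal Hermite basis as $p(z)=\sum_{m=1}^{k}a_m h_m(z)$ with $\sum_m a_m^2=1$ (from mean-zero, unit-variance), the hypothesis $\E[y\,p(\langle\bv^*,\bx\rangle)]\ge\tau$ reads $\sum_m a_m\,(\bv^*)^\top \bT^m(\bv^*)^{\otimes(m-1)}\ge\tau$, so some $m^*$ satisfies $|(\bv^*)^\top \bT^{m^*}(\bv^*)^{\otimes(m^*-1)}|\ge\tau/\sqrt{k}$. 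Viewing this scalar as the inner product of $(\bv^*)^\top\tilde\bT^{m^*}$ with the unit vector $(\bv^*)^{\otimes(m^*-1)}$ in the flattened codomain, Cauchy-Schwarz plus the $\|\bT'^m-\bT^m\|_F\le\tau/(4\sqrt{k})$ approximation give $\|(\bv^*)^\top\tilde\bT'^{m^*}\|_2\ge(3/4)\tau/\sqrt{k}$. On the other hand, splitting $\bv^*$ into its $V$ and $V^\perp$ components and using that $\tilde\bT'^{m^*}$ acts on $V^\perp$ with operator norm at most $\tau/(4\sqrt{k})$ (by definition of $V_{m^*}\subseteq V$), while $\|\proj_V(\bv^*)\|_2<\tau/(4\sqrt{k})$ and $\|\tilde\bT'^{m^*}\|_F\le 1+\tau/(4\sqrt{k})$, yields $\|(\bv^*)^\top\tilde\bT'^{m^*}\|_2\le(5/8)\tau/\sqrt{k}$, a contradiction.

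The main obstacle I anticipate is constant-tracking in the contradiction step: the cutoff $\tau/(4\sqrt{k})$ must be small enough that the gap between the lower bound $(3/4)\tau/\sqrt{k}$ and upper bound $(5/8)\tau/\sqrt{k}$ opens up, yet large enough that Property 1 still yields $O(k/\tau^2)$; both succeed here because $k$ cancels cleanly. A secondary subtlety is ensuring the Hermite-basis decomposition $p=\sum_m a_m h_m$ with $\sum a_m^2=1$ is valid; this uses orthonormality of normalized Hermite polynomials under $\gaus_1$, the hypothesis that $p$ is mean-zero (so the $a_0$ term vanishes) and has unit variance (so the coefficients square-sum to one).
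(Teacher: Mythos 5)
The paper does not actually prove this statement---it is imported verbatim as a Fact from \cite{Diakonikolas2022b}---but it does give a full proof of the closely analogous \Cref{lem:chow-subspace}, and your proposal is essentially that proof with the hypercontractivity step replaced by the trivial bound $\|\bT^m\|_F\leq 1$ coming from $|y|\leq 1$ and Cauchy--Schwarz. Your argument for Property 2 is sound and matches the paper's contradiction argument step for step: the Hermite decomposition $p=\sum_{m=1}^k a_m h_m$ with $\sum a_m^2=1$, the existence of an $m^*$ with $|(\bv^*)^\top\bT^{m^*}(\bv^*)^{\otimes(m^*-1)}|\geq\tau/\sqrt{k}$, and the upper/lower bounds on $\|(\bv^*)^\top\tilde\bT'^{m^*}\|_2$ that collide.

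The one genuine slip is in Property 1. Your per-$m$ count of singular values above the cutoff is $\|\tilde\bT'^m\|_F^2/(\tau/(4\sqrt{k}))^2=O(k/\tau^2)$, and summing this over $m=1,\dots,k$ gives $O(k^2/\tau^2)$, not $4k/\tau^2$; the extra factor of $k$ is not a constant and cannot be ``absorbed into the statement.'' Note that the paper's own \Cref{lem:chow-subspace} indeed states $\dim(V)=O(\gamma^2\log(1/\gamma)^k k^2/\tau^2+1)$ with a $k^2$, consistent with what your calculation actually yields. So as written you have proved a slightly weaker dimension bound than the one claimed in the Fact. This is harmless for how the Fact is used (the application only needs $\dim(V)=\poly(k,1/\tau)$ to conclude $\langle\bv,\bw^*\rangle=\eps^{O(1)}$), but you should either state the bound you actually obtain or explain why the literal constant $4k/\tau^2$ holds (e.g., via the normalization used in the original lemma of \cite{Diakonikolas2022b}).
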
   
    Then the same argument will give $\langle \bv ,\bw^*\rangle=\eps^{O(1)}$.
    The above described algorithm uses at most $N=d^{O({t^*}^2)}/\eps^2$ samples and $\poly\left (N,1/\eps\right )$ runtime. This completes the proof of \Cref{lem:finding-nontrivial-direction}.
\end{proof}

\subsection{Proof of \Cref{lem:reduction_satisfies_reliability}}
     \Cref{prop:1} follows from the definition of $D'$ and the fact that $D$ has marginal distribution $D_\bx=\gaus_d$.

    For proving \Cref{prop:2},
    we consider two cases: The first case is when $t^*>0$ and the second one when $t^*\leq 0$.
    For the case $t^*\leq 0$, we prove that the distribution $D'$ satisfies the reliability condition with respect to $h'(\bx)=\sign(\langle\bw',\bx\rangle)$ where $\bw'={\bw^*}^{\perp \bw}/\|{\bw^*}^{\perp \bw}\|_2$.
    Notice that for any $(\bx,y)$ such that $\bx\in B$ and $h'(\bx^{\perp \bw})=1$, we have 
    \begin{align*}
    f(\bx)&=\sign(\langle \bw^*,\bx\rangle-t^*)\\
    &=\sign(\langle\proj_{\perp \bw}(\bw^*),\bx\rangle+\langle \proj_{\bw}(\bw^*),\bx\rangle-t^*)\\
    &=\sign(\langle {\bw^*}^{\perp \bw},{\bx}^{\perp \bw}\rangle+\langle \proj_{\bw}(\bw^*),\bx\rangle-t^*)\; ,
    \end{align*}
    where $\langle {\bw^*}^{\perp \bw},{\bx}^{\perp \bw}\rangle\geq 0$ since $h'(\bx)=\sign(\langle {\bw^*}^{\perp \bw},{\bx}^{\perp \bw}\rangle/\|{\bw^*}^{\perp \bw}\|_2)\geq 0$.
    Then we have
    \begin{align}    
    f(\bx)&\geq \sign(\langle \proj_{\bw}(\bw^*),\bx\rangle-t^*) \nonumber\\
    &= \sign(\langle \proj_{\bw}(\bw^*)/\|\proj_{\bw}(\bw^*)\|_2,\bx\rangle-t^*/\|\proj_{\bw}(\bw^*)\|_2 \nonumber\\
    &\geq  \sign(\langle \bw,\bx\rangle-t^*) \label{eq:step-3}\\
    &\geq  \sign(\langle \bw,\bx\rangle-t) \nonumber\\
    &=h(\bx)= +1 \nonumber\; ,
    \end{align}
    where Inequality \eqref{eq:step-3} follows from the fact that 
    $\langle \bw,\bw^*\rangle> 0$, $\|\proj_{\bw}(\bw^*)\|_2\in (0,1]$ and $t^*\leq 0$.
    Since $D$ satisfies the reliability condition with respect to $f$, we have that it must be the case
    $y=+1$. Therefore, $D'$ satisfies the reliability condition with respect to $h'$.

    For the case $t^*> 0$, we prove that the distribution $D'$ satisfies the reliability condition with respect to $h'(\bx)=\sign(\langle\bw',\bx\rangle-t')$ where $\bw'={\bw^*}^{\perp \bw}/\|{\bw^*}^{\perp \bw}\|_2$ and $t'=t^*$.
    Notice  that for any $(\bx,y)$ such that $\bx\in B$ and $h'(\bx^{\perp \bw})=1$, we have 
    \begin{align*}
    f(\bx)&=\sign(\langle \bw^*,\bx\rangle-t^*)\\
    &=\sign(\langle\proj_{\perp \bw}\bw^*,\bx\rangle+\langle \proj_{\bw}\bw^*,\bx\rangle-t^*)\\
    &=\sign\left (\|{\bw^*}^{\perp \bw}\|_2\langle \bw',\bx\rangle-\|{\bw^*}^{\perp \bw}\|_2t^*+\langle \proj_{\bw}\bw^*,\bx\rangle-(1-\|{\bw^*}^{\perp \bw}\|_2)t^*\right )\; ,
    \end{align*}
    where $\|{\bw^*}^{\perp \bw}\|_2\langle \bw',\bx\rangle-\|{\bw^*}^{\perp \bw}\|_2t^*\geq 0$
    since $h'(\bx)=+1$.
    Then we have
    \begin{align*}
    f(\bx)&\geq \sign(\langle \proj_{\bw}\bw^*,\bx\rangle-(1-\|{\bw^*}^{\perp \bw}\|_2)t^*)\\
    &=\sign\left (\sqrt{1-\|{\bw^*}^{\perp \bw}\|_2^2}\; \langle \bw,\bx\rangle-(1-\|{\bw^*}^{\perp \bw}\|_2)t^*\right )\\
    &\geq \sign\left (\sqrt{1-\|{\bw^*}^{\perp \bw}\|_2^2}\; (\langle \bw,\bx\rangle-t^*)\right )= h(\bx)=1\; ,
    \end{align*}
    where the second equation follows from $\langle \bw,\bw^*\rangle> 0$ and the third one follows from $t^*> 0$. 
    Since $D$ satisfies the reliability condition with respect to $f$, we have that it must be the case
    $y=+1$. Therefore, $D'$ satisfies the reliability condition with respect to $h'$.

    For proving \Cref{prop:3}, notice that $R_+(h;D)\geq \eps/2$ implies 
    \[
    \pr_{(\bx',y)\sim D'} [y=-1]=R_+(h;D)/\pr_{(\bx,y)\sim D}(\bx\in B)\geq \eps/2\; .
    \]
    This completes the proof.    

\subsection{Proof of \Cref{thm:main-algorithm-known-alpha}}
We first give the algorithm in \Cref{thm:main-algorithm-known-alpha}, which is a more detailed version of \Cref{alg:random-walk-reliable}.

\medskip

\begin{algorithm} [H]
    \caption{Reliably Learning General Halfspaces with Gaussian Marginals (detailed version of \Cref{alg:random-walk-reliable}).}
    \label{alg:random-walk-reliable-detail}
    \centering\fbox{\parbox{5.8in}{
        \textbf{Input:} $\eps\in (0,1)$, $\alpha\in (0,1/2)$ and sample access to a joint distribution $D$ of $(\bx,y)$ supported on $\R^d\times \{\pm 1\}$ with $\bx$-marginal $D_{\bx}=\gaus_d$. 
        
        \textbf{Output:} $h(\bx)=\sgn(\langle \bw,\bx\rangle-t)$ that is $\eps$-reliable with respect to the class $\mathcal{H}_d^\alpha$.

        \begin{enumerate} [leftmargin=8mm]
            \item  \label{step1-app}
            Check if $\pr_{(\bx,y)\sim D} [y=-1]\leq \eps/2$ (with sufficiently small constant failure probability). If so, return the $+1$ constant hypothesis.
            If the parameters satisfy
            \[
            \min\left (2^{\log(1/\eps)^{O(\log (1/\alpha))}},2^{\poly(1/\eps)}\right )=2^{\log(1/\eps)^{O(\log (1/\alpha))}}\; ,
            \]
            then set the correlation parameter $\zeta=\log(1/\eps)^{-c{t^*}^2}$, where $c$ is a sufficiently large universal constant.            
            Otherwise, set $\zeta=\eps^{c}$ for a sufficiently large universal constant $c$. 
            For $t=-\Phi^{-1}(\alpha)$ and $t=\Phi^{-1}(\alpha)$, do Steps \eqref{step2-app} and \eqref{step3-app}.  
            
            \item  \label{step2-app}
            Initialize $\bw$ to be a random unit vector in $\R^d$.
            Let the update step size $\lambda=\zeta$ and repeat the following process until 
            $\lambda\leq \eps/100$. 
            \begin{enumerate} 
                \item \label{step2a-app}
                Use samples from $D$ to check if the hypothesis $h(\bx)=\sign(\langle \bw,\bx\rangle-t)$ satisfies $R_+(h;D)\leq \epsilon/2$.
                If so, go to Step \eqref{step3-app}.
                
                \item  \label{step2b-app}
                With $1/2$ probability, let $\bw=-\bw$. Let $B=\{\bx\in \R^d :\langle\bw,\bx\rangle-t\geq 0\}$, and let $D'$ be the distribution of $(\bx^{\perp \bw},y)$ for $(\bx,y)\sim D$ given $\bx\in B$.
                Use the algorithm of \Cref{lem:finding-nontrivial-direction} on $D'$ to find a unit vector $\bv$ such that $\langle \bv,\bw\rangle=0$ and
                $\left \langle \bv, \frac{\proj_{\perp \bw}(\bw^*)}{\|\proj_{\perp \bw}(\bw^*)\|_2}\right \rangle\geq \zeta$.
                Then, update $\bw$ as follows: 
                $
                \bw_{\mathrm{update}}=\frac{\bw+\lambda\bv}{\|\bw+\lambda\bv\|_2}\; .
                $

                \item \label{step2c-app}
                Repeat Steps \eqref{step2a-app} and \eqref{step2b-app} $c/\zeta^2$ times, where $c$ is a sufficiently large universal constant, with the same step size $\lambda$.
                After that, update the new step size as $\lambda_{\mathrm{update}}=\lambda/2$.
                
            \end{enumerate}  
            \item  \label{step3-app} Check if $h(\bx)=\sign(\langle \bw,\bx\rangle-t)$ satisfies $R_+(h;D)\leq \eps/2$. If so, \new{add} it to the set $S$.
            For each choice of value $t$ in Step \eqref{step1-app}, repeat Step \eqref{step2-app}
            $2^{1/\zeta^c}$ many times where $c$ is a sufficiently large constant. 
            
            \item Let $S$ be the set in Step \eqref{step3-app}.
            Return the hypothesis $h(\bx)=\sign(\langle \bw,\bx\rangle-t)$, 
            where $(\bw,t)=\argmin_{(\bw,t)\in S} t$. Return the $-1$ constant hypothesis if $S$ is empty.
        \end{enumerate}
        }}
\end{algorithm}

\begin{proof} [Proof of \Cref{thm:main-algorithm-known-alpha}]
    Without loss of generality, we assume $\eps> \alpha$ for the following reason.
    Suppose $\alpha\leq \eps$, then we can simply return one of the constant hypotheses that is closest to $f$. 
    Let 
    $f(\bx)=\sign(\langle\bw^*,\bx\rangle-t^*)$ be the optimal LTF with $\alpha$ bias, namely, 
    \[
    f=\argmin_{f\in \mathcal{H}_d^\alpha\land R_+(h;D)=0} R_-(f;D)\; .
    \] 
    We prove that with high probability, \Cref{alg:random-walk-reliable-detail} returns a hypothesis $h(\bx)=\sign(\langle\bw,\bx\rangle-t)$ such that $R_+(h;D)\leq \eps$ and $R_-(h;D)\leq R_-(f;D)+\eps$.
    Notice that at some point, we will run Step \eqref{step2-app} with $t=t^*$.
    We show that given $t=t^*$, Step \eqref{step2-app} will with probability $2^{-\poly(1/\zeta)}$ 
    returns a $h$ with $R_+(h;D)\leq \eps/2$.
    For convenience, we assume $h$ never satisfies $R_+(h;D)\leq \eps/2$ in Step \eqref{step2a-app} (otherwise, we are done).
    Furthermore, we assume the subroutine algorithm in \Cref{lem:finding-nontrivial-direction}
    used in Step \eqref{step2b-app} always succeed, and we always have $\langle \bw,\bw^*\rangle\geq 0$, since both happen with constant probability
    and we are running the update at most $O(\log(1/\eps)/\zeta^2)=\poly(1/\zeta)$ many times.
    
    Let $\eta=\|\proj_{\perp \bw}\bw^*\|_2$.
    Now, we will prove by induction that each time after $c/\zeta^2$ many updates in step \eqref{step2b-app},
    we always have $\|\proj_{\perp \bw}\bw^*\|_2\leq 3\lambda$ (without loss of generality, we assume $\lambda$ is always at most a sufficiently small constant).
    Notice that by \Cref{lem:reduction_satisfies_reliability} and \Cref{lem:finding-nontrivial-direction}, we have that at each update, given the subroutine algorithm in \Cref{lem:finding-nontrivial-direction} succeeds,
    the update direction $\bv$ always satisfies $\left \langle \bv, \frac{\proj_{\perp \bw}(\bw^*)}{\eta}\right \rangle\geq \zeta$, which implies $\left \langle \bv, \proj_{\perp \bw}(\bw^*)\right \rangle\geq \eta\zeta$. 
    Then we have $\langle\bw,\bw^*\rangle=\sqrt{1-\eta^2}\geq 1-\eta^2$.    
    When we update $\bw$, there are two possibilities. Either $\lambda> \eta\zeta/2$ or $\lambda\leq  \eta\zeta/2$.
    If $\lambda\leq  \eta\zeta/2$, using \Cref{lem:coorelation-improvement}, 
    we will have 
    $\langle \bw_{\text{update}},\bw^*\rangle\geq \langle \bw,\bw^*\rangle+\lambda^2/2$.
    If $\lambda> \eta\zeta/2$, we have
    \begin{align*}
    \left \|\proj_{\perp \bw_{\text{update}}}(\bw^*)\right\|_2
    =&\left \|\proj_{\perp \bw^*}(\bw_{\text{update}})\right\|_2
    \leq \left \|\proj_{\perp \bw^*}(\bw+\lambda\bv)\right\|_2\\
    \leq &\left \|\proj_{\perp \bw^*}(\bw)\right\|_2+\lambda
    \leq 3\lambda\; .
    \end{align*}
    For the base case, when we do the first group of $c/\zeta^2$ many updates, combining the two cases above
    and the fact that $\lambda=\zeta\geq \eta\zeta$ gives that
    after $c/\zeta^2$ many updates, we must have $\|\proj_{\perp \bw}\bw^*\|_2\leq 3\lambda$.
    For the induction case, given $\eta\leq 6\lambda$ (which follows from the previous group of $c/\zeta^2$ many updates),
    combining the two cases above
    gives that
    after $c/\zeta^2$ many updates, we must have $\|\proj_{\perp \bw}\bw^*\|_2\leq 3\lambda$.
    This proves that after each group of $c/\zeta^2$ many updates, 
    we have $\|\proj_{\perp \bw}\bw^*\|_2\leq 3\lambda$.
    
    Therefore, when we have $\lambda\leq \eps/100$, we get $\|\proj_{\perp \bw}\bw^*\|_2\leq 3\eps/100$, which implies $R_+(h;D)\leq \eps/2$
    since $D$ satisfies the reliability condition with respect to $f$.
    Given $R_+(h;D)\leq \eps/2$, using the fact that $t-t^*\leq 0$ (since we always output the hypothesis with the smallest $t$) implies $\pr_{(\bx,y)\sim D}[h(\bx)=1]\geq \pr_{(\bx,y)\sim D}[f(\bx)=1]$, we have that
    \begin{align*}
        R_-(h;D) &=\pr_{(\bx,y)\sim D}[y=1]-\pr_{(\bx,y)\sim D}[y=1\land h(\bx)=1]\\
        &=\pr_{(\bx,y)\sim D}[y=1]-\pr_{(\bx,y)\sim D}[h(\bx)=1]+R_+(h;D)\\
        &\leq \pr_{(\bx,y)\sim D}[y=1]-\pr_{(\bx,y)\sim D}[f(\bx)=1]+\eps= R_-(f;D)+\eps\; .
    \end{align*}
    This completes the proof.   
\end{proof}

\section{Omitted Proofs from \Cref{sec:lb}} \label{app:lb}
We first restate our main SQ hardness result.

\begin{theorem}[SQ Lower Bound for Reliable Learning] \label{thm:sq-reliable}
For any $\alpha>0$ that is at most a sufficiently small absolute constant and $\eps\in (0,\alpha/3)$, 
any SQ algorithm that reliably learns $\alpha$-biased LTFs (for positive labels) on $\R^d$ under Gaussian marginals 
to additive error $\eps$ either
(i) requires at least one query of tolerance at most $d^{-\Omega\left (\log\left (\frac{1}{\alpha}\right )\right )}$, or 
(ii) requires at least $2^{d^{\Omega(1)}}$ queries.
\end{theorem}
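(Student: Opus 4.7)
The plan is to establish the lower bound via the standard two-step pipeline: first prove a lower bound for the distinguishing problem $\mathcal{B}(\mathcal{D}, D_{\nul})$ (where $D_{\nul}$ is the product of $\gaus_d$ with a uniform $\{\pm 1\}$ label) using the SQ dimension machinery of \Cref{lem:sq-lb}, and then reduce this distinguishing problem to reliable learning with $\eps<\alpha/3$. Set $n=\Theta(\log(1/\alpha))$ chosen so that $3^{-2n}/4 = \alpha$, and let $D$ be the one-dimensional distribution provided by \Cref{lem:distribution}, which matches the first $n$ moments of $\gaus_1$ in all of its marginals and is $+1$-labeled almost surely above $c = \Phi^{-1}(1-\alpha)$.

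The family $\mathcal{D}$ is built by the hidden-direction construction of \cite{DKS17-sq,DKPZ21}: for each unit vector $\bv\in \mathbb{S}^{d-1}$, define $D_\bv$ to be the joint distribution on $\R^d\times\{\pm 1\}$ whose conditional law of $(\langle \bv,\bx\rangle, y)$ equals $D$ and whose conditional law of $\proj_{\perp \bv}(\bx)$ given $\langle \bv,\bx\rangle$ is $\gaus_{d-1}$, independent of $y$. The marginal on $\bx$ is $\gaus_d$, and the LTF $f_\bv(\bx)=\sgn(\langle \bv,\bx\rangle - c)$ is $\alpha$-biased and satisfies the reliability condition with respect to $D_\bv$ by property (ii) of \Cref{lem:distribution}. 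The first step is to invoke the standard correlation bound for this framework: because all marginals of $D$ match the first $n$ moments of $\gaus_1$ and $\chi^2(D_{\pm},\gaus_1)=O(1)$, one obtains $|\chi_{D_{\nul}}(D_\bv, D_{\bv'})| \le |\langle \bv,\bv'\rangle|^{n+1}\cdot O(1)$ for distinct $\bv,\bv'$ and $\chi^2(D_\bv, D_{\nul})=O(1)$ uniformly. Then a packing argument on $\mathbb{S}^{d-1}$ yields a set $V$ of size $2^{d^{\Omega(1)}}$ with pairwise inner products at most $d^{-1/2+o(1)}$, giving pairwise correlation $\gamma = d^{-\Omega(n)} = d^{-\Omega(\log(1/\alpha))}$ and $\beta = O(1)$.

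Plugging into \Cref{lem:sq-lb} with $\gamma' = \gamma$ shows that any SQ algorithm solving $\mathcal{B}(\mathcal{D}, D_{\nul})$ either uses a query of tolerance $d^{-\Omega(\log(1/\alpha))}$ or makes $2^{d^{\Omega(1)}}$ queries. The remaining step is the reduction from this distinguishing task to reliable learning. Suppose $\mathcal{A}$ is an SQ reliable learner for $\mathcal{H}_d^\alpha$ with accuracy $\eps<\alpha/3$. Run $\mathcal{A}$ on the unknown input distribution $D'$ and obtain a hypothesis $h$; then use a constant number of additional statistical queries to estimate $R_+(h;D')$ and $R_-(h;D')$. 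If $D'=D_\bv$, the reliability guarantee forces $R_+(h;D')\le \eps<\alpha/3$ and $R_-(h;D')\le R_-(f_\bv;D_\bv)+\eps = \eps<\alpha/3$, since $f_\bv$ is fully reliable and has $\alpha$ bias on the positive side. On the other hand, if $D'=D_{\nul}$, for any hypothesis $h$ the label is independent of $\bx$ and balanced, so $R_+(h;D_{\nul}) = \tfrac12\Pr[h(\bx)=1]$ and $R_-(h;D_{\nul})=\tfrac12\Pr[h(\bx)=-1]$, which sum to $1/2$; in particular at least one of them exceeds $1/4 > \alpha/3$. Comparing the two sides of this dichotomy with a single thresholding query distinguishes the cases, completing the reduction.

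The main technical obstacle I anticipate is the correlation bound $|\chi_{D_{\nul}}(D_\bv,D_{\bv'})|\le |\langle\bv,\bv'\rangle|^{n+1}\cdot O(1)$: because the label variable is discrete and not Gaussian, one cannot directly quote the vanilla Gaussian hidden-direction identity. The right way to handle this is to expand the density ratio $dD_\bv/dD_{\nul}$ in the product basis of Hermite polynomials in $\bx$ and Rademacher characters in $y$, use the moment-matching property (iii) of \Cref{lem:distribution} (which makes all such Hermite coefficients of degree at most $n$ in the $\bv$-direction vanish on both $y=\pm 1$ conditional distributions), and then read off the pairwise correlation as the sum over basis elements, each weighted by $\langle \bv,\bv'\rangle^{\text{degree}}$. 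The $\chi^2$-bound in property (iv) is exactly what controls the higher-degree tail. Given this bound, the rest of the argument is routine bookkeeping with the SQ dimension.
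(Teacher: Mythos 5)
Your construction of the hard family and the application of \Cref{lem:sq-lb} match the paper's argument, and your plan for the correlation bound (expanding in a Hermite--Rademacher product basis and using moment matching plus the $\chi^2$ control) is essentially what \Cref{lem:distribution-set} does. The problem is in the reduction. You assert that in the planted case $R_-(h;D_\bv)\le R_-(f_\bv;D_\bv)+\eps=\eps$ because ``$f_\bv$ is fully reliable.'' This is false: by \Cref{lem:distribution} the labels satisfy $\pr[y=1]=1/2$, while $f_\bv$ puts only $\alpha$ mass on its positive side, so $R_-(f_\bv;D_\bv)=\pr[\langle\bv,\bx\rangle<c\wedge y=1]=1/2-\alpha$, and indeed $\opt$ for $D_\bv$ is on the order of $1/2-\alpha$, not $0$ (no halfspace can have zero false-negative error here, since $\E[y\mid z]\in(-1,1)$ below the threshold). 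Consequently your distinguishing test breaks: in the planted case the learner is only guaranteed $R_-(h;D_\bv)\le 1/2-\alpha+\eps$, which for small $\alpha$ exceeds your threshold $1/4$, so the test would classify a planted instance as null.

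The fix is to distinguish using the single statistic $\pr_{\bx}[h(\bx)=1]$ (equivalently, using $R_-$ alone rather than requiring both errors to be small). Under $D_\nul$, reliability forces $R_+(h;D_\nul)=\tfrac12\pr[h(\bx)=1]\le\eps$, so $\pr[h(\bx)=1]\le 2\eps$. Under $D_\bv$, the witness $f_\bv$ has $R_+(f_\bv;D_\bv)=0$ and $R_-(f_\bv;D_\bv)\le 1/2-\alpha$, so reliability gives $R_-(h;D_\bv)\le 1/2-\alpha+\eps$; combined with $R_-(h;D_\bv)\ge \pr[y=1]-\pr[h(\bx)=1]=1/2-\pr[h(\bx)=1]$, this yields $\pr[h(\bx)=1]\ge\alpha-\eps$. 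The gap $\alpha-3\eps>0$ (using $\eps<\alpha/3$) is detectable with one additional query, which is exactly how the paper closes the reduction.
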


\subsection{Proof of \Cref{lem:polynomial}}
We let $P_n$ denote the set of all polynomials $p:\R\to \R$ of degree at most $n$ and let
$L^1_+(\R)$ denote the set of all nonnegative functions in $L^1(\R)$.
First, we write the conditions as the primal linear program.
\begin{alignat*}{5}
    &\text{find} \quad &g\in L^1(\R)\\
    &\text{such that} \quad & \E_{z\sim \gaus_1}[p(z)g(z)] &= 0 \; ,& \forall p&\in P_n\\
    &             \quad & g(z) & \geq 1\; , & \quad \forall z&\geq c\\
    &             \quad &\|g\|_\infty&\leq 1& \quad\\
\end{alignat*}
By introducing variables $H\in L^1(R)$ and $h\in L_+^1(R)$, we rewrite the primal LP as
\begin{alignat*}{7}
    &\text{find} \quad& g\in L^1(\R)\\
    &\text{such that}\quad& \E_{z\sim \gaus_1}[p(z)g(z)] &= 0\; ,& \forall p&\in P_n\\
    &      \quad&\E_{z\sim \gaus_1}[g(z)h(z)\1\{z\geq c\}] & \geq \|h(z) \1\{z\geq c\}\|_1\; , & \quad \forall h&\in L_+^1(\R)                \\
    &      \quad&\E_{z\sim \gaus_1}[g(z)H(z)]&\leq \|H\|_1\; ,   &\quad \forall H&\in L^1(\R)\\
\end{alignat*}

The following fact is an application of (an infinite generalization of) 
LP duality:

\begin{claim} \label{fct:nonexist-polynomial}
The LP above is feasible if there exists no polynomial $p$ of degree $n$ such that
\[\E_{z\sim \gaus_1}[|p(z)|\mathds{1}(z\leq c)]<\E_{z\sim \gaus_1}[p(z)\mathds{1}(z\geq c)]\; .\]
\end{claim}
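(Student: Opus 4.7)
The plan is to prove the contrapositive: assuming the primal LP is infeasible, I will construct a polynomial $p\in P_n$ witnessing $\E_{z\sim\gaus_1}[|p(z)|\mathds{1}\{z\leq c\}]<\E_{z\sim\gaus_1}[p(z)\mathds{1}\{z\geq c\}]$. The argument proceeds by reducing feasibility to hitting a target vector in $\R^N$ (where $N=\dim P_n$) under a linear map from a weak-$*$ compact convex set of test functions, and then separating the target from the image.

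First I would simplify the feasible set. The $h$-indexed constraints are equivalent to $g(z)\geq 1$ a.e.\ on $[c,\infty)$, and the $H$-indexed constraints are equivalent to $\|g\|_\infty\leq 1$ (by the $L^1$--$L^\infty$ duality with respect to the Gaussian measure); together these force $g(z)=1$ a.e.\ on $[c,\infty)$. Writing $g=\mathds{1}_{\{z\geq c\}}+g_0$, the LP becomes: find $g_0$ in
\[
B \,:=\, \{f\in L^\infty(\R,\gaus_1):\ \|f\|_\infty\leq 1,\ f=0\text{ a.e.\ on }[c,\infty)\}
\]
such that $\E_{\gaus_1}[p_i(z)\,g_0(z)]=-\E_{\gaus_1}[p_i(z)\mathds{1}_{\{z\geq c\}}]$ for each element $p_1,\dots,p_N$ of a basis of $P_n$. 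Next I would set up the separation. Define $T\colon L^\infty(\R,\gaus_1)\to\R^N$ by $T(f)_i=\E_{\gaus_1}[p_i(z)f(z)]$; this map is weak-$*$ continuous because each $p_i$ lies in $L^1(\gaus_1)$. The set $B$ is convex and weak-$*$ compact by Banach--Alaoglu, so $T(B)\subset\R^N$ is convex and compact. Letting $v\in\R^N$ with $v_i=\E_{\gaus_1}[p_i(z)\mathds{1}_{\{z\geq c\}}]$, infeasibility of the primal is exactly the statement $-v\notin T(B)$. By the finite-dimensional separating hyperplane theorem there exist $a_1,\dots,a_N\in\R$ and $\delta>0$ such that $\sum_i a_i T(f)_i\geq -\sum_i a_i v_i+\delta$ for every $f\in B$.

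Finally, setting $p:=\sum_i a_i p_i\in P_n$, this separation reads $\E_{\gaus_1}[p(z)f(z)]\geq -\E_{\gaus_1}[p(z)\mathds{1}_{\{z\geq c\}}]+\delta$ for all $f\in B$. Minimizing the left-hand side over $B$ is a pointwise optimization with optimizer $f(z)=-\sgn(p(z))\mathds{1}_{\{z<c\}}$ and value $-\E_{\gaus_1}[|p(z)|\mathds{1}_{\{z<c\}}]$, so
\[
\E_{\gaus_1}[p(z)\mathds{1}_{\{z\geq c\}}]\ \geq\ \E_{\gaus_1}[|p(z)|\mathds{1}_{\{z<c\}}]+\delta\ >\ \E_{\gaus_1}[|p(z)|\mathds{1}_{\{z\leq c\}}],
\]
where the last step uses that $\{z=c\}$ has Gaussian measure zero. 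This is the desired polynomial, completing the contrapositive. The only technical obstacle is the infinite-dimensional setup — specifically, checking weak-$*$ compactness of $B$ and weak-$*$ continuity of $T$ so that the separation can be carried out in $\R^N$ rather than in $L^\infty$; both reduce to Banach--Alaoglu and finiteness of Gaussian polynomial moments, and everything downstream is elementary finite-dimensional convex analysis.
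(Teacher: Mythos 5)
Your proof is correct, and it takes a genuinely different route from the paper's. The paper invokes an infinite-dimensional LP duality theorem of Fan, forms the convex cone $\mathcal{S}_+$ generated by the constraint tuples in $L^1(\R)\times\R$, and must separately verify that this cone is closed (via a compactness argument on the polynomial component) before relating membership of $(0,1)$ in the cone to the existence of the witnessing polynomial. You instead first collapse the $h$- and $H$-indexed constraints to the pointwise conditions $g=1$ a.e.\ on $[c,\infty)$ and $\|g\|_\infty\le 1$ (both equivalences are right, using nonnegativity of $h$ and $L^1$--$L^\infty$ duality for the Gaussian measure), which lets you parametrize the feasible set by the convex, symmetric, weak-$*$ compact set $B$ and push the entire problem down to $\R^N$ via the moment map $T$; the weak-$*$ continuity of $T$ and compactness of $B$ are exactly the routine checks you flag. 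Infeasibility then becomes exclusion of the single point $-v$ from the compact convex set $T(B)$, and finite-dimensional separation hands you the polynomial; the pointwise evaluation $\inf_{f\in B}\E_{z\sim\gaus_1}[p(z)f(z)]=-\E_{z\sim\gaus_1}[|p(z)|\1\{z<c\}]$ and the discard of the Gaussian-null set $\{z=c\}$ are both correct. What your approach buys: it is self-contained (Banach--Alaoglu plus elementary convex separation in $\R^N$, no black-box infinite LP duality, no cone-closedness argument), and it proves the contrapositive of exactly the implication the claim asserts. This last point is worth emphasizing: the paper's write-up verifies ``such a polynomial exists $\Rightarrow(0,1)\in\mathcal{S}_+$,'' which via Fan's theorem yields the converse of the stated claim; the direction actually needed (infeasible $\Rightarrow$ polynomial exists) is what your argument delivers directly.
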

\begin{proof}[Proof of~\Cref{fct:nonexist-polynomial}]
        First, we introduce some notation.
        We use $(\tilde{h}, c)$ for the inequality $\E_{z \sim \normal}[\beta(z) \tilde{h}(z)] +c\leq 0$,
        where $\tilde{h}\in L^1(\R)$ and $c\in \R$. Moreover, let $\cal S$ be the set that contains all such tuples that describe the target system. For the set $\cal S$, the closed convex cone over $L^1(\R)\times \R$ is the smallest closed set ${\cal S}_+$ satisfying the following:
        (a) if $A\in {\cal S}_+$ and $B\in {\cal S}_+$ then $A+B \in {\cal S}_+$;  and
        (b) if $A\in {\cal S}_+$ then $\lambda A \in {\cal S}_+$ for all $\lambda\geq 0$.       
        Note that the ${\cal S}_+$ contains the same feasible solutions as $\cal S$.
        In order to prove the statement, we need the following LP duality from \cite{Fan68}.
        \begin{fact}[Theorem 1 of \cite{Fan68}]\label{fct:fan}
            If $\cal X$ is a locally convex, real separated vector space. Then, a linear system described by $\cal S$ is feasible (i.e., there exists a $g\in {\cal X}^*$) if and only if $(0,1)\not\in {\cal S}_+$.
        \end{fact}
        Our dual LP is defined by the following inequalities: 
        \begin{enumerate}[leftmargin=*]
            \item [(a)] $(p,0)$ for $p\in {\cal P}_{n}$;
            \item [(b)] $(-\tilde h,-\|\tilde h\|_1)$ for all $\tilde h(z) =h(z)\1\{z\geq c\}$,
            where $h\in L_+^1(\R)$; and 
            \item [(c)] and  $(H,-\|H\|_1)$ for all $H\in L^1(\R)$. 
        \end{enumerate}
        Furthermore, the LP is also equivalent to breaking the last inequality into two, 
        i.e.,  
        $(\tilde H_1,-\|\tilde H_1\|_1)$
        for all $\tilde H_1(z) =H_1(z)\1\{z< c\}$, where $H_1\in L^1(\R)$, 
        and $(\tilde H_2,-\|\tilde H_2\|_1)$
        for all $\tilde H_2(z) =H_2(z)\1\{z\geq c\}$ where $H_2\in L_+^1(\R)$.
        By taking the convex cone defined from the above inequalities, we get the following set
        \begin{align*}
        {\cal S}_+= \Big \{\left (p-\tilde h+\tilde H_1 +\tilde H_2,\|\tilde h\|_1- \|\tilde H_1+\tilde H_2\|_1\right )\Big |\;
        & p\in{\cal P}_{n},\\ 
        & \tilde h(z) =h(z)\1\{z\geq c\} \text{ for } h\in L_+^1(\R),\\
        &\tilde H_1(z) =H_1(z)\1\{z< c\} \text{ for } H_1\in L^1(\R), \\       
        &\tilde H_2(z) =H_2(z)\1\{z\geq c\} \text{ for } H_2\in L_+^1(\R) \Big\}\; .
        \end{align*}
        
        We first show that ${\cal S}_+$ is closed.
        To do so, we prove that ${\cal S}_+$ is closed under limits.
        First, we assume, in order to reach a contradiction, there is a sequence $(p_i,h_i, \tilde H_{1,i},\tilde H_{2,i})$
        so that 
        \[(F_i,\lambda_i)=\left (p_i-\tilde h_i+\tilde H_{1,i} +\tilde H_{2,i},\|\tilde h_i\|_1- \|\tilde H_{1,i}+\tilde H_{2,i}\|_1\right )\]
        has $F_i$ converges to $F_{\lim}$ with respect to $L_1$-norm and $\lambda_i$ converges to $\lambda_{\lim}$. We claim then that $\left (F_{\lim}, \lambda_{\lim}\right )\in {\cal S}_+$.
        First, notice that $\|p_i\|_1\leq \|F_i\|_1+\|\tilde h_i\|_1+\|\tilde H_{1,i}\|_1+\|\tilde H_{2,i}\|_1\leq \|F_i\|_1+3$.
        Therefore, there must be a $k$ such that for any $i\geq k$,
        $\|p_i\|_1\leq \|F_{\lim}\|_1+4$.

        Then, since an $L_1$ ball in $\mathcal P_n$ with radius $\|F_{\lim}\|_1+4$ is compact,
        there must be a subsequence of $(p_i,h_i, \tilde H_{1,i},\tilde H_{2,i})$ such that 
        $p_i$ converges to $p_{\lim}$ for some $p_{\lim} \in {\cal P}_{n}$.
        Notice that due to the positivity of $\tilde h_i$ and $\tilde H_{2,i}$, it must be 
        $\tilde h_i=(F_i-p_i)^{-}\1(z\geq c)$, $\tilde H_{2,i}=(F_i-p_i)^{+}\1(z\geq c)$
        and $\tilde H_{1,i}=(F_i-p_i)\1(z< c)$.
        Therefore, $\tilde h_i$, $\tilde H_{1,i}$ and $\tilde H_{2,i}$ converge to
        $\tilde h_{\lim}=(F_{\lim}-p_{\lim})^{-}\1(z\geq c)$, $\tilde H_{2,\lim}=(F_{\lim}-p_{\lim})^{+}\1(z\geq c)$
        and $\tilde H_{1,\lim}=(F_{\lim}-p_{\lim})\1(z< c)$ respectively.
        Then by taking $(p_{\lim},h_{\lim}, \tilde H_{1,{\lim}},\tilde H_{2,{\lim}})$,
        one can see that $\left (F_{\lim}, \lambda_{\lim}\right )\in {\cal S}_+$.

        It only remains to verify that if there exists a polynomial $p$ of degree $n$ such that
        \[\E_{z\sim \gaus_1}[|p(z)|\mathds{1}(z\leq c)]<\E_{z\sim \gaus_1}[p(z)\mathds{1}(z\geq c)]\; ,\]
        then $(0,1)\in {\cal S}_+$.
        Let $p$ be the polynomial satisfying the above. We can simply take
        $\tilde h_i=(p_i)^{+}\1(z\geq c)$, 
        $\tilde H_{2,i}=(p_i)^{-}\1(z\geq c)$
        and $\tilde H_{1,i}=-p_i\1(z< c)$.
        This gives 
        \[\left (p_i-\tilde h_i+\tilde H_{1,i} +\tilde H_{2,i},\|\tilde h_i\|_1- \|\tilde H_{1,i}+\tilde H_{2,i}\|_1\right )=(0,k) \in  {\cal S}_+\; ,\]
        where 
        \begin{align*}
            k
            &=\|(p_i)^{+}\1(z\geq c)\|_1-\|(p_i)^{-}\1(z\geq c)-p_i\1(z< c)\|_1\\
            &=\|(p_i)^{+}\1(z\geq c)\|_1-\|(p_i)^{-}\1(z\geq c)\|_1-\|p_i\1(z< c)\|_1\\
            &=\E_{z\sim \gaus_1}[p(z)\mathds{1}(z\geq c)]-\E_{z\sim \gaus_1}[|p(z)|\mathds{1}(z\leq c)]
            >0
        \end{align*}
        Then, by properly rescaling $(0,k)$, we get $(0,1)\in {\cal S}_+$.
        This completes the proof.
\end{proof}

Given \Cref{fct:nonexist-polynomial}, to prove \Cref{lem:polynomial}, it only remains to show that
there does not exist a polynomial $p$ of degree at most $n$ such that 
\[\E_{z\sim \gaus_1}[|p(z)|\mathds{1}(z\leq c)]<\E_{z\sim \gaus_1}[p(z)\mathds{1}(z\geq c)]\; .\]
First, the above condition implies that
$\E_{z\sim \gaus_1}[|p(z)|]<2\E_{z\sim \gaus_1}[|p(z)|\mathds{1}(z\geq c)]$.
Using the Cauchy–Schwarz inequality, we get
\begin{equation}\label{ieq:cauchy}
\E_{z\sim \gaus_1}[|p(z)|]
	<2\E_{z\sim \gaus_1}[|p(z)|^2]^{1/2}\left (\pr_{z\sim \gaus_1}[z\geq c]\right )^{1/2}\;.
\end{equation}
We then give the following claim.
\begin{claim} \label{fct:norm-inequal}
Let $p:\R\mapsto\R$ be any polynomial of degree at most $n$. Then, it holds that
\[\E_{z\sim \gaus_1}[|p(z)|^2]^{1/2}\leq 3^n\E_{z\sim \gaus_1}[|p(z)|]\;.\]
\end{claim}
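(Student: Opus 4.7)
The plan is to compare the $L^1$ and $L^2$ norms of $p$ under $\gaus_1$ by using the $L^4$ norm as an intermediate quantity, and then controlling the $L^4$ norm via Gaussian hypercontractivity. This is a standard anticoncentration-type argument for low-degree polynomials in Gaussian space.

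First, I would invoke Gaussian hypercontractivity (the Bonami--Beckner inequality), which states that for any degree-$n$ polynomial $p$ and any $q \geq 2$,
\[
\E_{z\sim \gaus_1}[|p(z)|^q]^{1/q} \leq (q-1)^{n/2} \E_{z\sim \gaus_1}[|p(z)|^2]^{1/2}\; .
\]
Specializing to $q=4$, this yields
\[
\E_{z\sim \gaus_1}[|p(z)|^4]^{1/4} \leq 3^{n/2}\, \E_{z\sim \gaus_1}[|p(z)|^2]^{1/2}\; .
\]
This is essentially the same inequality already used in the paper via \Cref{fct:hypercontractivity_concentration} (or can be cited from \cite{Bog:98, nelson1973free}).

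Next, I would interpolate between $L^1$ and $L^4$ using Hölder's inequality to get a lower bound on $\|p\|_1$ in terms of $\|p\|_2$ and $\|p\|_4$. Concretely, writing $p^2 = |p|^{2/3}\cdot |p|^{4/3}$ and applying Hölder with conjugate exponents $3$ and $3/2$,
\[
\E_{z\sim \gaus_1}[p(z)^2] \leq \E_{z\sim \gaus_1}[|p(z)|]^{2/3}\, \E_{z\sim \gaus_1}[|p(z)|^4]^{1/3}\; .
\]
Substituting the hypercontractive bound on the $L^4$ factor gives
\[
\E_{z\sim \gaus_1}[p(z)^2] \leq \E_{z\sim \gaus_1}[|p(z)|]^{2/3}\, \bigl(3^{n/2}\, \E_{z\sim \gaus_1}[p(z)^2]^{1/2}\bigr)^{4/3}
= \E_{z\sim \gaus_1}[|p(z)|]^{2/3}\, 3^{2n/3}\, \E_{z\sim \gaus_1}[p(z)^2]^{2/3}\; .
\]

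Finally, rearranging this inequality by dividing both sides by $\E_{z\sim \gaus_1}[p(z)^2]^{2/3}$ (we may assume $p$ is not identically zero, since the claim is trivial otherwise) yields $\E_{z\sim \gaus_1}[p(z)^2]^{1/3} \leq 3^{2n/3}\, \E_{z\sim \gaus_1}[|p(z)|]^{2/3}$, i.e., $\E_{z\sim \gaus_1}[p(z)^2]^{1/2} \leq 3^n\, \E_{z\sim \gaus_1}[|p(z)|]$, as desired. There is no real obstacle here; the only substantive ingredient is Gaussian hypercontractivity, everything else is Hölder and arithmetic.
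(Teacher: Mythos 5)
Your proof is correct and follows essentially the same route as the paper's: Hölder's inequality with exponents $3$ and $3/2$ applied to $|p|^2 = |p|^{2/3}\cdot|p|^{4/3}$ to interpolate $\|p\|_2$ between $\|p\|_1$ and $\|p\|_4$, followed by Gaussian hypercontractivity with $q=4$ to bound $\|p\|_4$ by $3^{n/2}\|p\|_2$, and a final rearrangement. The arithmetic checks out and there is no gap.
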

\begin{proof} [Proof for \Cref{fct:norm-inequal}]
The proof is based on the following well-known fact: for any function $g:\R\mapsto\R$, 
it holds that (from Holder's inequality)
\[\E_{z\sim \gaus_1}[|g(z)|^2]=\E_{z\sim \gaus_1}[|g(z)|^{2/3}|g(z)|^{4/3}]\leq 
	\E_{z\sim \gaus_1}[|g(z)|]^{2/3}\E_{z\sim \gaus_1}[|g(z)|^4]^{1/3}\; ,\]
thus
\begin{equation}\label{eqn:1}  
\E_{z\sim \gaus_1}[|g(z)|^2]^{1/2}\leq 
	\E_{z\sim \gaus_1}[|g(z)|]^{1/3}\E_{z\sim \gaus_1}[|g(z)|^4]^{1/6}\; .
\end{equation}
We then use \Cref{lem:hc} (Gaussian Hypercontractivity) to bound the term $\E_{z\sim \gaus_1}[|g(z)|^4]^{1/6}$.
\begin{fact} [Gaussian Hypercontractivity] \label{lem:hc}
Let $p:\R\mapsto \R$ be any polynomial of degree at most $l$. 
Then, for $q\geq 2$, 
it holds 
\[\E_{x\sim \gaus_1}[|p(x)|^q]\leq (q-1)^{ql/2}\left (\E_{x\sim \gaus_1}[p^2(x)]\right )^{q/2}\; .\]
\end{fact}
Now we apply \Cref{lem:hc} and choose $l=n$ and $q=4$.
This implies 
$\E_{z\sim \gaus_1}[|p(z)|^4]\leq 3^{2n}\E_{z\sim \gaus_1}[|p(z)|^2]^2$,
which is 
$\E_{z\sim \gaus_1}[|p(z)|^4]^{1/6}\leq 3^{n/3}\E_{z\sim \gaus_1}[|p(z)|^2]^{1/3}$.
Plugging it into \Cref{eqn:1}, we get that for any polynomial $p$ of at most degree-$n$,
\[\E_{z\sim \gaus_1}[|p(z)|^2]^{1/2}\leq 
	3^{n/3}\E_{z\sim \gaus_1}[|p(z)|]^{1/3}\E_{z\sim \gaus_1}[|p(z)|^2]^{1/3}\; .\] 
This implies 
\[\E_{z\sim \gaus_1}[|p(z)|^2]^{1/2}\leq 3^n\E_{z\sim \gaus_1}[|p(z)|]\;.\]
\end{proof}
Applying \Cref{fct:norm-inequal} to \Cref{ieq:cauchy}, we get
\[\E_{z\sim \gaus_1}[|p(z)|]
	<2\cdot 3^n\E_{z\sim \gaus_1}[|p(z)|]\left( \pr_{z\sim \gaus_1}[z\geq c]\right )^{1/2}\;.\]
From our choice of the parameter $c$, we have that $\pr_{z\sim \gaus_1}[z\geq c]\leq 3^{-2n}/4$, 
thus 
\[\E_{z\sim \gaus_1}[|p(z)|]<\E_{z\sim \gaus_1}[|p(z)|]\; ,\]
contradiction.
Therefore, such a polynomial $p$ cannot exist. 
Thus, a function $g$ satisfying the requirements in the body of the lemma exists.

\subsection{Proof of \Cref{lem:distribution}}
We let $D$ be the unique distribution such that $z\sim \gaus_1$ and $\E_{(z,y)\sim D}[y|z=z']=g(z')$ where 
$g$ is the function from \Cref{lem:polynomial}. 
We claim that $D$ satisfies the conditions in this lemma's statement.
Condition (i) is immediate from the definition of $D$.
Then Condition (ii) is immediately implied
from Condition (a) in \Cref{lem:polynomial}. 
	
For Condition (iii), Condition (b) in  \Cref{lem:polynomial} 
implies $\E_{z\sim \gaus_1}[g(z)]=0$
which immediately implies 
$\E_{(z,y)\sim D}[y]=0$. 
To show that 
\[\E_{(z,y)\sim D}[z^k]=\E_{(z,y)\sim D}[z^k\mid y=1]=\E_{(z,y)\sim D}[z^k\mid y=-1]\]  for all $k\in [n]$,
notice that for all $k\in [n]$,
\begin{equation}\label{eqn:first}
\E_{(z,y)\sim D}[z^k]
=\pr_{(z,y)\sim D}[y=1]\E_{(z,y)\sim D}[z^k\mid y=1]+\pr_{(z,y)\sim D}[y=-1]\E_{(z,y)\sim D}[z^k\mid y=-1]\;.
\end{equation}
Moreover, from Condition (ii) of \Cref{lem:polynomial}, we have
\begin{align} 
\E_{(z,y)\sim D}[g(z)z^k]=&\E_{(z,y)\sim D}[yz^k]\nonumber\\
=&\pr_{(z,y)\sim D}[y=1]\E_{(z,y)\sim D}[z^k\mid y=1]-\pr_{(z,y)\sim D}[y=-1]\E_{(z,y)\sim D}[z^k\mid y=-1] \nonumber\\
=& 0 \label{eqn:second}\; .
\end{align}
Taking the difference between \Cref{eqn:first} and \Cref{eqn:second} gives 
\[\E_{(z,y)\sim D}[z^k]=2\pr_{(z,y)\sim D}[y=-1]\E_{(z,y)\sim D}[z^k\mid y=-1]\; .\]
Plugging in that $\pr_{(z,y)\sim D}[y=-1]=1/2$, 
which follows from $\E_{(z,y)\sim D}[y]=0$ and $y$ is supported on $\{\pm 1\}$, 
we get
\[\E_{(z,y)\sim D}[z^k]=\E_{(z,y)\sim D}[z^k\mid y=-1]\; .\]
Then, we have
\[\E_{(z,y)\sim D}[z^k]=\E_{(z,y)\sim D}[z^k\mid y=1]\pr_{(z,y)\sim D}[y=1]+\E_{(z,y)\sim D}[z^k\mid y=-1]\pr_{(z,y)\sim D}[y=-1]\; ;\]
therefore,
\[
\E_{(z,y)\sim D}[z^k]=\E_{(z,y)\sim D}[z^k\mid y=-1]\; .
\]
Thus, Condition (iii) holds.

Now, we show the fourth condition.  We show the following claim.
\begin{claim} \label{clm:bounded-+-chi-squre}
It holds that
$\chi^2(D_+,\gaus_1)=O(1)\; .$
\end{claim}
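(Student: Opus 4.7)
The plan is to reduce the claim to a simple pointwise bound on the density ratio $D_+(z)/\gaus_1(z)$, using the explicit form of $D$ in terms of the function $g$ from \Cref{lem:polynomial}. I expect the argument to be essentially a one-line calculation once the density ratio is written down, so the ``main obstacle'' is really just bookkeeping rather than anything substantive.

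First, I would use the construction of $D$: the marginal on $z$ is $\gaus_1$ and $\E_{(z,y)\sim D}[y \mid z] = g(z)$, so the conditional probability of $y=1$ given $z$ is $(1+g(z))/2$. Combined with property (iii) of \Cref{lem:distribution} (which gives $\E_{(z,y)\sim D}[y]=0$ and hence $\pr_{(z,y)\sim D}[y=1]=1/2$), Bayes' rule yields
\[
D_+(z) \;=\; \frac{\pr_D[y=1 \mid z]\, \gaus_1(z)}{\pr_D[y=1]} \;=\; (1+g(z))\,\gaus_1(z)\;,
\]
so that the density ratio is $D_+(z)/\gaus_1(z) = 1+g(z)$.

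Next, I would plug this into the definition of $\chi$-squared divergence:
\[
\chi^2(D_+,\gaus_1) \;=\; \E_{z\sim\gaus_1}\!\left[\left(\frac{D_+(z)}{\gaus_1(z)}\right)^{\!2}\right] - 1 \;=\; \E_{z\sim\gaus_1}\bigl[(1+g(z))^2\bigr] - 1\;.
\]
Since $g:\R\to[-1,+1]$ by \Cref{lem:polynomial}, we have $(1+g(z))^2 \le 4$ pointwise, so $\chi^2(D_+,\gaus_1) \le 3 = O(1)$, which proves the claim. The analogous bound for $\chi^2(D_-,\gaus_1)$ in property (iv) of \Cref{lem:distribution} follows by the same calculation with $g$ replaced by $-g$, giving density ratio $1-g(z) \in [0,2]$. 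No deeper structural property of $g$ (e.g.\ moment matching or the threshold condition at $c$) is needed here; only its uniform boundedness enters.
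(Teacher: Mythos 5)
Your proof is correct and follows essentially the same route as the paper: both arguments reduce to the pointwise density-ratio bound $P_{D_+}(z)/P_{\gaus_1}(z)\leq 2$ (you get it as $1+g(z)\leq 2$ via Bayes' rule, the paper gets it from the mixture identity $P_{\gaus_1}\geq \pr[y=1]\,P_{D_+}$), after which the $O(1)$ bound on the $\chi^2$-divergence is immediate.
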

\begin{proof}
\begin{align*}
\chi^2(D_+,\gaus_1)
&=\int_{\R} \frac{P_{D_+}(z)^2}{P_{\gaus_1}(z)}dz-1\\
&=\int_{\R} P_{D_+}(z)\frac{P_{D_+}(z)}{P_{\gaus_1}(z)}dz-1\\
&\leq \int_{\R} P_{D_+}(z)/\pr_{(z,y)\sim D_f}[y=1]dz-1\\
&= \pr_{(z,y)\sim D}[y=1]^{-1}\int_{\R} P_{D_+}(z)dz-1\\
&=  \pr_{(z,y)\sim D}[y=1]^{-1}-1=O(1)\; ,  
\end{align*}
where the inequality follows from the fact that 
\[
P_{\gaus_1}(z)=P_{D_+}(z)\pr_{(z,y)\sim D}[y=1]+P_{D_-}(z)\pr_{(z,y)\sim D}[y=-1]\geq P_{D_+}(z)\pr_{(z,y)\sim D}[y=1]\; ,
\]
which implies $P_{D_+}(z)/P_{\gaus_1}(z)\leq 1/\pr_{(z,y)\sim D}[y=1]$.
The same holds for $\chi^2(D_-,\gaus_1)$.
This completes the proof of \Cref{clm:bounded-+-chi-squre}.
\end{proof}
The proof of $\chi^2(D_-,\gaus_{1})=O(1)$ is similar to \Cref{clm:bounded-+-chi-squre}.
This completes the proof of \Cref{lem:distribution}.

\subsection{Construction of the Alternative Hypothesis Distribution Set $\D$}

\begin{lemma} \label{lem:distribution-set}
For any sufficiently small $\alpha>0$, 
there exists a distribution family $\D=\{D_\bv:\bv\in V\}$ supported on $\R^d\times \{\pm 1\}$ where $|\D|=2^{d^{\Omega(1)}}$ satisfying the following:
\begin{enumerate}[leftmargin=*]
	\item [(i)] For any $D_\bv\in \D$ and $(\bx,y)\sim D_\bv$,
	the marginal $D_{\bx}=\gaus_d$;
	
	\item [(ii)] For any $D_\bv\in \D$ and $(\bx,y)\sim D_\bv$,
	$\E_{(\bx,y)\sim D_\bv}[y|\bx=\bx']=1$ for all $\bx'\in \R^d$ 
	such that $\langle \bv, \bx'\rangle \geq\Phi^{-1}(1-\alpha)$; 
	
	\item [(iii)] Let $D_\nul$ be the joint distribution of $(\bx,y)$ such that 
	$\bx\sim \gaus_d$ and $y=1$ with probability $1/2$ independent of $\bx$. 
	\new{
	Then for any $D_\bu,D_\bv\in \D$, $\chi_{D_\nul}(D_\bu,D_\bv)=O(1)$ if $\bu = \bv$ and 
	$\chi_{D_\nul}(D_\bu,D_\bv)=d^{-\Omega(\log{1/\alpha})}$ if $\bu \neq \bv$.}
\end{enumerate} 
\end{lemma}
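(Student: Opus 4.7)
My plan is to follow the standard ``hidden-direction'' construction from \cite{DKS17-sq,DKPZ21}, instantiated with the one-dimensional distribution $D$ of \Cref{lem:distribution} with parameter $n = \Theta(\log(1/\alpha))$ (chosen so that $3^{-2n}/4 = \alpha$, matching the threshold in property (ii)). For any unit vector $\bv \in \mathbb{S}^{d-1}$, I define $D_\bv$ to be the joint distribution on $\R^d \times \{\pm 1\}$ whose density factors as
\[
P_{D_\bv}(\bx, y) \;=\; P_D(\langle \bv,\bx\rangle, y)\,\cdot\,P_{\gaus_{d-1}}(\bx^{\perp\bv})\,,
\]
i.e.\ the pair $(\langle \bv,\bx\rangle, y)$ is distributed according to $D$, while the projection onto the orthogonal complement of $\bv$ is an independent standard Gaussian in $d-1$ dimensions. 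Property (i) then follows immediately because both $D$'s $z$-marginal and the orthogonal component are standard Gaussian; property (ii) is inherited directly from property (ii) of \Cref{lem:distribution}, applied to the hidden coordinate $z = \langle \bv,\bx\rangle$.

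The heart of the argument is (iii), the pairwise correlation bound. I would compute $\chi_{D_\nul}(D_\bu, D_\bv)$ via a Hermite-tensor expansion of the ``label-weighted'' densities. Writing $\widetilde{D}_\bv(\bx,y) := P_{D_\bv}(\bx,y) / P_{D_\nul}(\bx,y)$, one has
\[
\chi_{D_\nul}(D_\bu, D_\bv) + 1 \;=\; \sum_{y\in\{\pm 1\}} \E_{\bx \sim \gaus_d}\!\big[(2P_D(\langle \bu,\bx\rangle, y))\,(2P_D(\langle \bv,\bx\rangle, y))\big]\cdot \tfrac{1}{2}.
\]
Using the standard identity $\E_{\bx\sim\gaus_d}[h_k(\langle\bu,\bx\rangle)\,h_\ell(\langle\bv,\bx\rangle)] = \delta_{k\ell}\,\langle\bu,\bv\rangle^k$ for one-dimensional Hermite polynomials, the expansion reduces to a sum over the squared Hermite coefficients of the univariate functions $z \mapsto P_D(z, y)/P_{\gaus_1}(z)$, each weighted by $\langle \bu,\bv\rangle^k$. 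Crucially, property (iii) of \Cref{lem:distribution} (matching moments up to degree $n$ of $D$ conditional on either label with $\gaus_1$) kills all Hermite coefficients of degree $1,\dots,n$. Together with property (iv) ($\chi^2(D_\pm,\gaus_1) = O(1)$) bounding the sum of all squared Hermite coefficients, this gives
\[
|\chi_{D_\nul}(D_\bu, D_\bv)| \;\leq\; O(1)\cdot |\langle \bu,\bv\rangle|^{n+1}
\]
whenever $\bu \neq \bv$, and $\chi_{D_\nul}(D_\bv, D_\bv) = O(1)$ when $\bu = \bv$.

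Finally, to produce the set $V$, I would use the standard probabilistic packing lemma for the sphere (see, e.g., Lemma 3.7 of \cite{DKS17-sq}): there exists a set $V \subseteq \mathbb{S}^{d-1}$ with $|V| = 2^{d^{\Omega(1)}}$ such that $|\langle \bu,\bv\rangle| \leq d^{-1/2+o(1)}$ for all distinct $\bu,\bv \in V$. Combining this with the correlation bound and $n = \Theta(\log(1/\alpha))$, we get $|\chi_{D_\nul}(D_\bu, D_\bv)| \leq d^{-\Omega(\log(1/\alpha))}$, which is exactly property (iii).

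The main obstacle I anticipate is the Hermite expansion step: one must be careful that the correct Hermite tensor identity is used (labels are discrete, so the sum over $y$ must be handled correctly) and that the moment-matching property of \Cref{lem:distribution}(iii)---which is phrased conditionally on $y$---translates into the vanishing of the relevant univariate Hermite coefficients of $P_D(\cdot, y)/P_{\gaus_1}(\cdot)$ for each $y$. Once this step is carried out cleanly, the remaining arguments (packing on the sphere, bounding by $\chi^2(D_\pm, \gaus_1)$) are routine applications of tools already in the literature.
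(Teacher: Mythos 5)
Your proposal is correct and follows essentially the same route as the paper: the same hidden-direction embedding of the one-dimensional distribution from \Cref{lem:distribution}, the same near-orthogonal packing of $\mathbb{S}^{d-1}$, and the same correlation bound obtained by splitting over the label $y$, using moment matching to kill the Hermite components of degree $1,\dots,n$ and $\chi^2(D_\pm,\gaus_1)=O(1)$ to control the tail (the paper invokes the correlation lemma of \cite{DKPZ21} as a black box where you re-derive it from the Hermite product identity). The only blemish is that your displayed formula for $\chi_{D_\nul}(D_\bu,D_\bv)+1$ omits the division by $P_{\gaus_1}$ in the likelihood ratios, but your subsequent text works with the correct functions $P_D(\cdot,y)/P_{\gaus_1}(\cdot)$, so this is merely a typo.
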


Here,   we give two lemmas from \cite{DKPZ21} to construct
a large set of distributions on $\R^d\times \{\pm 1\}$ with small correlations.
For a matrix $\bA\in \R^{m\times n}$, we use $\|\bA\|_2$ to denote the spectral norm. 

\begin{lemma} [Correlation Lemma: Lemma 2.3 from \cite{DKPZ21}] \label{lem:correlation-lemma}
Let $g:\R^m\mapsto \R$ and $\vec U,\vec V\in \R^{m\times d}$ with $m\leq d$ be linear maps such that 
$\vec U\vec U^T=\vec V\vec V^T=\bI$ where $\bI$ is the $m\times m$ identity matrix. 
Then, we have that 
\[\E_{\x\sim \gaus_d}[g(\vec U\x)g(\vec V\x)]\leq \sum_{t=0}^{\infty}\|\vec U\vec V^T\|_2^t\E_{\x\sim \gaus_m}[(g^{[t]}(\x))^2] \;,\]
where $g^{[t]}$ denote the degree-$t$ Hermite part of $g$.
\end{lemma}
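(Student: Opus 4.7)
My plan is to reduce the desired bound to a spectral estimate for a Mehler-type integral operator by expanding $g$ into its Hermite components. First, I would decompose $g = \sum_{t \geq 0} g^{[t]}$ in the $L^2(\gaus_m)$-orthogonal decomposition into degree-$t$ Hermite parts, which yields
\[
\E_{\bx \sim \gaus_d}[g(\vec U \bx) g(\vec V \bx)] = \sum_{s,t \geq 0} \E_{\bx \sim \gaus_d}[g^{[s]}(\vec U \bx)\, g^{[t]}(\vec V \bx)].
\]
Since $\vec U \vec U^T = \vec V \vec V^T = \bI$, each linear map $\bx \mapsto \vec U \bx$ (resp.\ $\bx \mapsto \vec V \bx$) is a measure-preserving linear isometry from $L^2(\gaus_m)$ into $L^2(\gaus_d)$ that also preserves the Hermite-degree grading; the latter follows from the observation that for any $f$ in the degree-$t$ Hermite subspace of $L^2(\gaus_m)$ and any polynomial $r$ on $\R^d$ of degree $<t$, conditioning on $\bu \eqdef \vec U \bx$ reduces $\E_{\bx}[f(\vec U \bx) r(\bx)]$ to $\E_{\bu}[f(\bu)\tilde r(\bu)]$ for some polynomial $\tilde r$ of degree $<t$, which vanishes by orthogonality in $L^2(\gaus_m)$. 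Consequently all cross terms with $s \neq t$ vanish, and only the diagonal $s = t$ terms remain.

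Next, I would realize $(\vec U \bx, \vec V \bx)$ in distribution as a pair $(\bu, \bv)$ with $\bu \sim \gaus_m$ and $\bv = A^T \bu + \eta$, where $A \eqdef \vec U \vec V^T$ and $\eta \sim \gaus(\vec 0, \bI - A^T A)$ independently of $\bu$. This construction is well-defined because $\|A\|_2 \leq \|\vec U\|_2 \|\vec V^T\|_2 \leq 1$ (both factors have all singular values equal to $1$ by the orthonormality hypothesis). Introducing the Mehler-type conditional-expectation operator $T_A$ on $L^2(\gaus_m)$ by $(T_A f)(\bu) \eqdef \E_{\eta}[f(A^T \bu + \eta)]$, the $t$-th diagonal term becomes $\langle g^{[t]}, T_A g^{[t]} \rangle_{L^2(\gaus_m)}$.

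The heart of the argument, and the main obstacle, is to show that $T_A$ preserves each degree-$t$ Hermite subspace of $L^2(\gaus_m)$ and that its restriction there has operator norm at most $\|A\|_2^t$. The cleanest route is to take the SVD $A = \vec P \Sigma \vec Q^T$ and observe that, in the orthonormal basis of $L^2(\gaus_m)$ consisting of products of univariate normalized Hermite polynomials along the columns of $\vec P$ on the $\bu$-side (matched with $\vec Q$ on the $\bv$-side via the same singular vectors), $T_A$ acts diagonally with eigenvalue $\prod_i \sigma_i^{\alpha_i}$ on the basis element indexed by multi-index $\alpha$; on degree-$t$ vectors this eigenvalue is bounded by $\|A\|_2^t$. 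Combining with Cauchy--Schwarz gives
\[
\E_{\bx \sim \gaus_d}[g^{[t]}(\vec U \bx)\, g^{[t]}(\vec V \bx)] = \langle g^{[t]}, T_A g^{[t]}\rangle \leq \|\vec U \vec V^T\|_2^t \, \E_{\bx \sim \gaus_m}[(g^{[t]}(\bx))^2],
\]
and summing over $t \geq 0$ yields the lemma.

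The main technical obstacle is precisely the Mehler-style diagonalization of $T_A$ in the rotated product-Hermite basis: one must carefully pair the SVD factors $\vec P$ and $\vec Q$ with the $\bu$- and $\bv$-arguments and apply the classical one-dimensional Mehler identity coordinate by coordinate. Everything else---the Hermite expansion, the Gaussian-projection argument for vanishing of cross terms, and the Cauchy--Schwarz wrap-up---is standard Gaussian $L^2$ bookkeeping.
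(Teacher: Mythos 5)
The paper does not prove this lemma at all --- it is quoted verbatim as Lemma 2.3 of \cite{DKPZ21} --- so there is no in-paper proof to compare against. Your argument is a correct, self-contained proof along the standard lines (and essentially the route taken in the cited source): orthogonal decomposition into Hermite levels, vanishing of cross terms because composition with a row-orthonormal linear map preserves the Hermite grading, and a spectral bound for the Gaussian conditional-expectation operator $T_A$ with $A=\vec U\vec V^{T}$ via the Mehler identity applied coordinatewise after the SVD $A=\vec P\Sigma\vec Q^{T}$. The only imprecision is the phrase ``acts diagonally with eigenvalue $\prod_i\sigma_i^{\alpha_i}$'': unless $\vec P=\vec Q$, the operator $T_A$ is not diagonal in a single basis; rather it maps the $\vec Q$-rotated product-Hermite basis element $H_\alpha(\vec Q^{T}\cdot)$ to $\bigl(\prod_i\sigma_i^{\alpha_i}\bigr)H_\alpha(\vec P^{T}\cdot)$, i.e.\ this is a singular value decomposition of $T_A$ restricted to each degree level. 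Since both bases are orthonormal in $L^2(\gaus_m)$ and the degree-$t$ subspace is rotation-invariant, the restricted operator norm is still $\max_{|\alpha|=t}\prod_i\sigma_i^{\alpha_i}\leq\|\vec U\vec V^{T}\|_2^{t}$, so your Cauchy--Schwarz wrap-up and the final summation over $t$ go through unchanged.
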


Note that we will use \Cref{lem:correlation-lemma} with $m=1$ and $n=d$ so that the linear maps $\vec U$ and $\vec V$ become vectors.
We then introduce the following well-known fact, which states that there are exponentially many near-orthogonal vectors 
in $\R^d$.

\begin{fact} [Near-orthogonal Vectors: Fact 2.6 from \cite{DKPZ21}] \label{lem:near-orthogonal}
For any $0<c<1/2$, there exists a set $V\subseteq \mathbb{S}^{d-1}$ such that $|V|=2^{d^{\Omega(c)}}$ and for any $\bu,\bv\in V$,
$\langle \bu,\bv\rangle\leq d^{c-1/2}\;.$
\end{fact}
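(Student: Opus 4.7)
}
The plan is to use the probabilistic method: I will draw $N$ vectors i.i.d. from the uniform distribution on $\mathbb{S}^{d-1}$ and show that, with positive probability, every pairwise inner product has absolute value at most $d^{c-1/2}$. The choice $N = 2^{d^{\Omega(c)}}$ will fall out of a union bound against a sub-Gaussian tail estimate on $\langle \bu,\bv\rangle$.

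The first step is the concentration bound. By rotation invariance of the uniform measure on $\mathbb{S}^{d-1}$, for fixed $\bv \in \mathbb{S}^{d-1}$ and $\bu$ uniform on $\mathbb{S}^{d-1}$, the inner product $\langle \bu,\bv\rangle$ is distributed like the first coordinate of a uniform random point on $\mathbb{S}^{d-1}$. This coordinate can be realized as $g_1/\|\bg\|_2$ where $\bg \sim \gaus_d$, and a standard argument (e.g., using that $\|\bg\|_2 \geq \sqrt{d/2}$ with overwhelming probability together with the sub-Gaussian tail of $g_1$) gives
\[
\pr_{\bu\sim \mathbb{S}^{d-1}}\bigl[\,|\langle \bu,\bv\rangle| \geq t\,\bigr] \leq 2\exp(-\Omega(d t^2))
\]
for all $t \in (0,1/2]$. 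Setting $t = d^{c-1/2}$ yields $dt^2 = d^{2c}$, so a fixed pair of independent uniform points has inner product exceeding $d^{c-1/2}$ in absolute value with probability at most $2\exp(-\Omega(d^{2c}))$.

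The second step is the union bound. Draw $N$ independent uniform points $\bv_1,\ldots,\bv_N$ on $\mathbb{S}^{d-1}$. The expected number of ``bad'' pairs $(i,j)$ with $|\langle \bv_i,\bv_j\rangle| > d^{c-1/2}$ is at most $\binom{N}{2}\cdot 2\exp(-\Omega(d^{2c}))$. Choosing $N = 2^{\kappa d^{2c}}$ for a sufficiently small constant $\kappa > 0$ makes this quantity strictly less than $1$, so by the probabilistic method there exists a realization with no bad pairs. Since $d^{2c} = d^{\Omega(c)}$, this gives $|V| = N = 2^{d^{\Omega(c)}}$ as required, completing the proof.

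The argument is routine, so there is no genuine obstacle; the only point requiring a small amount of care is producing the sub-Gaussian tail on $\langle \bu,\bv\rangle$ for uniform points on $\mathbb{S}^{d-1}$, which one can either cite as a standard fact or derive in two lines via the Gaussian realization $\bu = \bg/\|\bg\|_2$ described above. If one prefers to avoid the absolute value (the statement only asks for $\langle \bu,\bv\rangle \leq d^{c-1/2}$ rather than $|\langle \bu,\bv\rangle| \leq d^{c-1/2}$), one can simply drop the factor of $2$, but keeping the two-sided bound is convenient since the constructed set $V$ is typically closed under negation in applications.
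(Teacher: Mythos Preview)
Your proposal is correct and is the standard probabilistic-method argument for this fact. The paper does not supply its own proof of \Cref{lem:near-orthogonal}; it simply cites it as Fact~2.6 from \cite{DKPZ21}, so there is nothing to compare against beyond noting that your argument is exactly the kind of proof that underlies the cited reference.
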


\begin{proof} [Proof of \Cref{lem:distribution-set}]
We define $\D$ in the following manner.
We let $D'$ be the distribution on $\R\times \{\pm 1\}$ in \Cref{lem:distribution} 
for $n=c\log(1/\alpha)$ where $c$ is a sufficiently small universal constant such that
$3^{-2n}/4\geq \alpha$.
We take $V$ to be the set of vectors in \Cref{lem:near-orthogonal} with $c=1/4$. 
Then we consider the set of distributions $\D=\{D_{\bv}:\bv\in V\}$
where $D_{\bv}$ is the unique distribution on $\R^d \times \{\pm 1\}$ 
generated in the following way.
We first sample $(z,y)\sim D'$ and $\bx_{\perp}\sim \gaus_{d-1}$.
Then let $\bB_{\perp}\in \R^{d\times (d-1)}$ be a matrix whose columns form an (arbitrary) 
orthonormal basis for the orthogonal complement of $\bv$.
Let $\bx=\bB_{\perp}\bx_{\perp}+z\bv$ and we define the distribution $D_\bv=D_{(\bx,y)}$. 
This effectively embeds the distribution $D'$ along the hidden direction $\bv$ in $D_\bv$.

To establish Condition (i) in this lemma, the definition of $\bx$ combined with the fact that 
$D_z=\gaus_1$ (from Condition (i) in \Cref{lem:distribution}) implies 
the marginal distribution $D_{\bx}=\gaus_d$.

Then, for Condition (ii) in this lemma,
we have that for any $\bx'\in \R^d$ such that 
$\langle \bv,\bx'\rangle\geq \Phi^{-1}(1-\alpha)$,
\[\E_{(\bx,y)\sim D_{\bv}}[y|\bx=\bx']=\E_{(z,y)\sim D'}[y=1|z=\langle \bv,\bx'\rangle ]\; ,\]
from the definition of $D_{\bv}$.
Since $3^{-2n}/4\geq \alpha$, we have $\langle \bv,\bx'\rangle\geq \Phi^{-1}(1-\alpha)\geq \Phi^{-1}(1-3^{-2n}/4)$.
Therefore, using the second condition in \Cref{lem:distribution}, we get
\[\E_{(\bx,y)\sim D_{\bv}}[y|\bx=\bx']=\E_{(z,y)\sim D'}[y|z=\langle \bv,\bx'\rangle ]=1\; .\]

For Condition (iii) in this lemma, 
notice 
that Condition (iii) in \Cref{lem:distribution} implies 
\[\pr_{(\bx,y)\sim D_\bv}[y=1]=\pr_{(\bx,y)\sim D_\bv}[y=-1]=1/2\; .\]
Then,
\begin{align*}
	\chi_{D_\nul}(D_\bv,D_\bv)
	=&\pr_{(\bx,y)\sim D_\bv}[y=1] \chi^2(D_\bv^+,\gaus_d)+\pr_{(\bx,y)\sim D_\bv}[y=-1] \chi^2(D_\bv^-,\gaus_d)\\
	=&\frac{1}{2} \chi^2(D_\bv^+,\gaus_d)+\frac{1}{2}\chi^2(D_\bv^-,\gaus_d)\; .\\
\end{align*}	
We show that the first term $\chi^2(D_\bv^+,\gaus_d)=O(1)$.
$\chi^2(D_\bv^-,\gaus_d)=O(1)$ can be shown in the exact same way.
Notice that 
\begin{align*}	
	\chi^2(D_\bv^+,\gaus_d)
	=&\int_{\R^d} \frac{P_{D_\bv^+}(\bu)^2}{P_{\gaus_d}(\bu)} d\bu-1\\
	=&\int_\R\int_{\R^{d-1}} \frac{P_{D_\bv^+}(\bB_\perp\bu+z\bv)^2}{P_{\gaus_d}(\bB_\perp\bu+z\bv)} d\bu dz-1\\
	=&\int_\R \int_{\R^{d-1}} \frac{P_{\gaus_{d-1}}(\bu)^2P_{D'^+}(z)^2}{P_{\gaus_{d-1}}(\bu)P_{\gaus_1}(z)} d\bu dz-1\\
	=&\int_{\R^{d-1}}P_{\gaus_{d-1}}(\bu) d\bu\int_\R \frac{P_{D'^+}(z)^2}{P_{\gaus_1}(z)} dz -1\\
	=&\chi^2(D'^+,\gaus_1)=O(1)    \; ,
\end{align*}
where the last equality follows from Condition (iv) of \Cref{lem:distribution}.
Therefore 
\[	\chi_{D_\nul}(D_\bv,D_\bv)=\frac{1}{2} \chi^2(D_\bv^+,\gaus_d)+\frac{1}{2}\chi^2(D_\bv^-,\gaus_d)=O(1)\; .\]
For $D_\bu,D_\bv\in \D$ where $\bu\neq \bv$,  
we have
\begin{align*}
	\chi_{D_\nul}(D_\bu,D_\bv)
	=&\pr[y=1]\chi_{\gaus_d}(D_\bu^+,D_\bv^+)
		+\pr[y=-1]\chi_{\gaus_d}(D_\bu^-,D_\bv^-)\\
	=&\frac{1}{2} \chi_{\gaus_d}(D_\bu^+,D_\bv^+)+\frac{1}{2}\chi_{\gaus_d}(D_\bu^-,D_\bv^-)\; .
\end{align*}
We show that $\chi_{\gaus_d}(D_\bu^+,D_\bv^+)=d^{-\Omega(\log \frac{1}{\alpha})}$.
$\chi_{\gaus_d}(D_\bu^-,D_\bv^-)=d^{-\Omega(\log \frac{1}{\alpha})}$ can be shown in the exact same way.
Notice that 
\begin{align*}
	\chi_{\gaus_d}(D_\bu^+,D_\bv^+)
	=&\int_{\R^d} \frac{P_{D_\bu^+}(\bw)P_{D_\bv^+}(\bw)}{P_{\gaus_d}(\bw)} d\bw-1\\
	=&\int_{\R^d} \frac{P_{D_\bu^+}(\proj_{\perp \bu}(\bw)+\proj_{\bu}(\bw))P_{D_\bv^+}(\proj_{\perp \bv}(\bw)+\proj_{\bv}(\bw))}{P_{\gaus_d}(\bw)} d\bw-1\\
	=&\int_{\R^d} \frac{P_{\gaus_{d-1}}(\bw^{\perp\bu})P_{{D'}^+}(\bw^{\bu})P_{\gaus_{d-1}}(\bw^{\perp\bv})P_{{D'}^+}(\bw^{\bv})}{P_{\gaus_d}(\bw)} d\bw-1\\
	=&\int_{\R^d} P_{\gaus_d}(\bw)\, \frac{P_{{D'}^+}(\bw^{\bu})}{P_{\gaus_1}(\bw^{\bu})}\, \frac{P_{{D'}^+}(\bw^{\bv})}{P_{\gaus_1}(\bw^{\bv})}d\bw-1\; .
\end{align*}
Let $g:\R\to \R$ be $g(z)\eqdef P_{{D'}^+}(z)/P_{\gaus_1}(z)$ and apply \Cref{lem:correlation-lemma} and Condition (iii) of \Cref{lem:distribution}, we get
\begin{align*}
	\chi_{\gaus_d}(D_\bu^+,D_\bv^+)
    \leq & \left (1+\sum_{t=n+1}^{\infty} d^{-\Omega(t)}\E_{z\sim \gaus_1}\left [\left(g^{[t]}(z)\right )^2\right ] \right )-1 \\
    \leq & d^{-\Omega(n)} \chi^2_{\gaus_1}({D'}^+,\gaus_1)\\
    =& d^{-\Omega(\log \frac{1}{\alpha})}\;,
\end{align*}
where the last equality follows form Condition (iv) of \Cref{lem:distribution}.
This completes the proof of Lemma \Cref{lem:distribution-set}.
\end{proof}

\subsection{Proof of \Cref{thm:sq-reliable}}
Let $\D$ be the set of distributions in \Cref{lem:distribution-set}. 
We also let $D_\nul$ be the joint distribution of $(\x,y)$ such that 
$\x\sim \gaus_d$ and $y\sim \mathrm{Bern}(1/2)$ independent of $\x$. 
We consider the decision problem of $\mathcal{B}(\D,D_\nul)$. 
Using \Cref{lem:sq-lb} with $\gamma'=d^{-\log(\frac{1}{\alpha})}$, 
we can see that any SQ algorithm that solves $\mathcal{B}(\D,D_\nul)$ requires either 
queries of tolerance at most $d^{-\Omega(\log\frac{1}{\alpha})}$ or 
makes at least $2^{d^{\Omega(1)}}$ queries.

We then reduce the decision problem $\mathcal{B}(\D,D_\nul)$ to reliably learning $\alpha$-biased LTFs with $\epsilon<\alpha/3$ accuracy.
Let $D$ be an instance of $\mathcal{B}(\D,D_\nul)$ which we need to decide either $D=D_\nul$ or $D\in \D$.
Let $A$ be an algorithm that reliably learns $\alpha$-biased LTFs with $\epsilon<\alpha/3$ accuracy and succeeds with 2/3 probability. 
We can simply give the i.i.d.\ samples from the distribution $D$ to algorithm $A$.
Suppose that $A$ succeeds and outputs a hypothesis function $h$.
If $D=D_\nul$, then since $A$ promises $R_+(h,D_\nul)\leq \eps$.
Since $D_\nul$ has $y\sim {\rm Bern}(1/2)$ independent of $\bx$, 
this implies $\pr_{(\bx,y)\sim D}[h(\bx)=1]\leq \eps$.

Then we argue that if we are in the alternative case, 
i.e., $D\in \D$, then given that the algorithm succeeds, 
we must have
$\pr_{(\bx,y)\sim D}[h(\bx)=1]\geq \alpha-\eps$ instead.
We assume $\pr_{(\bx,y)\sim D}[h(\bx)=1]<\alpha-\eps$ and prove a contradiction.
If $D=D_{\bv}\in \D$, then by the first property in \Cref{lem:distribution-set}, there must be an
$\alpha$-biased LTF $c$
such that $\pr_{(\bx,y)\sim D}[c(\bx)=1]\geq \alpha$ and $\pr_{(\bx,y)\sim D}[c(\bx)=1\land y=-1]=0$.
Combining this with the fact that $\pr_{(\bx,y)\sim D}[y=1]=1/2$, we have 
$R_-(c;D)=\pr[c(\bx)=-1\land y=1]\leq 1/2-\alpha$.
Therefore, the output hypothesis $h$ of $A$ has to satisfy
\[R_-(h;D)\leq 1/2-\alpha+\eps\; .\]
However, since $\pr_{(\bx,y)\sim D}[h(\bx)=1]<\alpha-\eps$,
we have 
\[R_-(h;D)=\pr_{(\bx,y)\sim D}[h(\bx)\neq 1\land y=1]
\geq \pr_{(\bx,y)\sim D}[y=1]-\pr_{(\bx,y)\sim D}[h(\bx)=1]
> 1/2-\alpha+\eps\, \]
which contradicts the above.
Thus, it has to be $\pr_{(\bx,y)\sim D}[h(\bx)=1]>\alpha-\eps$.

If $D=D_\nul$, then $\pr_{(\bx,y)\sim D}[h(\bx)=1]\leq \eps$. 
Otherwise, we have $\pr_{(\bx,y)\sim D}[h(\bx)=1]\geq \alpha-\eps$.
Since the gap between them is at least a constant $\alpha-2\eps\geq \alpha/3$ and 
$\pr_{(\bx,y)\sim D}[h(\bx)=1]$ can be estimated to inverse exponential accuracy on samples with high probability. 
Thus, we can distinguish the two cases of $\mathcal{B}(\D,D_\nul)$ with high probability.


\end{document}